\newsavebox{\imagebox}
\newtheorem{proposition}{Proposition}
\newtheorem{definition}{Definition}
\newtheorem{theorem}{Theorem}
\newtheorem{principle}{Principle}
\newcommand{\HV}{\textsc{HV}}
\newcommand{\HVI}{\textsc{HVI}}
\newcommand{\EHVI}{\textsc{EHVI}}
\newcommand{\HVIc}{\textsc{HVI}_\textsc{c}}
\newcommand{\qEHVI}{$q$\textsc{EHVI}}
\newcommand{\xcand}{\mathcal{X}_\text{cand}}
\newcommand{\aqEHVI}{\alpha_{q\textsc{EHVI}}}
\newcommand{\hataqEHVI}{\hat{\alpha}_{q\textsc{EHVI}}}
\newcommand{\aqEHVIc}{\alpha_{q\textsc{EHVI}_\textsc{c}}}
\newcommand{\qParego}{$q$\textsc{ParEGO}}
\DeclareMathOperator*{\argmax}{arg\,max}
\newcommand{\papertitle}{Differentiable Expected Hypervolume Improvement for Parallel Multi-Objective Bayesian Optimization}
\title{\papertitle}
\author{%
Samuel Daulton\\
  Facebook\\
  \texttt{sdaulton@fb.com} \\
  \And
  Maximilian Balandat\\
  Facebook\\
  \texttt{balandat@fb.com} \\
  \And
  Eytan Bakshy\\
  Facebook\\
  \texttt{ebakshy@fb.com} \\
}
\begin{document}

\maketitle

\begin{abstract}
In many real-world scenarios, decision makers seek to efficiently optimize multiple competing objectives in a sample-efficient fashion. Multi-objective Bayesian optimization (BO) is a common approach, but many of the best-performing acquisition functions do not have known analytic gradients and suffer from high computational overhead.  We leverage recent advances in programming models and hardware acceleration for multi-objective BO using Expected Hypervolume Improvement (\EHVI{})---an algorithm notorious for its high computational complexity. We derive a novel formulation of $q$-Expected Hypervolume Improvement (\qEHVI{}), an acquisition function that extends \EHVI{} to the parallel, constrained evaluation setting. \qEHVI{} is an exact computation of the joint \EHVI{} of $q$ new candidate points (up to Monte-Carlo (MC) integration error). Whereas previous \EHVI{} formulations rely on gradient-free acquisition optimization or approximated gradients, we compute exact gradients of the MC estimator via auto-differentiation, thereby enabling efficient and effective optimization using first-order and quasi-second-order methods. Our empirical evaluation demonstrates that \qEHVI{} is computationally tractable in many practical scenarios and outperforms state-of-the-art multi-objective BO algorithms at a fraction of their wall time.
\end{abstract}

\section{Introduction}
The problem of optimizing multiple competing objectives is ubiquitous in scientific and engineering applications. For example in automobile design, an automaker will want to maximize vehicle durability and occupant safety, while using lighter materials that afford increased fuel efficiency and lower manufacturing cost \citep{liao2008, vehicle2005}. Evaluating the crash safety of an automobile design experimentally is expensive due to both the manufacturing time and the destruction of a vehicle. 
In such a scenario, sample efficiency is paramount. 
For a different example, video streaming web services commonly use adaptive control policies to determine the bitrate as the stream progresses in real time \citep{Mao2019RealworldVA}. A decision maker may wish to optimize the control policy to maximize the quality of the video stream, while minimizing the stall time. Policy evaluation typically requires using the suggested policy on segments of live traffic, which is subject to opportunity costs.
If long evaluation times are the limiting factor, multiple designs may be evaluated in parallel to significantly decrease end-to-end optimization time. For example, an automaker could manufacture multiple vehicle designs in parallel or a web service could deploy several control policies to different segments of traffic at the same time.
\subsection{Background}
\textbf{Multi-Objective Optimization:} In this work, we address the problem of optimizing a \emph{vector-valued objective} $\bm f(\bm x) : R^d \rightarrow \mathbb R^M$ with $\bm f(\bm x) = \bigl(f^{(1)}(\bm x),..., f^{(M)}(\bm x) \bigr)$ over a bounded set $\mathcal X \subset \mathbb R^d$. We consider the scenario in which the $f^{(i)}$ are expensive-to-evaluate black-box functions with no known analytical expression, and no observed gradients. 
\emph{Multi-objective} (MO) optimization problems typically do not have a single best solution; rather, the goal is to identify the set of \emph{Pareto optimal} solutions such that any improvement in one objective means deteriorating another. Without loss of generality, we assume the goal is to maximize all objectives. We say a solution $\bm f(\bm x)$ \emph{Pareto dominates} another solution $\bm f(\bm x')$ if $f^{(m)}(\bm x) \geq f^{(m)}(\bm x') ~\forall~m = 1, \dotsc, M$ and there exists $m' \in \{1, \dotsc, M\}$ such that $f^{(m')}(\bm x) >  f^{(m')}(\bm x')$. We write $\bm f(\bm x) \succ \bm f(\bm x')$. 
Let $\mathcal P^* = \{\bm f(\bm x)~~s.t. ~~\nexists ~\bm x' \in \mathcal X ~:~ \bm f(\bm x') \succ \bm f(\bm x) \}$ and $\mathcal X^* = \{\bm{x} \in \mathcal X ~~s.t.~~ \bm f(\bm x) \in \mathcal P^*\}$ denote the set of Pareto optimal solutions and Pareto optimal inputs, respectively. Provided with the Pareto set, decision-makers can select a solution with an objective trade-off according to their preferences.

A common approach for solving MO problems is to use evolutionary algorithms (e.g. NSGA-II), which are robust multi-objective optimizers, but require a large number of function evaluations \citep{deb02nsgaii}. 
Bayesian optimization (BO) offers a far more sample-efficient alternative \citep{shahriari16}. 

\textbf{Bayesian Optimization:} BO \citep{jones98} is an established method for optimizing expensive-to-evaluate black-box functions. BO relies on a probabilistic \emph{surrogate model}, 
typically a Gaussian Process (GP) \citep{Rasmussen2004}, to provide a posterior distribution $\mathbb P(\bm f | \mathcal D)$ over the true function values $\bm f$ given the observed data $\mathcal D = \{(\bm x_i, \bm y_i)\}_{i=1}^n$. An \emph{acquisition function} $\alpha: \xcand \mapsto \mathbb{R}$ employs the surrogate model to assign a utility value to a set of candidates $\xcand = \{\bm x_i\}_{i=1}^q$ to be evaluated on the true function. While the true $\bm f$ may be expensive-to-evaluate, the surrogate-based acquisition function is not, and can thus be efficiently optimized to yield a set of candidates~$\xcand$ to be evaluated on $\bm f$. If gradients of $\alpha(\xcand)$ 
are available, gradient-based methods can be utilized. If not, gradients are either approximated (e.g. with finite differences) or gradient-free methods (e.g. DIRECT \citep{jones93} or CMA-ES \citep{cmaes}) are used.

\subsection{Limitations of current approaches} 
In the single-objective (SO) setting, a large body of work focuses on practical extensions to BO for supporting \emph{parallel} evaluation and outcome constraints \citep{marmin,Ginsbourger2010, wu16, gardner2014constrained, letham2019noisyei}. Less attention has been given to such extensions in the MO setting. Moreover, the existing constrained and parallel MO BO options have limitations: 1) many rely on scalarizations to transform the MO problem into a SO one~\citep{parego}; 2) many acquisition functions are computationally expensive to compute~\citep{sur, emmerich2011,belakaria2019,yang_multipoints2019}; 3) few have known analytical gradients or are differentiable \citep{emmerich_ehvi2011, wada2019bayesian, pesmo}; 4) many rely on heuristics to extend sequential algorithms to the parallel setting \citep{garridomerchn2020parallel, wada2019bayesian}.

A natural acquisition function for MO BO is Expected Hypervolume Improvement (\EHVI). Maximizing the hypervolume (\HV{}) has been shown to produce Pareto fronts with excellent coverage~\citep{zitzler03,Couckuyt14, yang2019}. However, there has been little work on \EHVI{} in the parallel setting, 
and the work that has been done resorts to approximate methods \citep{yang_multipoints2019, Gaudrie_2019, wada2019bayesian}. A vast body of literature has focused on  efficient \EHVI{} computation \citep{hupkens15, Emmerich2016, yang17}, but the time complexity for computing \EHVI{} is exponential in the number of objectives---in part due the hypervolume indicator itself incurring a time complexity that scales super-polynomially with the number of objectives \citep{yang_emmerich2019}. Our core insight is that by exploiting advances in auto-differentiation and highly parallelized hardware \citep{paszke2017automatic}, we can make \EHVI{} computations fast and practical.


\subsection{Contributions}
In this work, we derive a novel formulation of the parallel $q$-Expected Hypervolume Improvement acquisition function (\qEHVI{}) that is exact up to Monte-Carlo (MC) integration error. We compute the \emph{exact} gradient of the MC estimator of \qEHVI{} using auto-differentiation, which allows 
us to employ efficient and effective gradient-based optimization methods. Rather than using first-order gradient methods, we instead leverage the sample average approximation (SAA) approach from \citep{balandat2020botorch} to use higher-order deterministic optimization methods, and we prove theoretical convergence guarantees under the SAA approach. Our formulation of \qEHVI{} is embarrassingly parallel, and despite its computational cost would achieve constant time complexity given infinite processing cores. We demonstrate that, using modern GPU hardware and computing exact gradients, optimizing \qEHVI{} is faster than existing state-of-the art methods in many practical scenarios. Moreover, we extend \qEHVI{} to support auxiliary outcome constraints, making it practical in many real-world scenarios. Lastly, we demonstrate how modern auto-differentiation can be used to compute exact gradients of analytic \EHVI{}, which has never been done before for $M>2$ objectives. Our empirical evaluation shows that \qEHVI{} outperforms state-of-the-art multi-objective BO algorithms while using only a fraction of their wall time. 

\section{Related Work}
\label{sec:RelatedWork}
\citet{yang2019} is the only previous work to consider exact gradients of \EHVI{}, but the authors only derive an analytical gradient for the unconstrained two-objective, sequential optimization setting.
All other works either do not optimize \EHVI{} (e.g. they use it for pre-screening candidates \citep{emmerich2006}), optimize it with gradient-free methods \citep{yang_emmerich2019}, or using approximate gradients \citep{wada2019bayesian}. In contrast, we use exact gradients and demonstrate that optimizing \EHVI{} using this gradient information is far more efficient. 

There are many alternatives to \EHVI{} for MO BO. For example, ParEGO \citep{parego} and TS-TCH \citep{paria2018randscalar} randomly scalarize the objectives and use Expected Improvement \citep{jones98} and Thompson Sampling \citep{thompson}, respectively. SMS-EGO \citep{smsego} uses \HV{} in a UCB-based acquisition function and is more scalable than \EHVI{} \citep{rahat17}. ParEGO and SMS-EGO have only been considered for the $q=1$, unconstrained setting. Predictive entropy search for MO BO (PESMO) \citep{pesmo} has been shown to be another competitive alternative and has been extended to handle constraints \citep{garrido2019predictive} and parallel evaluations~\citep{garridomerchn2020parallel}. MO max-value entropy search (MO-MES) has been shown to achieve superior optimization performance and faster wall times than PESMO, but is limited to $q=1$. 

\citet{wilson2018maxbo} empirically and theoretically show that \emph{sequential greedy} selection of $q$ candidates achieves performance comparable to jointly optimizing $q$ candidates for many acquisition functions (including \citep{wang2016parallel, wu16}). The sequential greedy approach integrates over the posterior of the unobserved outcomes corresponding to the previously selected candidates in the $q$-batch. Sequential greedy optimization often yields better empirical results because the optimization problem has a lower dimension: $d$ in each step, rather than $q d$ in the joint problem. Most prior works in the MO setting use a sequential greedy approximation or heuristics \citep{wada2019bayesian, yang_multipoints2019, Gaudrie_2019, chaudhuri}, but impute the unobserved outcomes with the posterior mean rather than integrating over the posterior \citep{Ginsbourger2010}. For many joint acquisition functions involving expectations, this shortcut sacrifices the theoretical error bound on the sequential greedy approximation because the exact joint acquisition function over $\bm x_1, ..., \bm x_i,~ 1\leq i \leq q$ requires integration over the joint posterior $\mathbb P(\bm f(\bm x_1), ..., \bm f(\bm x_q) | \mathcal D)$ and is not computed for $i>1$. 
 
\citet{garridomerchn2020parallel} and \citet{wada2019bayesian} jointly optimize the $q$ candidates and, noting the difficulty of the optimization, both papers focus on deriving gradients to aid in the optimization.
\citet{wada2019bayesian} defined the \qEHVI{} acquisition function, but after finding it challenging to optimize $q$ candidates jointly (without exact gradients), the authors propose optimizing an alternative acquisition function instead of exact \qEHVI{}. In contrast, our novel \qEHVI{} formulation allows for gradient-based parallel and sequential greedy optimization, with proper integration over the posterior for the latter.

\citet{Feliot_2016} and \citet{ehvic} proposed extensions of \EHVI{} to the constrained $q=1$ setting, but neither considers the batch setting and both rely on gradient-free  optimization.
\section{Differentiable $q$-Expected Hypervolume Improvement}
\label{sec:DqEHVI}
In this section, we review HVI and EHVI computation by means of box decompositions, and explain our novel formulation for the parallel setting.
\begin{definition}
Given a reference point $\bm r \in \mathbb R^M$, the hypervolume indicator (\HV{}) of a finite approximate Pareto set $\mathcal P$ is the $M$-dimensional Lebesgue measure $\lambda_M$ of the space dominated by $\mathcal P$ and bounded from below by $\bm r$: $\HV(\mathcal P, \bm r) = \lambda_M \big(\bigcup_{i=1}^{\vert \mathcal P\vert} [\bm r, \bm y_i]\big)$, where $[\bm r, \bm y_i]$ denotes the hyper-rectangle bounded by vertices $\bm r$ and $\bm y_i$.
\end{definition}
\begin{definition}
Given a Pareto set $\mathcal P$ and reference point $\bm r$, the hypervolume improvement (\HVI{}) of a set of points $\mathcal Y$ is: $\HVI{}(\mathcal Y, \mathcal P, \bm r) = \HV{}(\mathcal P \cup \mathcal Y, \bm r) -   \HV{}(\mathcal P, \bm r)$.\footnote{In this work, we omit the arguments $\mathcal P$ and $\bm r$ when referring to \HVI{} for brevity.}
\end{definition}
\EHVI{} is the expectation of \HVI{} over the posterior $\mathbb P(\bm f, \mathcal D)$: $\alpha_{\EHVI{}}(\xcand) = \mathbb E\big[\HVI{}(\bm f(\xcand))\big]$. In the sequential setting, and assuming the objectives are independent and modeled with independent GPs, \EHVI{} can be expressed in closed form \citep{yang2019}. In other settings, \EHVI{} can be approximated with MC integration. Following previous work, we assume that the reference point is known and specified by the decision maker \citep{yang2019} (see Appendix \ref{appdx:sec:RefPoint} for additional discussion).

\subsection{A review of hypervolume improvement computation using box decompositions}
\begin{definition}\label{def:delta} For a set of objective vectors $\{\bm f(\bm x_i)\}_{i=1}^q$, a reference point $\bm r \in \mathbb R^M$, and a non-dominated set $\mathcal P$, let $\Delta(\{\bm f(\bm x_i)\}_{i=1}^q, \mathcal P, \bm r) \subset \mathbb{R}^M$ denote the set of points (i) are dominated by  $\{\bm f(\bm x_i)\}_{i=1}^q$, dominate $\bm r$, and are not dominated by $\mathcal P$.
\end{definition}
\vspace{-1ex}

Given $\mathcal{P}, \bm r$, the \HVI{} of a new point $\bm f(\bm x)$ is the \HV{} of the intersection of space dominated by $\mathcal P \cup \{\bm f(\bm x)\}$ and the non-dominated space. Figure \ref{fig:HI_b} illustrates this for one new point $\bm f(\bm x)$ for $M=2$. The yellow region is $\Delta(\{\bm f(\bm x)\}, \mathcal P, \bm r)$ and the hypervolume improvement is the volume covered by $\Delta(\{\bm f(\bm x)\}, \mathcal P, \bm r)$. Since $\Delta(\{\bm f(\bm x)\}, \mathcal P, \bm r)$ is often a non-rectangular polytope, \HVI{} is typically computed by partitioning the non-dominated space into disjoint axis-parallel rectangles \citep{Couckuyt14, yang_emmerich2019} (see Figure \ref{fig:HI_a}) and using piece-wise integration \citep{emmerich2006}.

Let $\{S_k\}_{k=1}^{K}$ be a partitioning the of non-dominated space into disjoint hyper-rectangles, where each $S_k$ is defined by a pair of lower and upper vertices $\bm l_k \in \mathbb{R}^M$ and $\bm u_k \in \mathbb{R}^M \cup \{\bm\infty\}$. The high level idea is to sum the HV of $S_k \cap \Delta(\{\bm f(\bm x)\}, \mathcal P, \bm r)$ over all $S_k$. For each hyper-rectangle $S_k$, the intersection of $S_k$ and $\Delta(\{\bm f(\bm x)\}, \mathcal P, \bm r)$ is a hyper-rectangle where the lower bound vertex is $\bm l_k$ and the upper bound vertex is the component-wise minimum of $\bm u_k$ and the new point $\bm f(\bm x)$: $\bm z_k := \min \big[\bm u_k,\bm f(\bm x)\big]$.\\[-3.5ex]


\begin{figure}[ht]
    \centering
    \begin{subfigure}{.33\textwidth}
        \centering
        \includegraphics[width=0.9\linewidth]{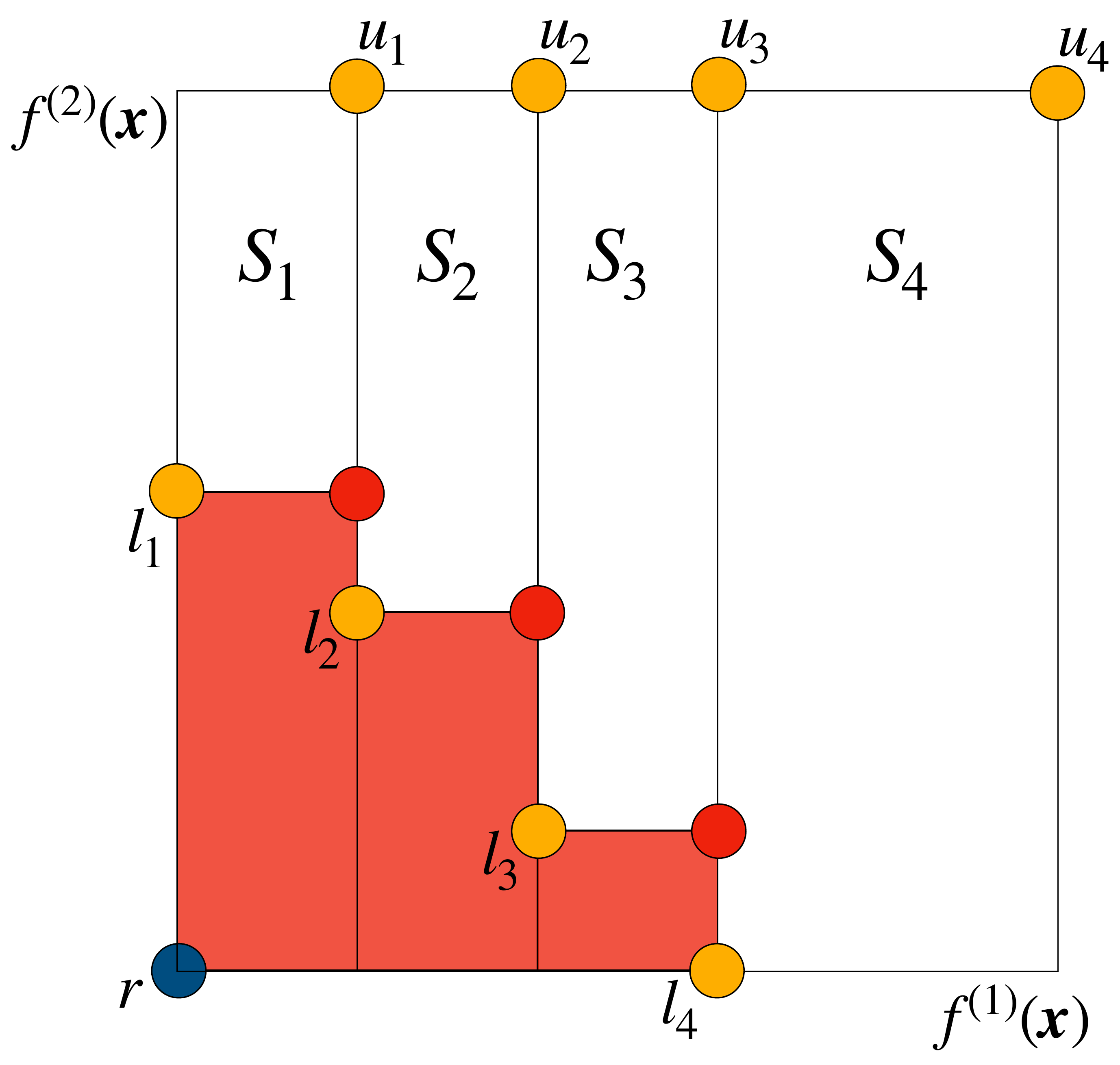}\vspace{-2ex}
        \subcaption{\label{fig:HI_a}}
    \end{subfigure} %
    \begin{subfigure}{.33\textwidth}    
        \centering
        \includegraphics[width=0.9\linewidth]{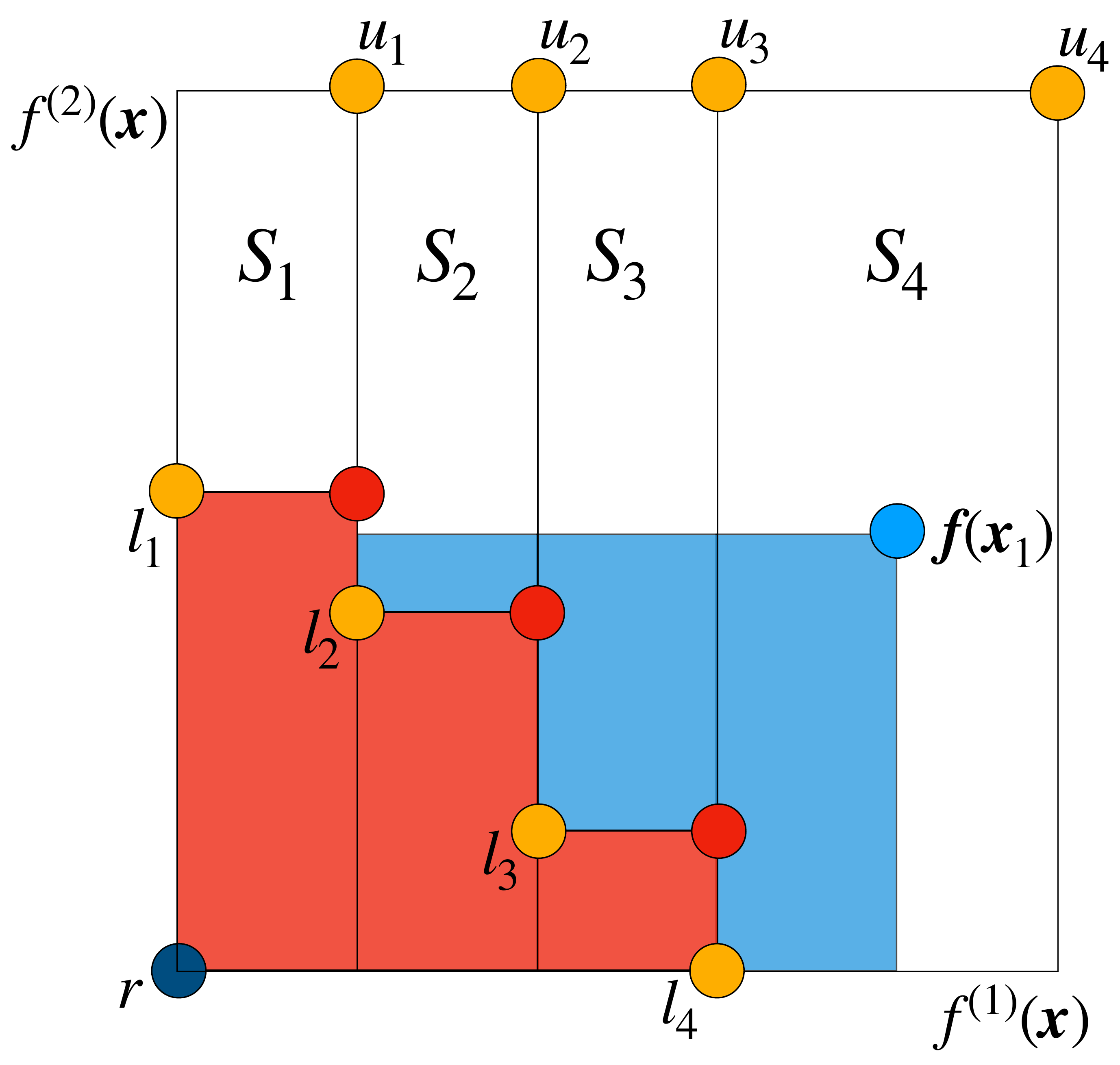}\vspace{-2ex}
        \subcaption{\label{fig:HI_b}}
    \end{subfigure}%
    \begin{subfigure}{.33\textwidth}    
        \centering
        \includegraphics[width=0.9\linewidth]{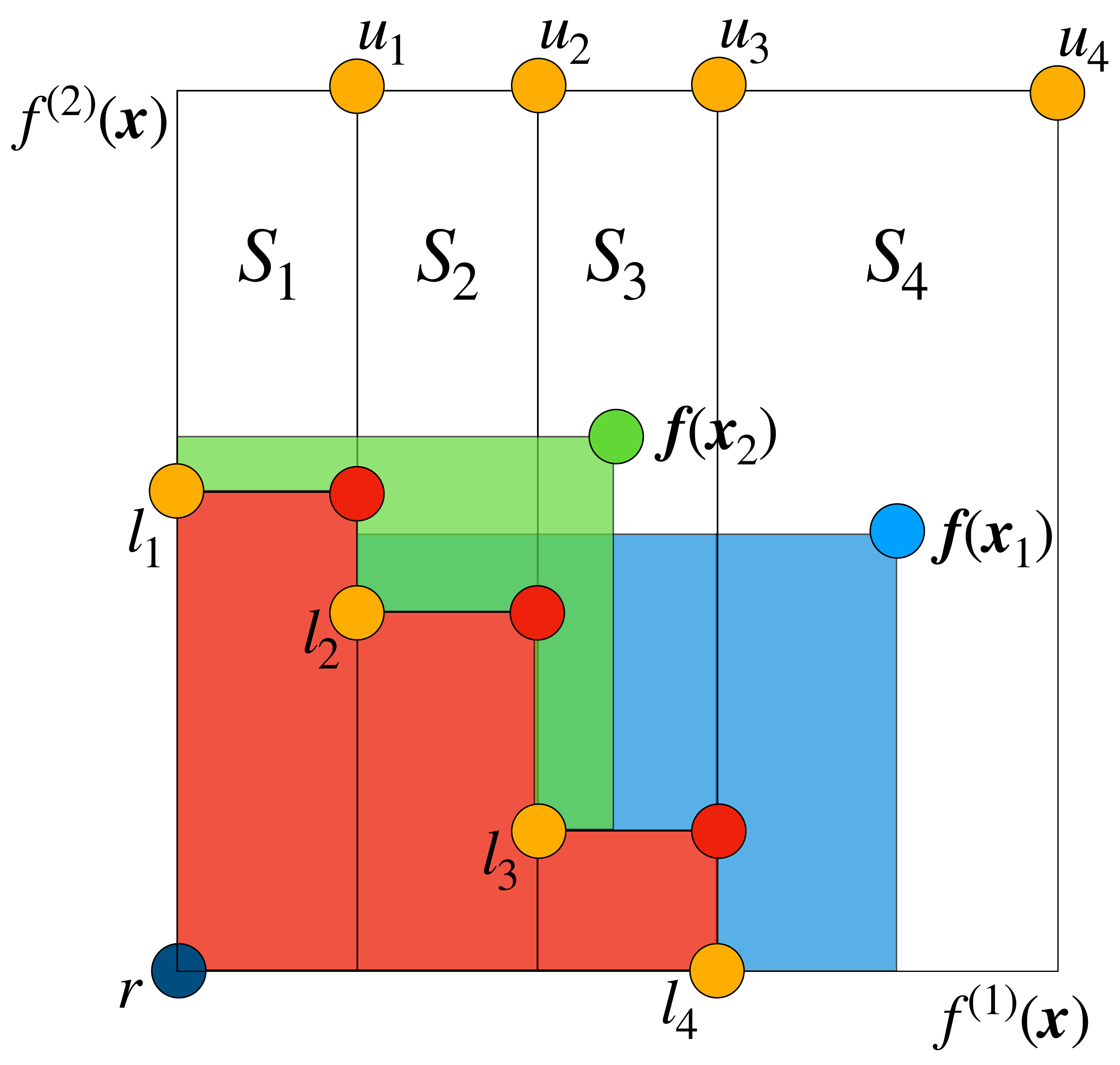}\vspace{-2ex}
        \subcaption{\label{fig:HI_c}}
    \end{subfigure}
    \caption{For M=2, (a) the dominated space (red) and the non-dominated space partitioned into disjoint boxes (white), (b) the \HVI{} of one new point $\bm f(\bm x)$, and (c) the \HVI{} of two new points $\bm f(\bm x_1), \bm f(\bm x_2)$.}
\end{figure}
\vspace{-1.5ex}

%
Hence, the \HVI{} of a single outcome vector $\bm{f}(\bm{x})$ within 
$S_k$ is given by 
$\HVI{}_k\bigl(\bm{f}(\bm{x}), \bm l_k, \bm u_k\bigr) = \lambda_M\bigl(S_k \cap \Delta(\{\bm f(\bm x)\}, \mathcal P, \bm r)\bigr) = \prod_{m=1}^M \bigl[z_k^{(m)} - l_k^{(m)}\bigl]_{+}$, 
where $u_k^{(m)}, l_k^{(m)}, f^{(m)}(\bm x)$, and $z_k^{(m)}$ denote the $m^{\text{th}}$ component of the corresponding vector and $[\cdot]_{+}$ denotes the $\min(\cdot, 0)$ operation.
Summing over rectangles yields\\[-3ex]
\begin{align}
\label{eq:DqEHVI:HVI}
    \HVI{}\big(\bm{f}(\bm{x})\big) =\sum_{k=1}^{K}\HVI{}_k\big(\bm{f}(\bm{x}), \bm l_k, \bm u_k\big)
    =\sum_{k=1}^{K}\prod_{m=1}^M \big[z_k^{(m)} - l_k^{(m)}\big]_{+}
\end{align}
\subsection{Computing $q$-Hypervolume Improvement via the Inclusion-Exclusion Principle}
\label{subsec:DqEHVI:qHVI}

Figure \ref{fig:HI_c} illustrates the \HVI{} in the $q=2$ setting. 
Given $q$ new points$\{\bm f(\bm x_i)\}_{i=1}^q$, let $A_i:= \Delta(\{\bm f(\bm x_i)\}, \mathcal P, \bm r)$ for $i=1, \dots, q$ be the space dominated by $\bm f(\bm x_i)$ but not dominated by $ \mathcal P$, independently of the other $q-1$ points. Note that $\lambda_M (A_i) = \HVI{}(\bm f(\bm x_i))$. The union of the subsets $A_i$ is the space dominated jointly by the $q$ new points: $\bigcup_{i=1}^q A_i = \bigcup_{i=1}^q \Delta(\{\bm f(\bm x_i)\}, \mathcal P, \bm r)$, and the Lebesgue measure $\lambda_M\big(\bigcup_{i=1}^q A_i\big)$ is the joint \HVI{} from the $q$ new points. Since each subspace $A_i$ is bounded, the restricted Lebesgue measure is finite and we may compute $\lambda_M\big(\bigcup_{i=1}^q A_i\big)$ using the inclusion-exclusion principle \citep{silva1854, sylvester1883}:\\[-2ex]
\begin{align}
\HVI{}(\{\bm f(\bm x_i)\}_{i=1}^q) 
= \lambda_M \bigg( \bigcup_{i=1}^q A_i \bigg)
= \sum_{j=1}^q (-1)^{j+1}\sum_{1 \leq i_1 \le \ldots \le i_j \leq q} \lambda_M\big( A_{i_1} \cap \dots \cap A_{i_j} \big)
\end{align}
Since $\{S_k\}_{k=1}^K$ is a disjoint partition,
$\lambda_M( A_{i_1} \cap \dots \cap A_{i_j}) = \sum_{k=1}^{K} \lambda_M(  S_k \cap A_{i_1} \cap \dots \cap A_{i_j} )$, we can compute $\lambda_M ( A_{i_1} \cap \dots \cap A_{i_j} )$ in a piece-wise fashion
across the $K$ hyper-rectangles $\{S_k\}_{k=1}^K$ as the \HV{} of the intersection of $A_{i_1} \cap \dots \cap A_{i_j}$ with each hyper-rectangle $S_k$. The inclusion-exclusion principle has been proposed for computing \HV{} (not \HVI{}) \citep{lopez15}, but it is rarely used because complexity scales exponentially with the number of elements. However, the inclusion-exclusion principle is practical for computing the joint \HVI{} of $q$ points since typically $q << |\mathcal P|$. 

This formulation has three advantages. First, while the new dominated space $A_i$ can be a non-rectangular polytope, the intersection $A_i \cap S_k$ is a \emph{rectangular} polytope, which simplifies computation of overlapping hypervolume. Second, the vertices defining the hyper-rectangle $S_k \cap A_{i_1} \cap \dots \cap A_{i_j} $ are easily derived. The lower bound is simply the $\bm l_k$ lower bound of $S_k$, and the upper bound is the component-wise minimum $\bm z_{k, i_1, ... i_j} := \min \big[\bm u_k,\bm f(\bm x_{i_1}), \ldots, \bm f(\bm x_{i_j})\big]$.
Third, computation can be across all intersections of subsets $A_{i_1} \cap \dots \cap A_{i_j}$ for $1\leq i_j \leq \ldots \leq i_j \leq q$ and across all $K$ hyper-rectangles can be performed in parallel. Explicitly, the \HVI{} is computed as:\\[-3ex]
\begin{align}
\label{eq:qEHVI:explicit}
    \HVI{}(\{\bm f(\bm x_i)\}_{i=1}^q)
    =\sum_{k=1}^{K}\sum_{j=1}^q \sum_{X_j \in \mathcal X_j} (-1)^{j+1} \prod_{m=1}^M\big[z_{k, X_j}^{(m)} - l_k^{(m)}\big]_{+}
\end{align}
where $\mathcal X_j := \{X_j \subset \xcand : \vert X_j \vert = j\}$ is the superset of all subsets of $\xcand$ of size $j$, and $z_{k, X_j}^{(m)} := z_{k, i_1, ... i_j}^{(m)}$ for $X_j = \{\bm x_{i_1}, ..., \bm x_{i_j}\}$. See Appendix~\ref{appdx:sec:qEHVIDerivation} for further details of the derivation.

\subsection{Computing \emph{Expected} $q$-Hypervolume Improvement}
\label{subsec:DqEHVI:ComputeEHVI}

The above approach for computing \HVI{} assumes that we know the true objective values $\bm f(\xcand) = \{\bm f(\bm x_i)\}_{i=1}^q$. In BO, we instead compute \qEHVI{} as the expectation over the posterior model posterior:\\[-4ex] 
\begin{align}
\label{eqn:DqEHVI:ComputeEHVI:qehvi:exp}
\aqEHVI(\xcand) = \mathbb{E}\Bigl[\HVI{}(\bm f(\xcand))\Bigr]  =\int_{-\infty}^\infty \HVI{}(\bm f(\xcand)) d\bm{f}.
\end{align}
Since no known analytical form is known \citep{yang_palar19} for $q>1$ (or in the case of correlated outcomes), we estimate~\eqref{eqn:DqEHVI:ComputeEHVI:qehvi:exp} using MC integration with samples from the joint posterior $\{\bm{f}_t(\bm x_i)\}_{i=1}^q \sim \mathbb P \big(\bm f(\bm x_1),..., \bm f(\bm x_q) | \mathcal D\big), t=1, \ldots N$. Let $\bm z_{k, X_j, t}^{(m)} := \min \big[\bm u_k,\min_{\bm x' \in X_j}\bm f_t(\bm x')\big]$. Then,\\[-2ex]
\begin{equation}
\label{eqn:DqEHVI:ComputeEHVI:qehvi}
\hataqEHVI^N(\xcand) 
= \frac{1}{N} \! \sum_{t=1}^{N} \HVI{}(\bm f_t(\xcand)) =\frac{1}{N}\sum_{t=1}^{N} \sum_{k=1}^{K}\sum_{j=1}^q\sum_{X_j \in \mathcal X_j} \!\!\! (-1)^{j+1}\! \prod_{m=1}^M \!\!  \bigl[\bm z_{k, X_j, t}^{(m)} - l_k^{(m)}\bigr]_{\!+}
\end{equation}
\vspace{-1ex}


Provided that $\{S_k\}_{k=1}^K$ is an exact partitioning, \eqref{eqn:DqEHVI:ComputeEHVI:qehvi} is an \emph{exact} computation of \qEHVI{} up to the MC estimation error, which scales as $1/\sqrt{N}$ when using $iid$ MC samples regardless of the dimension of the search space \citep{emmerich2006}. In practice, we use randomized quasi MC methods \citep{caflisch1998monte} to reduce the variance and empirically observe low estimation error (see Figure~\ref{fig:mc_acqf} in the Appendix for a comparison of analytic \EHVI{} and (quasi-)MC-based \qEHVI{}).

\label{subsec:DqEHVI:complexity}
\qEHVI{} requires computing the volume of $2^q -1$ hyper-rectangles (the number of subsets of q) for each of $K$ hyper-rectangles and $N$ MC samples. Given posterior samples, the time complexity on a single-threaded machine is: $T_1 = O(MNK(2^q-1))$. In the two-objective case, $K = | \mathcal P| + 1$, but $K$ is super-polynomial in $M$ \citep{yang_emmerich2019}. The number of boxes required for a decomposition of the non-dominated space is unknown for $M\geq4$~\citep{yang_emmerich2019}. \qEHVI{} is agnostic to the partitioning algorithm used, and in \ref{appdx:subsec:approx_decomp}, we demonstrate using \qEHVI{} in higher-dimensional objective spaces using an approximate box decomposition algorithm \citep{couckuyt12}.
%
Despite the daunting workload, the critical work path---the time complexity of the smallest non-parallelizable unit---is constant: $T_\infty = O(1)$.\footnote{As evident from~\eqref{eqn:DqEHVI:ComputeEHVI:qehvi}, the critical path consists of 3 multiplications and 5 summations.} On highly-threaded many-core hardware (e.g. GPUs), our formulation achieves tractable wall times in many practical scenarios: as is shown in Figure \ref{fig:qehi_comp_time} in the Appendix, the computation time is nearly constant with increasing $q$ until an inflection point at which the workload saturates the available cores. 
For additional discussion of both time and memory complexity of \qEHVI{} see Appendix~\ref{appdx:subsec:DqEHVI:complexity}.

\subsection{Outcome Constraints}
\label{subsec:DqEHVI:OutcomeConstraints}

Our proposed \qEHVI{} acquisition function is easily extended to constraints on auxiliary outcomes. We consider the scenario where we receive observations of $M$ objectives $\bm f(\bm x) \in \mathbb{R}^M$ and $V$ constraints $\bm c^{(v)} \in \mathbb{R}^V$, all of which are assumed to be ``black-box''. We assume w.l.o.g. that $\bm c^{(v)}$ is feasible iff $\bm c^{(v)} \geq 0$. In the constrained optimization setting, we aim to identify the feasible Pareto set:
$\mathcal P_\text{feas} = \{\bm f(\bm x)~~s.t.~~ \bm c(\bm x) \geq \bm 0, ~\nexists ~~\bm x' :  \bm c(\bm x') \geq \bm 0, ~\bm f(\bm x') \succ \bm f(\bm x)\}$.
The natural improvement measure in the constrained setting is \emph{feasible} HVI, which we define for a single candidate point $\bm x$ as 
$
    \HVIc{}(\bm f(\bm x), \bm c(\bm x)) := \HVI{}[\bm f(\bm x)] \cdot \mathbbm{1}[\bm c(\bm x) \geq \bm 0]
$.
Taking expectations, the constrained expected HV can be seen to be the HV weighted by the probability of feasibility. In Appendix~\ref{appdx:subsec:DqEHVI:OutcomeConstraints}, we detail how performing feasibility-weighting on the sample-level allows us to include such auxiliary outcome constraints into our MC formulation in a straightforward way.

\section{Optimizing $q$-Expected Hypervolume Improvement}
\label{sec:optimizing}

\subsection{Differentiability} 
\label{subsec:optimizing:diffability}While an analytic formula for the gradient of \EHVI{} exists for the $M=2$ objective case in the unconstrained, sequential ($q=1$) setting, no such formula is known in 1) the case of $M>2$ objectives, 2) the constrained setting, and 3) for $q>1$. Leveraging the re-parameterization trick \citep{kingma2013reparam,wilson2017reparamacq} and auto-differentiation, we are able to automatically compute exact gradients of the MC-estimator \qEHVI{} in \emph{all} of the above settings, as well as the gradient of analytic \EHVI{} for $M\geq2$ (see Figure \ref{fig:mc_gradient} in the Appendix for a comparison of the exact gradients of \EHVI{} and the sample average gradients of \qEHVI{} for $M=3$).\footnote{Technically, $\min$ and $\max$ are only sub-differentiable, but  are known to be well-behaved \citep{wilson2017reparamacq}. In our MC setting with GP posteriors, \qEHVI{} is differentiable w.p. 1 if $\bm x$ contains no repeated points.}\footnote{For the constrained case, we replace the indicator with a differentiable sigmoid approximation.} 
%

\subsection{Optimization via Sample Average Approximation} \label{subsec:optimizing:saa}We show in Appendix~\ref{appdx:sec:Convergence} that if mean and covariance function of the GP are sufficiently regular, the gradient of the MC estimator~\eqref{eqn:DqEHVI:ComputeEHVI:qehvi} is an unbiased estimate of the gradient of the exact acquisition function~\eqref{eqn:DqEHVI:ComputeEHVI:qehvi:exp}.
To maximize \qEHVI{}, we could therefore directly apply stochastic optimization methods, as has previously been done for single-outcome acquisition functions~\citep{wilson2017reparamacq,wu16}. Instead, we opt to use the sample average approximation (SAA) approach from \citet{balandat2020botorch}, which allows us to employ deterministic, higher-order optimizers to achieve faster convergence rates. Informally (see Appendix~\ref{appdx:sec:Convergence} for the formal statement), if $\bm \hat{x}_N^* \in \argmax_{\bm x \in \mathcal X}\hataqEHVI^N(\bm x)$, we can show under some regularity conditions that, as $N\rightarrow \infty$, (i) $\hataqEHVI^N(\bm \hat{x}_N^*) \rightarrow \max_{x\in \mathcal X} \aqEHVI(\bm x)~~a.s.$, and (ii) $\text{dist}\bigl(\bm \hat{x}_N^*, \argmax_{\bm x \in \mathcal{X}}\aqEHVI(\bm x)\bigl) \rightarrow 0~~a.s.$. These results hold for any covariance function satisfying the regularity conditions, including such ones that model correlation between outcomes. In particular, our results do not require the outputs to be modeled by independent GPs.

Figure~\ref{fig:optim_comparison} demonstrates the importance of using exact gradients for efficiently and effectively optimizing \EHVI{} and \qEHVI{} by comparing the following optimization methods: L-BFGS-B with exact gradients, L-BFGS-B with gradients approximated via finite differences, and CMA-ES (without gradients). The cumulative time spent optimizing the acquisition function is an order of magnitude less when using exact gradients rather than approximate gradients or zeroth order methods.\\[-4ex]%
\begin{figure}[ht]
    \centering
    \begin{subfigure}{.49\textwidth}
        \centering
        \includegraphics[width=\linewidth]{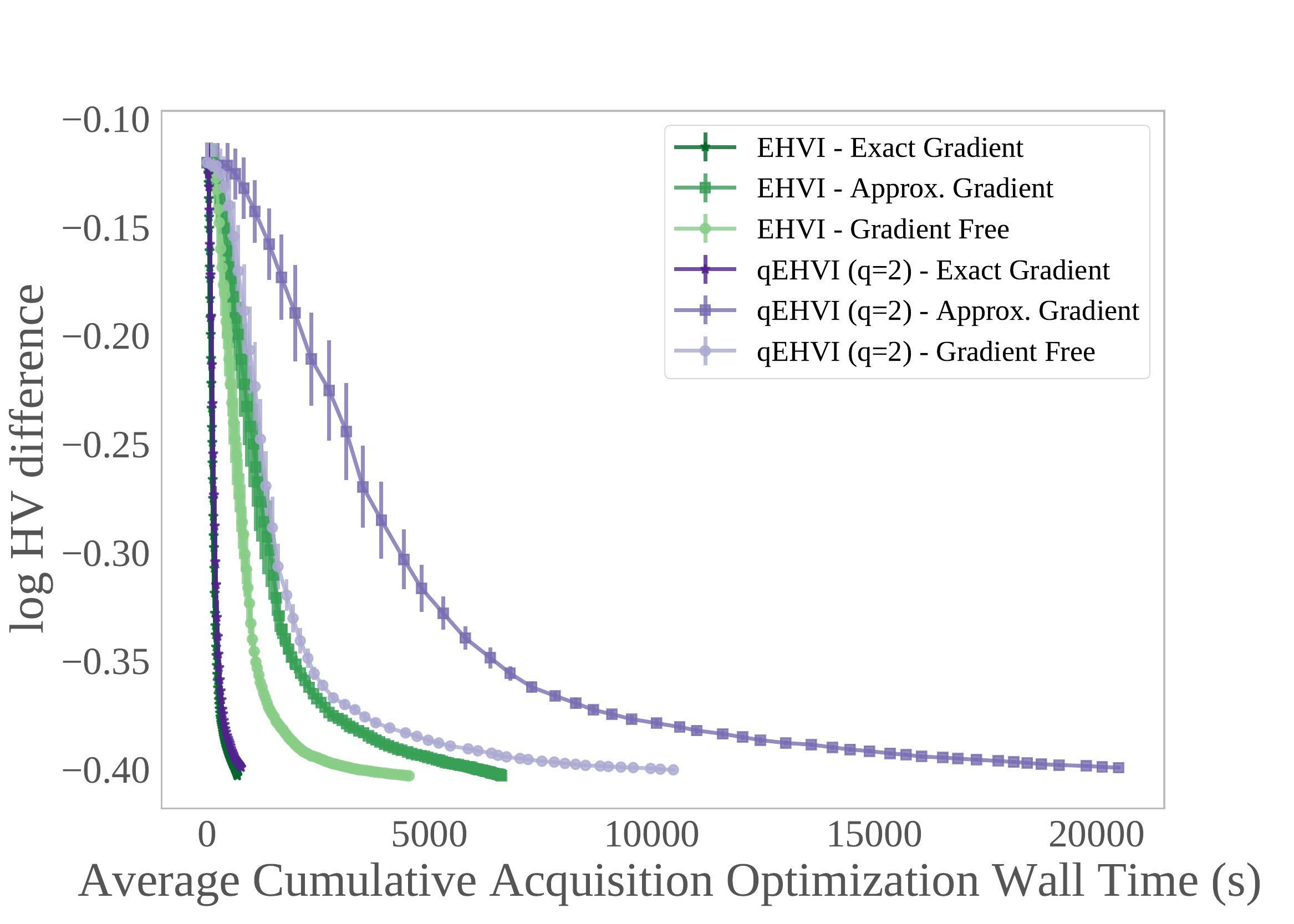}
        \vspace{-3ex}
        \subcaption{\label{fig:optim_comparison}}
        \end{subfigure} %
    \begin{subfigure}{.49\textwidth} 
    \centering
        \includegraphics[width=\linewidth]{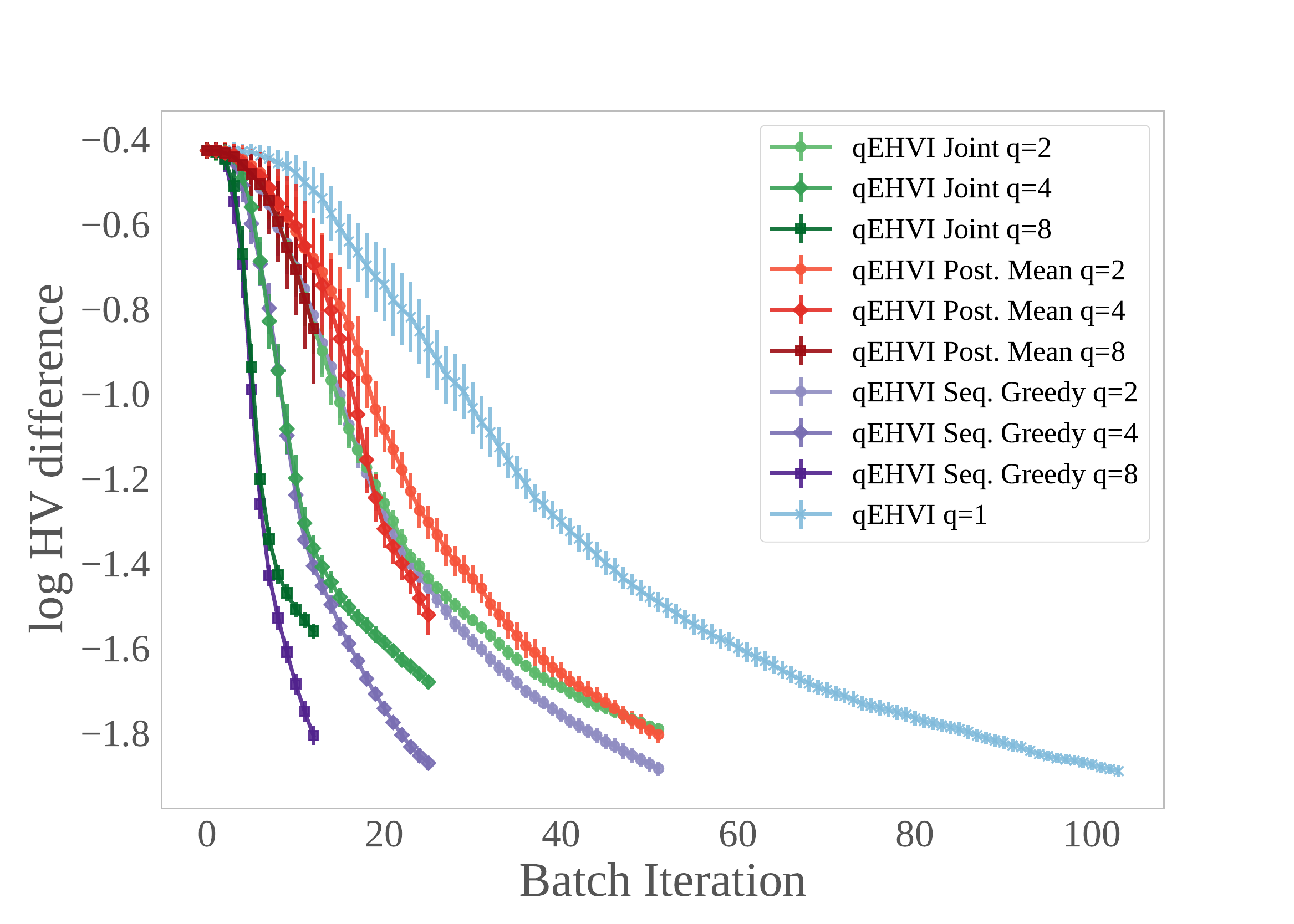}
        \vspace{-3ex}
        \subcaption{\label{pm_impute}}
    \end{subfigure}
    \caption{(a) A comparison of \EHVI{} and \qEHVI{} ($q=2$) optimized with L-BFGS-B using exact gradients, L-BFGS-B using gradients approximated using finite differences, and CMA-ES, a gradient-free method. (b) A comparison of joint optimization, sequential greedy optimization with proper integration at the pending points, and sequential greedy using the posterior mean. Both plots show optimization performance on a DTLZ2 problem ($d=6, M=2$) with a budget of 100 evaluations (plus the initial quasi-random design). We report means and 2 standard errors across 20 trials.\\[-4ex]}%
\end{figure}%

\subsection{Sequential Greedy and Joint Batch Optimization}  \label{subsec:optimizing:seq_greedy}Jointly optimizing $q$ candidates increases in difficulty with~$q$ because the problem dimension is~$dq$. An alternative is to sequentially and greedily select candidates and condition the acquisition function on the previously selected pending points when selecting the next point \citep{wilson2018maxbo}. Using a submodularity argument similar to that in \citet{wilson2017reparamacq}, the sequential greedy approximation of \qEHVI{} enjoys regret of no more than $\frac{1}{e}\alpha_\text{\qEHVI}^*$, where $\alpha_\text{\qEHVI}^*$ is the optima of $\alpha_\text{\qEHVI}$ \citep{Fisher1978} (see Appendix~\ref{sec:seq_greedy_details}).

Although sequential greedy approaches have been considered for many acquisition functions \citep{wilson2018maxbo}, no previous work has proposed a proper sequential greedy approach (with integration over the posterior) for parallel \EHVI{}, as this would require computing the Pareto front under each sample $\bm f_t$ from the joint posterior before computing the hypervolume improvement. These operations would be computationally expensive for even modest $N$ and non-differentiable.  \qEHVI{} avoids determining the Pareto set for each sample by using inclusion-exclusion principle to compute the joint \HVI{} over the pending points $\bm x_1, ..., \bm x_{i-1}$ and the new candidate $\bm x_i$ for each MC sample. Figure \ref{pm_impute} empirically demonstrates the improved optimization performance from properly integrating over the unobserved outcomes rather than using the posterior mean or jointly optimizing the $q$ candidates.

\section{Benchmarks}
\label{sec:Experiments}

We empirically evaluate \qEHVI{} on synthetic and real world optimization problems. We compare \qEHVI{}\footnote{\label{botorch_code}Acquisition functions are available as part of the open-source library BoTorch \citep{balandat2020botorch}. Code is available at \url{https://github.com/pytorch/botorch}.} against existing state-of-the-art methods including SMS-EGO\footnote{\label{spearmint_code}We leverage existing implementations from the Spearmint library. The code is available at \url{https://github.com/HIPS/Spearmint/tree/PESM}.}, PESMO\footnotemark[\getrefnumber{spearmint_code}], TS-TCH\footnotemark[\getrefnumber{botorch_code}], and analytic EHVI \citep{yang_emmerich2019} with gradients\footnotemark[\getrefnumber{botorch_code}]. Additionally, we compare against a novel extension of ParEGO \citep{parego} that supports parallel evaluation and  constraints (neither of which have been done before to our knowledge); we call this method \qParego{}\footnotemark[\getrefnumber{botorch_code}]. Additionally, we include a quasi-random baseline that selects candidates from a scrambled Sobol sequence. See Appendix~\ref{sec:algo_details} for details on all baseline algorithms.
\\[-3.5ex]
\paragraph{Synthetic Benchmarks}

We evaluate optimization performance on four benchmark problems in terms of log hypervolume difference, which is defined as the difference between the hypervolume of the true (feasible) Pareto front and the hypervolume of the approximate (feasible) Pareto front based on the observed data; in the case that the true Pareto front is unknown (or not easily approximated), we evaluate the hypervolume indicator. All references points and search spaces are provided in Appendix \ref{sec:problem_details}. For synthetic problems, we consider the Branin-Currin problem ($d=2, M=2$, convex Pareto front) \citep{belakaria2019} and the C2-DTLZ2 ($d=12, M=2, V=1$, concave Pareto front), which is a standard constrained benchmark from the MO literature \citep{deb2019} (see Appendix~\ref{appdx:sec:extra_synthetic_experiments} for additional synthetic benchmarks).\\[-1.5ex]

\begin{figure}[t]
    \centering
    \begin{subfigure}{.49\textwidth}
        \centering
\includegraphics[width=\linewidth]{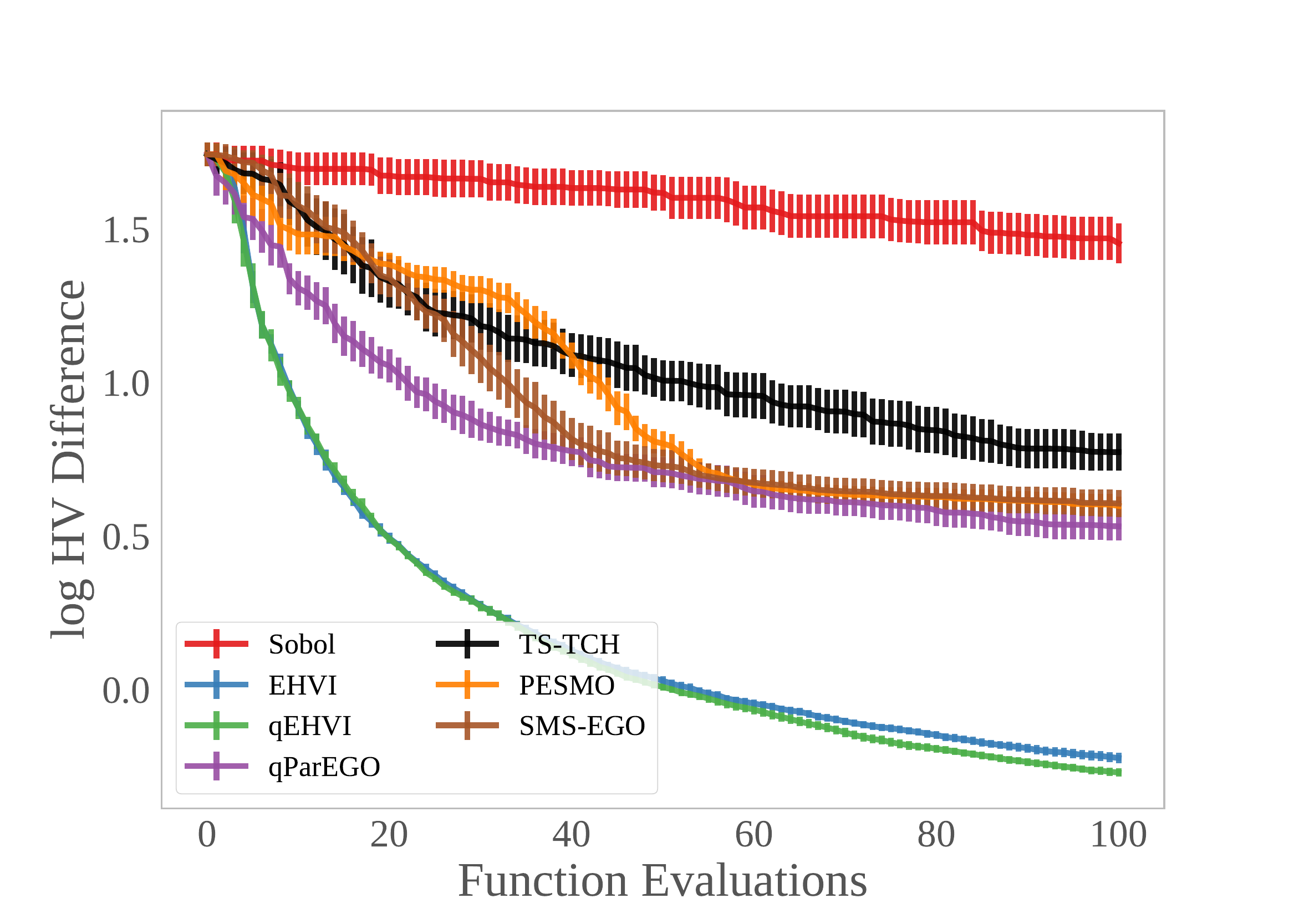}
        \vspace{-4ex}
        \subcaption{\label{fig:abr_hv}}
    \end{subfigure} %
    \vspace{-2ex}
    \begin{subfigure}{.49\textwidth}    
        \centering
        \includegraphics[width=\linewidth]{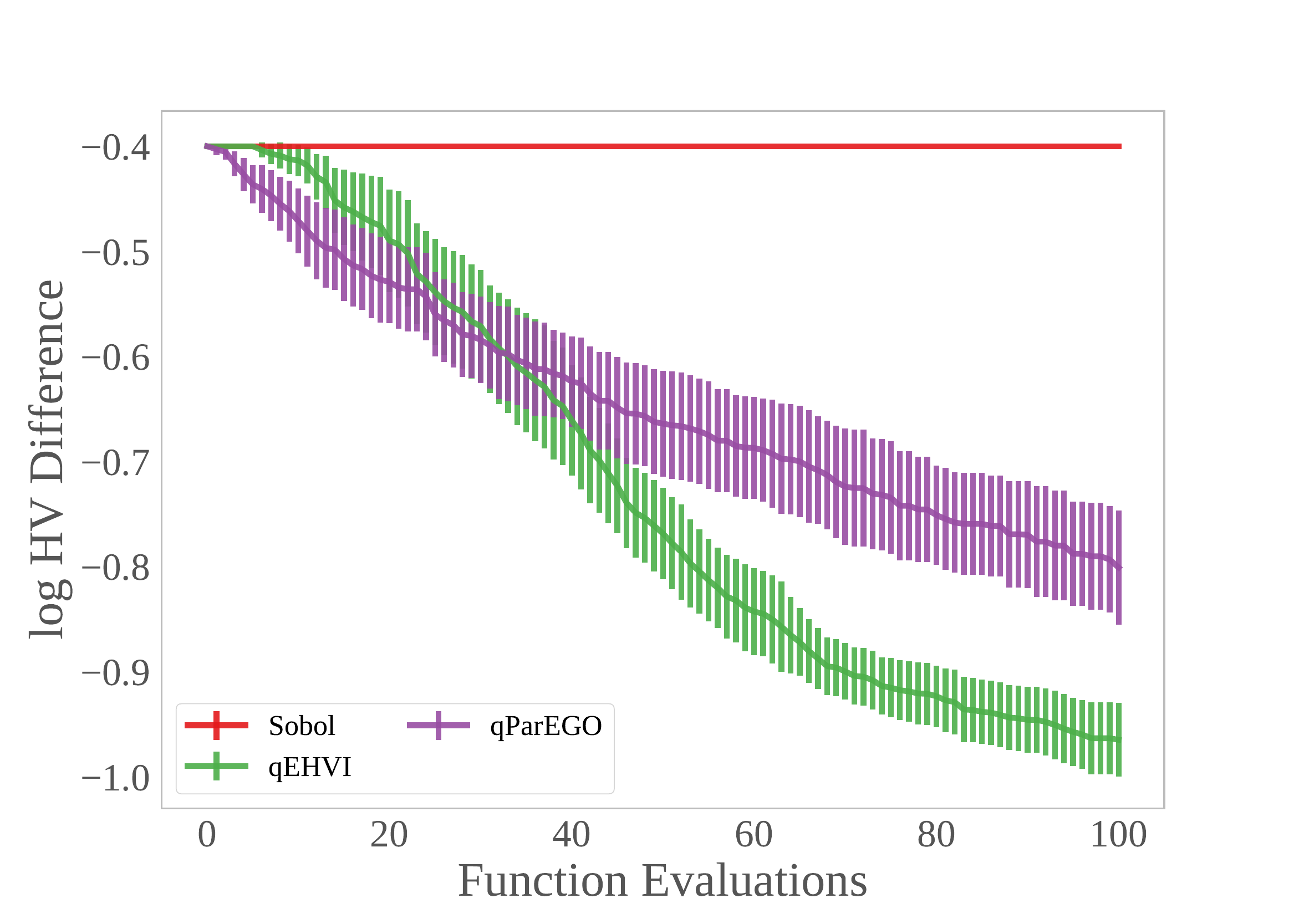}
        \vspace{-4ex}
        \subcaption{\label{fig:constrained_hv}}
    \end{subfigure}
    \begin{subfigure}{.49\textwidth}
        \centering
        \includegraphics[width=\linewidth]{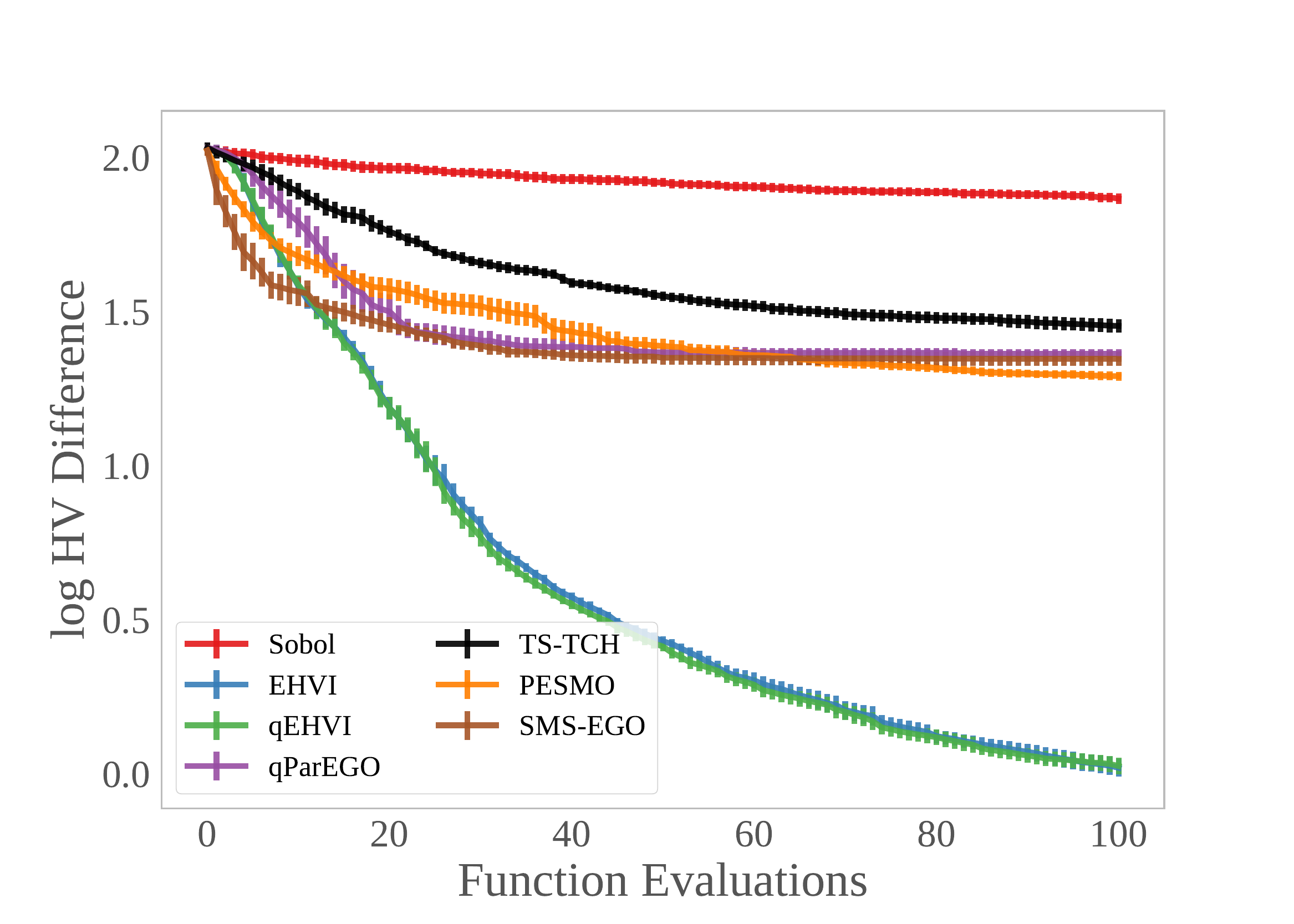}
        \vspace{-4ex}
        \subcaption{\label{fig:daisy_hv}}
    \end{subfigure} %
    \begin{subfigure}{.49\textwidth}    
        \centering
        \includegraphics[width=\linewidth]{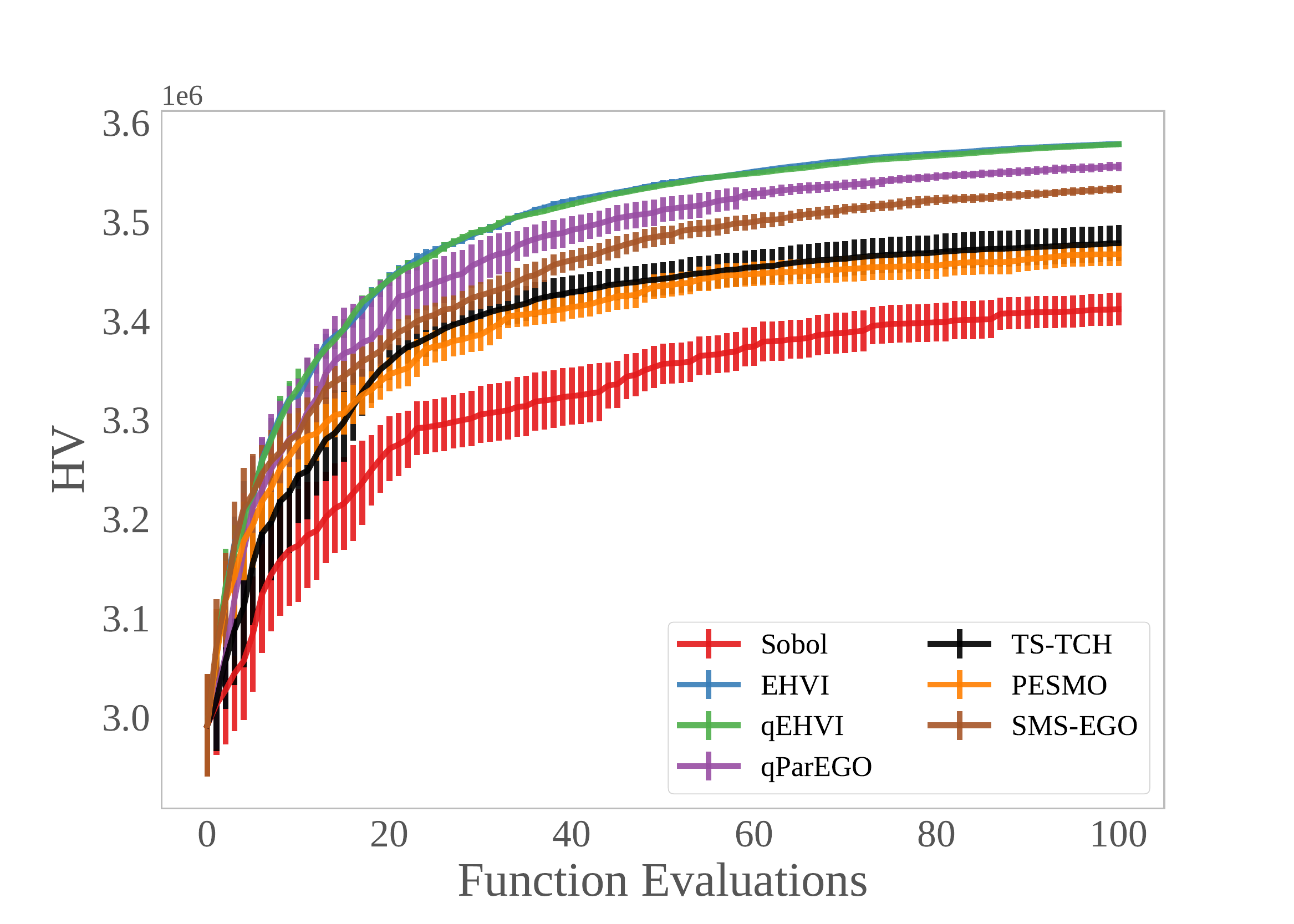}
        \vspace{-4ex}
        \subcaption{\label{fig:daisy_Pareto}}
    \end{subfigure}
    \vspace{-1ex}
    \caption{\label{fig:opt_results}Sequential optimization performance on (a) on the Branin-Currin problem ($q=1$), (b) the C2-DTLZ2 problem, (c) the vehicle crash safety problem ($q=1$), and (d) the ABR control problem ($q=1$). We report the means and 2 standard errors across 20 trials.\\[-4ex]}
\end{figure}

{\bf{Real-World Benchmarks}}
\\[-0.1ex]
\emph{Structural Optimization in Automobile Safety Design} (\textsc{VehicleSafety}): Vehicle crash safety is an important consideration in the structural design of automobiles. A lightweight car is preferable because of its potentially lower manufacturing cost and better fuel economy, but lighter material can fare worse than sturdier alternatives in a collision, potentially leading to increased vehicle damage and more severe injury to the vehicle occupants \citep{vehicle2005}. We consider the problem designing the thickness of 5 reinforced parts of the frontal frame of a vehicle that considerably affect crash safety. The goal is to minimize: 1) the \textit{mass} of the vehicle; 2) the \textit{collision acceleration} in a full frontal crash---a proxy for bio-mechanical trauma to the vehicle occupants from the acceleration; and 3) the \textit{toe-board intrusion}---a measure of the most extreme mechanical damage to the vehicle in an off-frontal collision \citep{liao2008}. For this problem, we optimize the surrogate from \citet{tanabe2020}.

\textit{Policy Optimization for Adaptive Bitrate Control} (\textsc{ABR}): Many web services adapt video playback quality adaptively based on the receiver's network bandwith to maintain steady, high quality stream with minimal stalls and buffer periods \citep{Mao2019RealworldVA}. Previous works have proposed controllers with different scalarized objective functions \citep{mao17}, but in many cases, engineers may prefer to learn the set of optimal trade-offs between their metrics of interest, rather than specifying a scalarized objective in advance. In this problem, we decompose the objective function proposed in \citet{mao17} into its constituent metrics and optimize 4 parameters of an ABR control policy on the Park simulator \citep{Mao2019ParkAO} to maximize video quality (bitrate) and minimize stall time. See Appendix \ref{sec:problem_details} for details.
\begin{table*}[t!]
\centering
\caption{\label{table:latency:cpu} Acquisition Optimization wall time in seconds on a CPU (2x Intel Xeon E5-2680 v4 @ 2.40GHz) and a GPU (Tesla V100-SXM2-16GB). We report the mean and 2 standard errors across 20 trials. NA indicates that the algorithm does not support constraints. 
}
\begin{small}
\begin{sc}
\begin{tabular}{lcccc}
\toprule
\textbf{CPU} & BraninCurrin & C2DTLZ2 & ABR & VehicleSafety\\
\midrule
PESMO (\textit{q}=1) & $249.16 ~(\pm 19.35)$&NA& $214.16 ~(\pm 18.38)$& $492.64 ~(\pm 58.98)$\\
SMS-EGO (\textit{q}=1) & $146.1 ~(\pm 8.57)$& NA&$89.54 ~(\pm 5.79)$& $115.11 ~(\pm 8.21)$\\
TS-TCH (\textit{q}=1) & $2.82 ~(\pm 0.03)$ & NA & $17.22 ~(\pm 0.04)$ & $47.46 ~(\pm 0.05)$\\
\qParego{} (\textit{q}=1) & $1.56 ~(\pm 0.16)$&$4.01 ~(\pm 0.77)$& $7.47 ~(\pm 0.67)$& $1.74 ~(\pm 0.27)$\\
\EHVI{} (\textit{q}=1) & $3.04 ~(\pm 0.16)$&NA& $2.48 ~(\pm 0.19)$& $15.18 ~(\pm 2.24)$\\
\qEHVI{} (\textit{q}=1) & $3.63 ~(\pm 0.23)$&$5.4 ~(\pm 1.18)$& $6.15 ~(\pm 0.71)$& $67.54 ~(\pm 10.45)$\\
\midrule
\textbf{GPU} & BraninCurrin & C2DTLZ2 & ABR & VehicleSafety\\
\midrule
TS-TCH (\textit{q}=1) & $0.07 ~(\pm 0.00)$ & NA & $0.16 ~(\pm 0.00)$ & $0.32 ~(\pm 0.0)$\\
TS-TCH (\textit{q}=2) & $0.07 ~(\pm 0.00)$ & NA & $0.15 ~(\pm 0.00)$ & $0.34 ~(\pm 0.01)$\\
TS-TCH (\textit{q}=4) & $0.09 ~(\pm 0.01)$ & NA & $0.15 ~(\pm 0.00)$ & $0.31 ~(\pm 0.01)$\\
TS-TCH (\textit{q}=8) & $0.08 ~(\pm 0.00)$ & NA & $0.16 ~(\pm 0.00)$ & $0.34 ~(\pm 0.01)$\\
\qParego{} (\textit{q}=1) & $3.2 ~(\pm 0.37)$& $3.85 ~(\pm 0.91)$ & $9.64 ~(\pm 0.96)$& $3.44 ~(\pm 0.51)$\\
\qParego{} (\textit{q}=2) & $7.12 ~(\pm 0.81)$& $12.1 ~(\pm 2.77)$ & $21.19 ~(\pm 1.53)$& $7.32 ~(\pm 0.97)$\\
\qParego{} (\textit{q}=4) & $15.34 ~(\pm 1.69)$& $39.71 ~(\pm 7.40)$ & $35.46 ~(\pm 2.32)$& $17.2 ~(\pm 2.29)$\\
\qParego{} (\textit{q}=8) & $32.11 ~(\pm 4.14)$&$99.58 ~(\pm 15.20)$ & $72.52 ~(\pm 5.04)$& $39.72 ~(\pm 7.13)$\\
\EHVI{} (\textit{q}=1) & $4.53 ~(\pm 0.23)$& NA & $6.82 ~(\pm 0.55)$& $8.95 ~(\pm 0.64)$\\
\qEHVI{} (\textit{q}=1) & $5.98 ~(\pm 0.28)$ & $3.36 ~(\pm 0.94)$ & $7.71 ~(\pm 0.67)$& $10.43 ~(\pm 0.64)$\\
\qEHVI{} (\textit{q}=2) & $11.37 ~(\pm 0.56)$& $21.56 ~(\pm 3.45)$ & $18.32 ~(\pm 1.48)$& $17.67 ~(\pm 1.54)$\\
\qEHVI{} (\textit{q}=4) & $25.29 ~(\pm 1.51)$&$89.18 ~(\pm 10.86)$ & $44.44 ~(\pm 3.53)$& $54.25 ~(\pm 4.17)$\\
\qEHVI{} (\textit{q}=8) & $102.46 ~(\pm 9.22)$&$215.74 ~(\pm 15.85)$ & $100.64 ~(\pm 7.22)$& $255.72 ~(\pm 23.73)$\\
\bottomrule
\end{tabular}
\end{sc}
\end{small}
\vspace{-2ex}
\end{table*}

\subsection{Results}
Figure \ref{fig:opt_results} shows that \qEHVI{} outperforms all baselines in terms of sequential optimization performance on all evaluated problems.
Table \ref{table:latency:cpu} shows that \qEHVI{} achieves wall times that are an order of magnitude smaller than those of PESMO on a CPU in sequential optimization, and maintains competitive wall times even relative to \qParego{} (which has a significantly smaller workload) for large $q$ 
on a GPU. TS-TCH has by far the fastest wall time, but this comes at the cost of inferior optimization performance.

Figure~\ref{fig:parallelism} illustrates optimization performance of \emph{parallel} acquisition functions for varying batch sizes. Increasing the level of parallelism leads to faster convergence for all algorithms (Figure~\ref{fig:parallelism}a). In contrast with other algorithms, \qEHVI{}'s sample complexity does not deteriorate substantially when high levels of parallelism are used (Figure~\ref{fig:parallelism}b).
\begin{figure}[ht]
\tiny
    \centering
    \begin{subfigure}{.49\textwidth}
        \centering
\includegraphics[width=\linewidth]{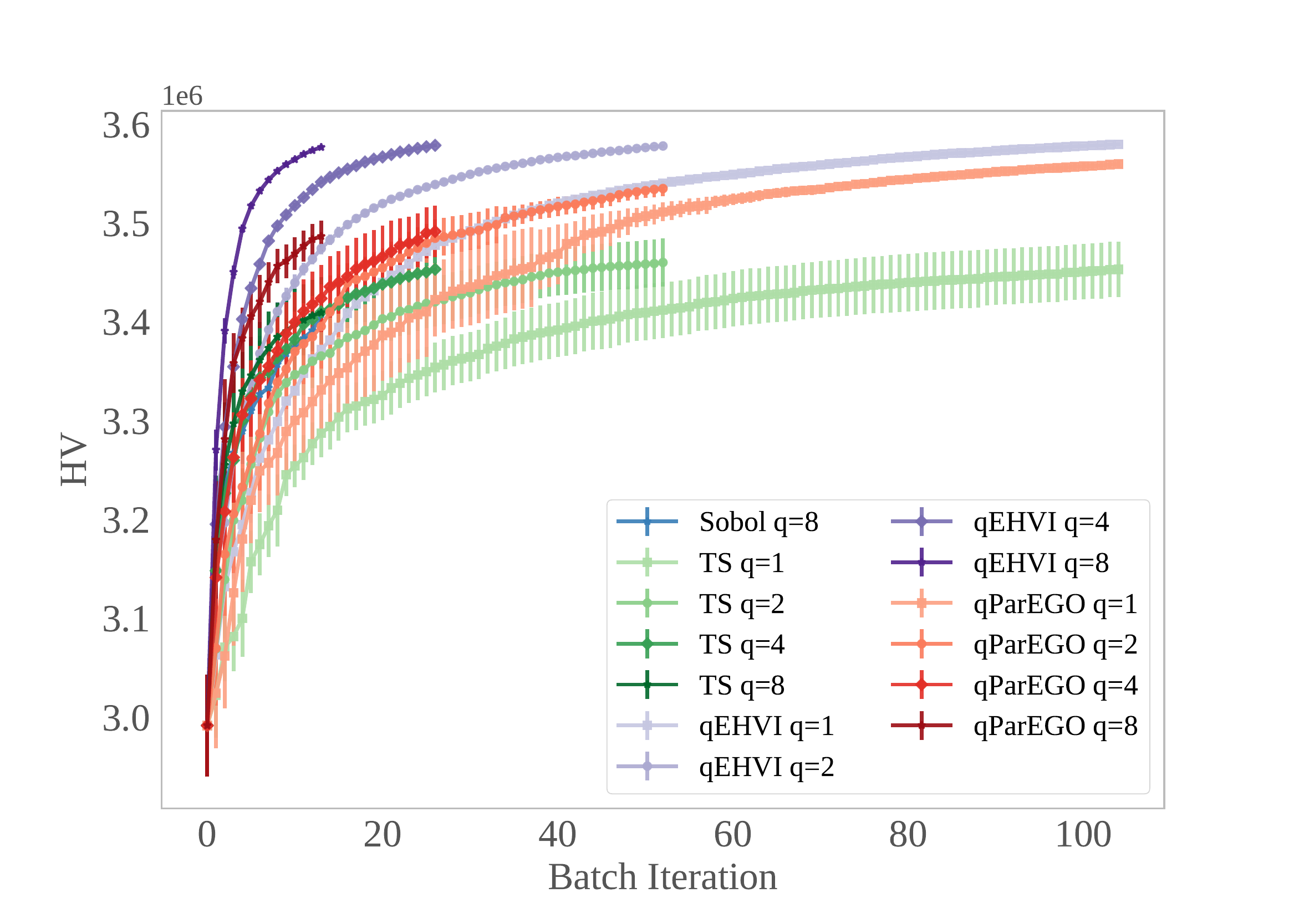}
        \subcaption{\label{fig:parallelism_1}}
    \end{subfigure} %
    \begin{subfigure}{.49\textwidth}    
        \centering
        \includegraphics[width=\linewidth]{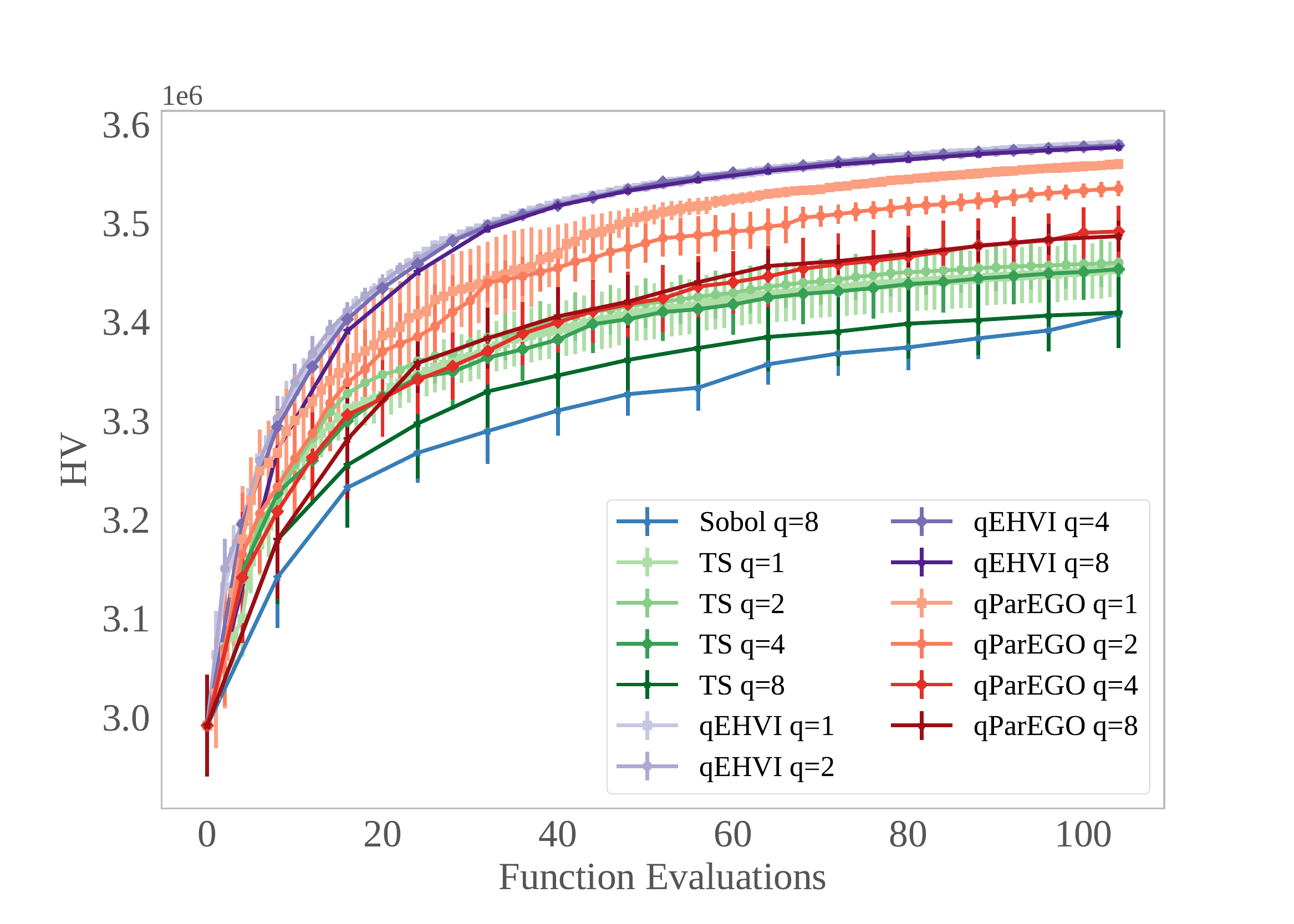}
        \subcaption{\label{fig:parallelism_2}}
    \end{subfigure}
    \vspace{-2ex}
    \caption{\label{fig:parallelism} Parallel optimization performance on the \textsc{ABR} problem with varying batch sizes ($q$) by (a) \emph{batch BO iterations} and (b) \emph{function evaluations}.}
\label{fig:q_anytime_extra2}
\vspace{-6ex}
\end{figure}

\section{Discussion}
We present a practical and efficient acquisition function, \qEHVI{}, for parallel, constrained multi-objective Bayesian optimization. Leveraging differentiable programming, modern parallel hardware, and the Sample Average Approximation, we efficiently optimize \qEHVI{} via quasi second-order methods and provide theoretical convergence guarantees for our approach. Empirically, we demonstrate that our method out-performs state-of-the-art multi-objective Bayesian optimization methods.

One limitation of our approach is that it currently assumes noiseless observations, which, to our knowledge, is the case with all formulations of \EHVI{}. Integrating over the uncertainty around the previous observations~\citep{letham2019noisyei} by using MC samples over the new candidates and the training points, one may be able to account for the noise.
Another limitation of \qEHVI{} is that its scalability is limited the partitioning algorithm, precluding its use in high-dimensional objective spaces. More scalable partitioning algorithms,
either approximate algorithms (e.g. the algorithm proposed by \citet{couckuyt12}, which we examine briefly in Appendix \ref{appdx:subsec:approx_decomp})
or more efficient exact algorithms that result in fewer disjoint hyper-rectangles (e.g. \citep{LACOUR2017347, DACHERT2017, yang2019}), will improve the scalability and computation time of of \qEHVI{}. We hope this work encourages researchers to consider more improvements from applying modern computational paradigms and tooling to Bayesian optimization.
 
\newpage
 
\section{Statement of Broader Impact}
Optimizing a single outcome commonly comes at the expense of other secondary outcomes. In some cases, decision makers may be able to form a scalarization of their objectives in advance, but in the researcher's experience, formulating such trade-offs in advance is difficult for most.  Improvements to the optimization performance and practicality of multi-objective Bayesian optimization have the potential to allow decision makers to better understand and make more informed decisions across multiple trade-offs.  We expect these directions to be particularly important as Bayesian optimization is increasingly used for applications such as recommender systems~\cite{letham2019mtbo}, where auxiliary goals such as fairness must be accounted for.  Of course, at the end of the day, exactly what objectives decision makers choose to optimize, and how they balance those trade-offs (and whether that is done in equitable fashion) is up to the individuals themselves. 

\section*{Acknowledgments}
We would like to thank Daniel Jiang for helpful discussions around our theoretical results.



\small
\bibliographystyle{plainnat}
\bibliography{neurips_2020}

\clearpage

\appendix

\begin{center}
\hrule height 4pt
\vskip 0.25in
\vskip -\parskip
    {\LARGE\bf  Appendix to:\\[2ex] \papertitle}
\vskip 0.29in
\vskip -\parskip
\hrule height 1pt
\vskip 0.2in%
\end{center}

\section{Derivation of $q$-Expected Hypervolume Improvement}
\label{appdx:sec:qEHVIDerivation}

\subsection{Hypervolume Improvement via the Inclusion-Exclusion Principle}
\label{appdx:subsec:qEHVIDerivation:HVI}

The hypervolume improvement of $\bm{f}(\bm{x})$ within the hyper-rectangle $S_k$ is the volume of $S_k \cap \Delta(\{\bm f(\bm x)\}, \mathcal P, \bm r)$ and is given by:
\begin{align*}
    \HVI{}_k\big(\bm{f}(\bm{x}), \bm l_k, \bm u_k\big) = \lambda_M\big(S_k \cap \Delta(\{\bm f(\bm x)\}, \mathcal P, \bm r)\big)
    = \prod_{m=1}^M
\big[z_k^{(m)} - l_k^{(m)}\big]_{+},
\end{align*}
where $u_k^{(m)}, l_k^{(m)}, f^{(m)}(\bm x)$, and $z_k^{(m)}$ denote the $m^{\text{th}}$ component of the corresponding vector and $[\cdot]_{+}$ denotes the $\min(\cdot, 0)$ operation.
Summing over all $S_k$ gives the total hypervolume improvement:
\begin{align*}
    \HVI{}\big(\bm{f}(\bm{x})\big) &=\sum_{k=1}^{K}\HVI{}_k\big(\bm{f}(\bm{x}), \bm l_k, \bm u_k\big)\\
    &=\sum_{k=1}^{K}\lambda_M\big(S_k \cap \Delta(\{\bm f(\bm x)\}, \mathcal P, \bm r)\big)\\
    &=\sum_{k=1}^{K}\prod_{m=1}^M
\big[z_k^{(m)} - l_k^{(m)}\big]_{+}.
\end{align*}

We can extend the \HVI{} computation to the $q>1$ case using the inclusion-exclusion principle.

\begin{principle}{\textbf{The inclusion-exclusion principle} \normalfont{\citep{silva1854, sylvester1883, cerasoli}}}
\label{principle:1}
Given a finite measure space $(B, \mathcal A, \mu)$ and a finite sequence of potentially empty or overlapping sets  $\{A_i\}_i=1^n$ where $A_i \in \mathcal A$ and $\mu(B) < \infty$,
then,
\begin{equation*}
\lambda_M \bigg( \bigcup_{i=1}^p A_i \bigg) = 
\sum_{j=1}^p (-1)^{j+1}\sum_{1 \leq i_1 \le \ldots \le i_j \leq p} \lambda_M \big( A_{i_1} \cap ... \cap A_{i_j} \big)
\end{equation*}
\end{principle}

In the context of computing the joint \HVI{} of $q$ new points$\{\bm f(\bm x_i)\}_{i=1}^q$, each subset $A_i$ for $i=1, \dots, q$ is the set of points contained in $\Delta(\{\bm f(\bm x_i)\}, \mathcal P, \bm r)$ --- independently of the other $q-1$ points. $\lambda_M(A_i)$ is the hypervolume improvement from the new point $\bm f(\bm x_i)$: $\lambda_M( A_i) = \HVI{}(\bm f(\bm x_i))$. The union of these subsets is the set of points in the new space dominated by the $q$ new points: $\bigcup_{i=1}^q A_i = \bigcup_{i=1}^q \Delta(\{\bm f(\bm x_i)\}, \mathcal P, \bm r)$. The hypervolume of $\bigcup_{i=1}^q \Delta(\{\bm f(\bm x_i)\}, \mathcal P, \bm r)$ is the hypervolume improvement from the $q$ new points:
\begin{align*}
\HVI{}(\{\bm f(\bm x_i)\}_{i=1}^q) 
&= \lambda_M \bigg( \bigcup_{i=1}^q A_i \bigg) \\
&= \sum_{j=1}^q (-1)^{j+1}\sum_{1 \leq i_1 \le \ldots \le i_j \leq q} \lambda_M\big( A_{i_1} \cap \dots \cap A_{i_j} \big)
\end{align*}
To compute $\lambda_M ( A_{i_1} \cap \dots \cap A_{i_j} )$, we partition the space covered by $A_{i_1} \cap \dots \cap A_{i_j}$ across the $K$ hyper-rectangles $\{S_k\}_{k=1}^K$ and compute the hypervolume of the overlapping space of $A_{i_1} \cap \dots \cap A_{i_j}$ with each $S_k$ independently. Since $\{S_k\}_{k=1}^K$ is a disjoint partition, summing over $K$ gives the hypervolume of $A_{i_1} \cap \dots \cap A_{i_j}$:
$$\lambda_M\big( A_{i_1} \cap \dots \cap A_{i_j} \big) = \sum_{k=1}^{K} \lambda_M\big( S_k \cap A_{i_1} \cap \dots \cap A_{i_j} \big)$$

This has two advantages. First, the new dominated space $A_i$ can be a non-rectangular polytope, but the intersection $A_i \cap S_k$ is a \emph{rectangular} polytope, which simplifies computation of overlapping hypervolume. Second, the vertices defining the hyper-rectangle encapsulated by $S_k \cap A_{i_1} \cap \dots \cap A_{i_j} $ are easily derived. The lower bound is simply the $\bm l_k$ lower bound of $S_k$ and the upper bound is the component-wise minimum $\bm z_{k, i_1, ... i_j} = \min \big[\bm u_k,\bm f(\bm x_{i_1}), \ldots, \bm f(\bm x_{i_j})\big]$. 

Importantly, this is computationally tractable because this specific approach enables parallelizing computation across all intersections of subsets $A_{i_1} \cap \dots \cap A_{i_j}$ for $1\leq i_j \leq \ldots \leq i_j \leq q$ and across all $K$ hyper-rectangles.  Explicitly, the \HVI{} is computed as:

\begin{align*}
\HVI{}(\{\bm f(\bm x_i)\}_{i=1}^q)
&= \lambda_M\bigg( \bigcup_{i=1}^p A_i \bigg) \\
&= \sum_{j=1}^q \sum_{1 \leq i_1 \le \ldots \le i_j \leq q}(-1)^{j+1}  \lambda_M\big( A_{i_1} \cap \dots \cap A_{i_j} \big)\\
&=\sum_{k=1}^{K}\sum_{j=1}^q \sum_{1 \leq i_1 \le \ldots \le i_j \leq q} (-1)^{j+1}\lambda_M\big( S_k \cap A_{i_1} \cap \dots \cap A_{i_j} \big)\\
&= \sum_{k=1}^{K}\sum_{j=1}^q \sum_{1 \leq i_1 \le \ldots \le i_j \leq q} (-1)^{j+1} \lambda_M\big(S_k \cap \Delta(\{\bm f(\bm x_{i_1})\}, \mathcal P, \bm r) \cap  \ldots \cap \Delta(\{\bm f(\bm x_{i_j})\}, \mathcal P, \bm r)\big)\\
&=\sum_{k=1}^{K}\sum_{j=1}^q \sum_{1 \leq i_1 \le \ldots \le i_j \leq q} (-1)^{j+1} \prod_{m=1}^M\big[z_{k, i_1, ... i_j}^{(m)} - l_k^{(m)}\big]_{+}\\
&=\sum_{k=1}^{K}\sum_{j=1}^q \sum_{X_j \in \mathcal X_j} (-1)^{j+1} \prod_{m=1}^M\big[z_{k, X_j}^{(m)} - l_k^{(m)}\big]_{+}
\end{align*}
where $\mathcal \mathcal X_j$ is the superset all subsets of $\xcand$ of size $j$: $\mathcal X_j = \{X_j \subset \xcand : \vert X_j \vert = j\}$ and $z_{k, X_j}^{(m)} = z_{k, i_1, ... i_j}^{(m)}$ for $X_j = \{\bm x_{i_1}, ..., \bm x_{i_j}\}$.

\subsection{Computing \emph{Expected} Hypervolume Improvement}
\label{appdx:subsec:qEHVIDerivation:ComputeEHVI}

The above approach for computing \HVI{} assumes we know the true objective values $\{\bm f(\bm x_i)\}_{i=1}^q$. Since we do not know the true function values $\{\bm f(\bm x_i)\}_{i=1}^q$, we compute \qEHVI{} as the expectation over the GP posterior. 
\begin{equation}
\label{appdx:eq:qEHVIDerivation:ComputeEHVI:qehvi:exp}
\aqEHVI = \mathbb{E}\Bigl[\HVI{}(\{\bm f(\bm{x}_i)\}_{i=1}^q)\Bigr] =\int_{\mathbb{R}^M} \HVI{}(\{\bm f(\bm x_i)\}_{i=1}^q) d\bm{f}
\end{equation}
In the sequential setting and under the assumption of independent outcomes, \qEHVI{} is simply \EHVI{} and can be expressed in closed form \citep{yang2019}. However when $q>1$, there is no known analytical formulation \cite{yang_palar19}.
Instead, we estimate the expectation in \eqref{appdx:eq:qEHVIDerivation:ComputeEHVI:qehvi:exp} using MC integration with samples from the joint posterior $\mathbb P \big(\bm f(\bm x_1),..., \bm f(\bm x_q) | \mathcal D)$:
\begin{align}
\label{appdx:eq:qEHVIDerivation:ComputeEHVI:qehvi}
\aqEHVI =\mathbb{E}\Bigl[\HVI{}(\{\bm f(\bm{x}_i)\}_{i=1}^q)\Bigr] &\approx \frac{1}{N}\sum_{t=1}^{N} \HVI{}(\{\bm f_t(\bm x_i)\}_{i=1}^q)\\
&=\frac{1}{N}\sum_{t=1}^{N} \sum_{k=1}^{K}\sum_{j=1}^q\sum_{X_j \in \mathcal X_j} (-1)^{j+1} \prod_{m=1}^M\big[\bm z_{k, X_j, t}^{(m)} - l_k^{(m)}\big]_{+}
\end{align}
where $\{\bm{f}_t(\bm x_i)\}_{i=1}^q \sim \mathbb P \big(\bm f(\bm x_1),..., \bm f(\bm x_q) | X, Y\big)$ is the $t^{\text{th}}$ sample from the joint posterior over $\xcand$ and $\bm z_{k, X_j, t}^{(m)} = \min \big[\bm u_k,\min_{\bm x' \in X_j}\bm f_t(\bm x')\big]$.



\subsection{Supporting Outcome Constraints}
\label{appdx:subsec:DqEHVI:OutcomeConstraints}

Recall that we defined the constrained hypervolume improvement as
\begin{align}
\label{eqn:qehvi:HVIc}
    \HVIc{}(\bm f(\bm x), \bm c(\bm x)) =\HVI{}[\bm f(\bm x)] \cdot \mathbbm{1}[\bm c(\bm x) \geq \bm 0].
\end{align}
For $q=1$ and assuming independence of the objectives and the constraints, the expected $\HVIc{}$ is the product of the expected \HVI{} and the probability of feasibility (the expectation of $\mathbbm{1}[\bm c(\bm x) \geq \bm 0]$) \citep{Feliot_2016}. However, requiring objectives and constraints to be independent is unnecessary when estimating the expectation with MC integration using samples from the joint posterior.

In the parallel setting, if all constraints are satisfied for all $q$ candidates $\xcand{} = \{\bm x_i\}_{i=1}^q$, $\HVIc{}$ is simply \HVI{}. If a subset $\mathcal V \subset \xcand{}, \mathcal V \ne \varnothing$ of the candidates violate at least one of the constraints, then the feasible \HVI{} is the \HVI{} of the set of feasible candidates: $\HVIc(\xcand{}) = \HVI(\xcand \setminus \mathcal V)$. That is, the \emph{hypervolume contribution} (i.e. the marginal \HVI{}) of an infeasible point is zero. In our formulation, \HVI{} can be computed by multiplying~\eqref{eqn:DqEHVI:ComputeEHVI:qehvi} with an additional factor $\prod_{\bm x' \in X_j} \prod_{v=1}^V\mathbbm{1}[c^{(v)}(\bm x') \geq 0]$:
\begin{equation}
\label{appdx:eqn:DqEHVI:cqHVI}
    \HVIc{}(\{\bm f(\bm x_i), \bm c(\bm x_i)\}_{i=1}^q) =
    \sum_{k=1}^{K}\sum_{j=1}^q\sum_{X_j \in \mathcal X_j}(-1)^{j+1} \Bigg[\bigg(\prod_{m=1}^M\big[\bm z_{k, X_j}^{(m)} - l_k^{(m)}\big]_{+}\bigg)\prod_{\bm x' \in X_j} \prod_{v=1}^V\mathbbm{1}[c^{(v)}(\bm x') \geq 0]\Bigg].
\end{equation}
The additional factor $\prod_{\bm x'\in X_j} \prod_{v=1}^V\mathbbm{1}[c^{(v)}(\bm x_a) \geq 0]$ indicates whether all constraints are satisfied for all candidates in a given subset $X_j$. 
Thus $\HVIc{}$ can be computed in the same fashion as \HVI{}, but with the additional step of setting the \HV{} of all subsets containing $\bm x'$ to zero if $\bm x'$ violates any constraint. We can now again perform MC integration as in \eqref{eqn:DqEHVI:ComputeEHVI:qehvi} to compute the expected constrained hypervolume improvement.

In this formulation, the marginal hypervolume improvement from a candidate is weighted by the probability that the candidate is feasible. The marginal hypervolume improvements are highly dependent on the outcomes of the other candidates. Importantly, the MC-based approach enables us to properly estimate the marginal hypervolume improvements across candidates by sampling from the joint posterior.

Note that while the \emph{expected} constrained hypervolume $\mathbb{E} \bigl[ \HVIc{}(\{\bm f(\bm x_i), \bm c(\bm x_i)\}_{i=1}^q) \bigr]$ is differentiable, we may \emph{not} differentiate inside the expectation (hence we cannot expect simply differentiating \eqref{appdx:eqn:DqEHVI:cqHVI} on the sample-level to provide proper gradients). We therefore replace the indicator with a sigmoid function with temperature parameter $\epsilon$, which provides a differentiable relaxation
\begin{align}
    \mathbbm{1}[c^{(v)}(\bm x') \geq 0] \approx s(c^{(v)}(\bm x'); \epsilon) := \frac{1}{1 + \exp(-c^{(v)}(\bm x') /\epsilon)}
\end{align}
that becomes exact in the limit $\epsilon \searrow 0$.

As in the unconstrained parallel scenario, there is no known analytical expression for the expected feasible hypervolume improvement. Therefore, we again use MC integration to approximate the expectation:
\begin{subequations}
\label{appdx:eqn:DqEHVI:cqEHVI}
\begin{align}
\aqEHVIc(\bm x) &= \mathbb{E}\Bigl[\HVIc{}(\{\bm f(\bm{x}_i), \bm c(\bm x_i)\}_{i=1}^q)\Bigr]\\
&\approx \frac{1}{N}\sum_{t=1}^{N} \HVIc{}(\{\bm f_t(\bm x_i), c_t(\bm x_i)\}_{i=1}^q)\\
&\approx\frac{1}{N}\sum_{t=1}^{N} \sum_{k=1}^{K}\sum_{j=1}^q\sum_{X_j \in \mathcal X_j}(-1)^{j+1} \Bigg[ \bigg(\prod_{m=1}^M\big[\bm z_{k, X_j,t}^{(m)} - l_k^{(m)}\big]_{+}\bigg)\prod_{\bm x' \in X_j} \prod_{v=1}^V
s(c^{(v)}(\bm x'); \epsilon)
\Bigg]
\end{align}
\end{subequations}

\subsubsection{Inclusion Exclusion principle for $\HVIc{}$}
Equation~\eqref{appdx:eqn:DqEHVI:cqHVI} holds when the indicator function because $\HVIc{}$ is equivalent to \HVI{} with the subset of feasible points. However, the sigmoid approximation can result in non-zero error. The error function  $\varepsilon: 2^{\xcand} \rightarrow \mathbb R$ can be expressed as
 
 $$\varepsilon(X) = \prod_{\bm x' \in X}\prod_{v=1}^V \mathbbm{1}[c(\bm x') > 0]  -  \prod_{\bm x' \in X}\prod_{v=1}^V s(c(\bm x'), \epsilon)$$ 
The error function gives a value to each to each element of $2^{\xcand}$.
Weight functions have been studied in conjunction with the inclusion-exclusion principle \citep{weighted_inclusion_exclusion}, but under the assumption of that the weight of a set is the sum of the weights of its elements: $w(A) = \sum_{a \in A} w(a)$. In our case, the weight function of a set $A$ is the product the weights of its elements. There, it is not obvious whether the inclusion-exclusion principle will hold in this case.
\begin{theorem} 
\label{thm:constrained_hvi}
Given a feasible Pareto front $\mathcal P_\text{feas}$, a partitioning $\{(\bm l_k, \bm u_k\}_{k=1}^K$ of the objective space $\mathbb R^M$ that is not dominated by the $\mathcal P_\text{feas}$, then for a set of points $\xcand$ with objective values $\bm f(\xcand)$ and constraint values $\bm c(\xcand)$, 
$$\HVIc(\bm f(\xcand), \bm c(\xcand), \mathcal P, \bm r) = \HVI(\bm f'(\xcand), \mathcal P', \bm r')$$
where $\bm f'(\xcand)$ is the set of objective-constraint vectors for each candidate point $\bm f'(\bm x) \in \mathbb R^{M+V}$,  $\mathcal P'$ is the set of vectors $\bm [ f^{(1)}(\bm x),..., f^{(M)}(\bm x), \bm 0_V] \in \mathbb R^{M+V}$, and $\bm r' = [r^{(1)},..., r^{(M)}, \bm 0_V] \in \mathbb R^{M+V}$.
\end{theorem}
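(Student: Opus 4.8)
The plan is to prove the identity by reducing the constrained hypervolume improvement in the $M$-dimensional objective space to an \emph{ordinary} (unconstrained) hypervolume improvement in the $(M+V)$-dimensional augmented space, where the inclusion--exclusion principle (Principle~\ref{principle:1}) is known to hold because it is merely the Lebesgue measure of a union of bounded boxes. Concretely, I would append the $V$ constraint coordinates to each objective vector, place the reference and every point of the feasible front on the constraint-zero hyperplane via $\bm r' = [\bm r, \bm 0_V]$ and $\mathcal P' = \{[\bm f(\bm x), \bm 0_V]\}$, and encode each candidate's feasibility through its constraint coordinates so that feasibility becomes exactly dominance of $\bm r'$ in those coordinates. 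The heart of the argument is then to show that, under this construction, the augmented dominated region contributed by the candidates coincides (up to a measure-zero set) with the feasible improvement region $\Delta$ from Definition~\ref{def:delta}, so that the two hypervolumes agree.

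First I would establish the two structural facts that drive the equality. (i) A candidate $\bm x$ is feasible, $\bm c(\bm x) \geq \bm 0$, if and only if its augmented image dominates $\bm r'$ in all $V$ constraint dimensions; an infeasible candidate has a constraint coordinate below the reference, so its augmented dominated box is empty (degenerate) in that dimension. (ii) Because $\mathcal P'$ lies entirely in the hyperplane where the constraint coordinates equal $\bm 0_V$, the region it dominates that also lies above $\bm r'$ is measure-zero, so the augmented front contributes no $(M+V)$-volume and does not interfere with the strictly positive constraint orthant in which feasible candidates live. With these in hand, the augmented non-dominated space is simply the given objective-space partition $\{(\bm l_k, \bm u_k)\}_{k=1}^K$ crossed with the positive constraint orthant, and I would compute the augmented \HVI{} piece-wise over these boxes exactly as in Section~\ref{subsec:DqEHVI:qHVI}.

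The crucial payoff---and the main obstacle---is to verify term-by-term that this augmented computation reproduces Eq.~\eqref{appdx:eqn:DqEHVI:cqHVI}, \emph{including} the troublesome product-of-indicators weight. The key observation is that intersecting the augmented dominated regions $A'_{i_1} \cap \dots \cap A'_{i_j}$ restricts each constraint coordinate to $[0, \min_{l} c^{(v)}(\bm x_{i_l})]$, which is empty precisely when \emph{any} member of the subset is infeasible; hence the box intersection has positive measure if and only if every candidate in $X_j$ is feasible. This is exactly the factor $\prod_{\bm x' \in X_j}\mathbbm{1}[\bm c(\bm x') \geq 0]$, so the product weight that was problematic for inclusion--exclusion emerges \emph{automatically} as the ordinary measure of an intersection of boxes, for which Principle~\ref{principle:1} applies without modification. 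The delicate step I expect to spend the most care on is the constraint-dimension bookkeeping: ensuring the encoding makes each feasible subset contribute the objective-space term $\prod_{m=1}^M [z_{k,X_j}^{(m)} - l_k^{(m)}]_+$ with a unit (indicator) factor rather than a spurious magnitude factor, and checking that $\bm l_k$ and $\bm z_{k,X_j} = \min[\bm u_k, \min_{\bm x' \in X_j}\bm f(\bm x')]$ match those in the objective-space decomposition. Once this equality $\HVIc = \HVI(\bm f'(\xcand), \mathcal P', \bm r')$ is secured, validity of inclusion--exclusion for $\HVIc$ follows immediately, since the right-hand side is a genuine unconstrained hypervolume improvement.
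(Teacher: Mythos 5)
Your high-level strategy---lift the problem to $\mathbb{R}^{M+V}$ by appending constraint dimensions, put $\mathcal P'$ and $\bm r'$ on the constraint-zero hyperplane, and recover the feasibility weight as the volume of the box intersection in the extra dimensions---is exactly the route the paper takes. But there is a genuine gap in the encoding you propose, and it sits precisely at the step you defer as ``bookkeeping.'' If you append the \emph{raw} constraint values $c^{(v)}(\bm x)$ as the extra coordinates, then the intersection $A'_{i_1}\cap\dots\cap A'_{i_j}$ restricted to constraint dimension $v$ is the interval $[0,\min_{l} c^{(v)}(\bm x_{i_l})]$, whose length is $\bigl[\min_{l} c^{(v)}(\bm x_{i_l})\bigr]_+$---a \emph{magnitude}, not an indicator. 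The resulting term is $\prod_{m=1}^M\bigl[z_{k,X_j}^{(m)}-l_k^{(m)}\bigr]_+\cdot\prod_{v=1}^V\bigl[\min_{\bm x'\in X_j}c^{(v)}(\bm x')\bigr]_+$, which does not equal the target $\prod_{m=1}^M\bigl[z_{k,X_j}^{(m)}-l_k^{(m)}\bigr]_+\cdot\prod_{\bm x'\in X_j}\prod_{v=1}^V\mathbbm{1}[c^{(v)}(\bm x')\geq 0]$ except in the degenerate case where every relevant constraint slack happens to equal $1$. You correctly observe that you need a unit factor rather than a spurious magnitude factor, but the fix is not a verification---it requires a different construction.

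The paper resolves this by taking the augmented coordinate to be the feasibility \emph{indicator} $\mathbbm{1}[c^{(v)}(\bm x)\geq 0]\in\{0,1\}$ rather than $c^{(v)}(\bm x)$ itself, and by setting the box bounds in the new dimensions to $l_k^{(M+v)}=0$ and $u_k^{(M+v)}=1$. The key algebraic step is that for $\{0,1\}$-valued quantities a product over the subset equals a minimum, $\prod_{\bm x'\in X_j}\mathbbm{1}[c^{(v)}(\bm x')\geq 0]=\min_{\bm x'\in X_j}\mathbbm{1}[c^{(v)}(\bm x')\geq 0]$, which is exactly what lets the constraint factor be rewritten as $\bigl[z_{k,X_j}^{(M+v)}-l_k^{(M+v)}\bigr]_+$ with $z_{k,X_j}^{(M+v)}=\min\bigl[1,\min_{\bm x'\in X_j}\mathbbm{1}[c^{(v)}(\bm x')\geq 0]\bigr]$, matching the definition of $z$ used for the objective coordinates. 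With that substitution the whole sum collapses to the unconstrained $\HVI{}$ formula with the product running over $m=1,\dots,M+V$, which is the claimed identity. If you replace ``append the constraint values'' with ``append the constraint indicators and cap the boxes at $1$,'' your geometric picture becomes correct; as written, it proves a different, magnitude-weighted identity.
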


\begin{proof}
Recall equation \ref{appdx:eqn:DqEHVI:cqHVI},
\begin{align*}
\HVIc{}(\{\bm f(\bm{x}_i), \bm c(\bm x_i)\}_{i=1}^q)
&=\sum_{k=1}^{K}\sum_{j=1}^q\sum_{X_j \in \mathcal X_j}(-1)^{j+1} \Bigg[ \bigg(\prod_{m=1}^M\big[ z_{k, X_j}^{(m)} - l_k^{(m)}\big]_{+}\bigg)\prod_{\bm x' \in X_j} \prod_{v=1}^V
\mathbbm{1}[c^{(v)}(\bm x') \geq 0]\Bigg].
\end{align*}
Note that the constraint product
\begin{equation}
\label{eqn:constrain_prod}
\begin{split}
\prod_{\bm x' \in X_j} \prod_{v=1}^V
\mathbbm{1}[c^{(v)}(\bm x') \geq 0] &=  \prod_{v=1}^V\prod_{\bm x' \in X_j}\mathbbm{1}[c^{(v)}(\bm x') \geq 0]\\ 
&=  \prod_{v=1}^V\min_{\bm x' \in X_j}\mathbbm{1}[c^{(v)}(\bm x') \geq 0]\\ 
&= \prod_{v=1}^V\min\bigg[1,\min_{\bm x' \in X_j} 
\mathbbm{1}[c^{(v)}(\bm x') \geq 0]\bigg]\\
&= \prod_{v=1}^V\Bigg[\min\bigg[1,\min_{\bm x' \in X_j}
\mathbbm{1}[c^{(v)}(\bm x') \geq 0]\bigg] - 0\Bigg].
\end{split}
\end{equation}
For $v = 1, \ldots, V$, $k=1,...K$, let $l_k^{(M+v)} = 0$ and $u_k^{(M+v)} = 1$. Then, substituting into the following expression from Equation \ref{eqn:constrain_prod} gives
\begin{align*}
\min\bigg[1,\min_{\bm x' \in X_j}
\mathbbm{1}[c^{(v)}(\bm x') \geq 0]\bigg] 
&= \min\bigg[u_k^{(M+v)},\min_{\bm x' \in X_j}
\mathbbm{1}[c^{(v)}(\bm x') \geq 0]\bigg]
\end{align*}
Recall from Section 4, that $z$ is defined as: $\bm z_k := \min \big[\bm u_k,\bm f(\bm x)\big]$. The high-level idea is that if we consider the indicator of the slack constraints $\mathbbm{1}[c^{(v)}(\bm x') \geq 0]$ as objectives, then the above expression is consistent with the definition of $z$ at the beginning of section 4. For $v=1, \ldots, V$,
\begin{align*}
z_{k, X_j}^{(M+v)} = \min\bigg[1,\min_{\bm x' \in X_j}
\mathbbm{1}[c^{(v)}(\bm x') \geq 0]\bigg] 
\end{align*}
Thus,
\begin{align*}
\prod_{\bm x' \in X_j} \prod_{v=1}^V
\mathbbm{1}[c^{(v)}(\bm x') \geq 0] 
&= \prod_{v=1}^V\Bigg[\min\bigg[1,\min_{\bm x' \in X_j}
\mathbbm{1}[c^{(v)}(\bm x') \geq 0]\bigg] - 0\Bigg]\\
&=\prod_{v=1}^V \big[  z_{k, X_j}^{(M+v)} - l_k^{(M+v)}\big]_{\text{+}}
\end{align*}

Returning to the $\HVIc{}$ equation, we have
\begin{equation}
\label{eqn:equivalent_hvi}
\begin{split}
\HVIc{}(\{\bm f(\bm{x}_i), \bm c(\bm x_i)\}_{i=1}^q)
&=\sum_{k=1}^{K}\sum_{j=1}^q\sum_{X_j \in \mathcal X_j}(-1)^{j+1} \Bigg[ \bigg(\prod_{m=1}^M\big[ z_{k, X_j}^{(m)} - l_k^{(m)}\big]_{+}\bigg)\prod_{\bm x' \in X_j} \prod_{v=1}^V
\mathbbm{1}[c^{(v)}(\bm x') \geq 0]\Bigg]\\
&=\sum_{k=1}^{K}\sum_{j=1}^q\sum_{X_j \in \mathcal X_j}(-1)^{j+1} \Bigg[ \bigg(\prod_{m=1}^M\big[z_{k, X_j}^{(m)} - l_k^{(m)}\big]_{+}\bigg)\prod_{v=M+1}^{M+V} \big[  z_{k, X_j}^{(v)} - l_k^{(M+v)}\big]_{\text{+}}\Bigg]\\
&=\sum_{k=1}^{K}\sum_{j=1}^q\sum_{X_j \in \mathcal X_j}(-1)^{j+1} \Bigg[\prod_{m=1}^{M+V}\big[z_{k, X_j}^{(m)} - l_k^{(m)}\big]_{+}\Bigg]
\end{split}
\end{equation}
\end{proof}

Now consider the case when a sigmoid approximation 
$\mathbbm{1}[c^{(v)}(\bm x') \geq 0] \approx s(c^{(v)}(\bm x'); \epsilon)$ is used.
The only change to Equation \ref{eqn:equivalent_hvi} is that
$$z_{k, X_j}^{(m)} \approx \hat{z}_{k, X_j}^{(m)} = \min\bigg[u_k^{(M+v)},\min_{\bm x' \in X_j}
S[c^{(v)}(\bm x'), \epsilon]\bigg].$$
If $S[c^{(v)}(\bm x'), \epsilon] = \mathbbm{1}[c^{(v)}(\bm x') \geq 0]$ for all $v, \bm x'$, then HVI is computed exactly without approximation error. If $S[c^{(v)}(\bm x'), \epsilon] \mathbbm{1}[c^{(v)}(\bm x') \geq 0]$ for any $v, \bm x'$, then there is approximation error: the hypervolume improvement from all subsets containing $\bm x'$ is proportional to $\prod_{v=1}^V \min_{\bm x' \in X}s(c(\bm x'), \epsilon)$. Since the constraint outcomes are directly considered as components in the hypervolume computation, the inclusion-exclusion principle incorporates the approximate indicator properly.

\subsection{Complexity}
\label{appdx:subsec:DqEHVI:complexity}

Recall from Section~\ref{subsec:DqEHVI:complexity} that, given posterior samples, the time complexity on a single-threaded machine is $T_1 = O(MNK(2^q-1))$.  
The space complexity required for maximum parallelism is also is $T_1$ (ignoring the space required by the models), which does limit scalability to larger $M$ and $q$, but difficulty scaling to large $M$ is a known limitaiton of \EHVI{} \citep{yang2019}. To reduce memory load, rectangles could be materialized and processed in chunks at the cost of additional runtime. In addition, our implementation of \qEHVI{} uses the box decomposition algorithm from \citet{couckuyt12}, but we emphasize \qEHVI{} is agnostic to the choice of partitioning algorithm and using a more efficient partitioning algorithm (e.g. \citep{yang2019, DACHERT2017, LACOUR2017347}) may significantly improve memory footprint on GPU and enable larger using $q$ in many scenarios.


\section{Error Bound on Sequential Greedy Approximation}
\label{sec:seq_greedy_details}
If the acquisition function $\mathcal L(\xcand)$ is a normalized, monotone, submodular set function (where submodular means that the increase in $\mathcal L(\xcand)$ is non-increasing as elements are added to $\xcand$ and normalized means that $\mathcal L(\emptyset) = 0$), then the sequential greedy approximation of $\mathcal L$ enjoys regret of no more than $\frac{1}{e}\mathcal L^*$, where $\mathcal L^*$ is the optima of $\mathcal L$ \citep{Fisher1978}. We have $\alpha_{\text{\qEHVI{}}}(\xcand) = \mathcal L(\xcand) = \mathbb E_{\bm f} \big(\HVI\big[\bm f(\xcand)\big]\big)$. Since \HVI{} is a submodular set function  \citep{friedrich14} and the expectation of a stochastic submodular function is also submodular \citep{asadpour}, $\alpha_{\text{\qEHVI{}}}(\xcand)$ is also submodular and therefore its sequential greedy approximation enjoys regret of no more than $\frac{1}{e}\mathcal \alpha_{\text{\qEHVI{}}}^*$. Using the result from \citet{wilson2018maxbo}, the MC-based approximation $\hat{\alpha}_{\text{\qEHVI{}}}(\xcand) = \sum_{t=1}^{N}\HVI\big[\bm f_t(\xcand)\big]$
also enjoys the same regret bound since \HVI{} is a normalized submodular set function.\footnote{As noted in \citet{wilson2018maxbo}, submodularity technically requires the search space $\mathcal X$ to be finite, whereas in BO, it will typically be infinite. \citet{wilson2018maxbo} note that in similar scenarios, submodularity has been extended to infinite sets $\mathcal X$ (e.g. \citet{srinivas}).}

\section{Convergence Results}
\label{appdx:sec:Convergence}

For the purpose of stating our convergence results, we recall some concepts and notation from~\citet{balandat2020botorch}. First, consider a sample $\{\bm f_t(\bm x_1)\}_{i=1}^q$ from the multi-output posterior of the GP surrogate model. Let $\bm x \in \mathbb{R}^{qd}$ be the stacked set of candidates $\xcand$ and let $\bm f_t(\bm x) := [f_t(\bm x_1)^T, \dotsc, f_t(\bm x_q)^T]^T$ be the stacked set of corresponding objective vectors. It is well known that, using the reparameterization trick, we can write 
\begin{align}
    \bm f_t(\bm x) = \mu(\bm x) + L(\bm x) \epsilon_t,
\end{align}
where $\mu: \mathbb{R}^{qd} \rightarrow \mathbb{R}^{qM}$ is the mean function of the multi-output GP, $L(\bm x) \in \mathbb{R}^{qM \times qM}$ is a root decomposition (typically the Cholesky decomposition) of the multi-output GP's posterior covariance $\Sigma(\bm x) \in \mathbb{R}^{qM \times qM}$, and $\epsilon_t \in \mathbb{R}^{qM}$ with $\epsilon_t \sim \mathcal{N}(0, I_{qM})$.


For $\bm x \in \mathcal X$, consider the MC-approximation $\hataqEHVI^N(\bm x)$ from \eqref{eqn:DqEHVI:ComputeEHVI:qehvi}. Denote by $\nabla_{\bm x}\hataqEHVI^N(\bm x)$ the gradient of $\hataqEHVI^N(\bm x)$, obtained by averaging the gradients on the sample-level:
\begin{align}
\label{appdx:eq:Convergence:SampleGradient}
    \nabla_{\bm x}\hataqEHVI^N(\bm x) := \frac{1}{N} \sum_{t=1}^{N} \nabla_{\bm x}\HVI(\{f_t(\bm x_i)\}_{i=1}^q)
\end{align} 

Let $\aqEHVI^* := \max_{\bm x \in \mathcal X} \aqEHVI(\bm x)$ denote the maximum of the true acquisition function \qEHVI{}, and let $\mathcal{X}^* := \argmax_{\bm x \in \mathcal X} \aqEHVI(\bm x)$ denote the set of associated maximizers. 
    
\begin{theorem}
\label{thm:Convergence:SAA}
    Suppose that $\mathcal X$ is compact and that $f$ has a Multi-Output Gaussian Process prior with continuously differentiable mean and covariance functions. If the base samples $\{\epsilon_t\}_{t=1}^N$ are drawn i.i.d. from $\mathcal N(0,I_{qM})$, and if $\hat{\bm x}^*_N \in \argmax_{\bm x \in \mathcal X} \hataqEHVI^N(\bm x)$, then 
    \begin{enumerate}[label={(\arabic*)}]
        \item $\aqEHVI(\hat{\bm x}^*_N) \rightarrow \aqEHVI^*$ a.s.
        \item $\textnormal{dist}(\hat{\bm x}_{\!N}^*, \mathcal{X}^*) \rightarrow 0$ a.s.
    \end{enumerate}
\end{theorem}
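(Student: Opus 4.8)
The plan is to cast this as a standard Sample Average Approximation (SAA) consistency result: first establish that the sample-path objective $\hataqEHVI^N$ converges \emph{uniformly} over the compact feasible set $\mathcal X$ to the true acquisition function $\aqEHVI$ almost surely, and then invoke the classical argument that uniform convergence together with compactness transfers convergence to the optimal values and to the maximizers. Concretely, I would verify the hypotheses of a uniform law of large numbers (ULLN) for the random field $\bm x \mapsto \HVI(\{\bm f_t(\bm x_i)\}_{i=1}^q)$ and then apply the consistency theorem for SAA estimators used in \citet{balandat2020botorch} (following the stochastic-programming results of Shapiro et al.).

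First, using the reparameterization $\bm f_t(\bm x) = \mu(\bm x) + L(\bm x)\epsilon_t$, I would show that for each fixed sample $\epsilon_t$ the map $\bm x \mapsto \HVI(\bm f_t(\bm x))$ is continuous on $\mathcal X$. This holds because $\mu$ is continuous under the assumed smoothness of the GP mean, a continuous root decomposition $L$ always exists (e.g. the matrix square root is continuous on the PSD cone, covering degenerate configurations where $\Sigma(\bm x)$ loses rank), and, from \eqref{eqn:DqEHVI:ComputeEHVI:qehvi}, the map $\bm f \mapsto \HVI(\bm f)$ is a finite composition of continuous operations (component-wise minima, the rectified differences $[\cdot]_+$, products, and the signed inclusion--exclusion sums) over the \emph{fixed} box decomposition $\{S_k\}_{k=1}^K$ determined by the data; hence it is continuous, indeed locally Lipschitz, in $\bm f$. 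Measurability in $\epsilon_t$ for each fixed $\bm x$ is immediate, so the integrand is a Carathéodory function.

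Second, I would produce an integrable envelope. Since $\mathcal X$ is compact and $\mu, L$ are continuous, there are constants $C_1, C_2$ with $\|\bm f_t(\bm x)\| \le C_1 + C_2 \|\epsilon_t\|$ uniformly over $\bm x \in \mathcal X$. By subadditivity of the Lebesgue measure, the joint HVI is bounded by the sum of the individual ones, each of which is at most the volume of the box from $\bm r$ to $\bm f(\bm x_i)$, namely $\prod_{m=1}^M [f^{(m)}(\bm x_i) - r^{(m)}]_+$, so $\HVI(\{\bm f_t(\bm x_i)\}_{i=1}^q) \le g(\epsilon_t) := q\,C(1 + C_1 + C_2\|\epsilon_t\|)^M$ for a constant $C$ independent of $\bm x$. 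Because $\epsilon_t$ is Gaussian and thus has finite moments of all orders, $\mathbb E[g(\epsilon)] < \infty$. Continuity of the Carathéodory integrand together with this dominating function lets me invoke the ULLN for continuous random functions with an integrable envelope, yielding $\sup_{\bm x \in \mathcal X} |\hataqEHVI^N(\bm x) - \aqEHVI(\bm x)| \to 0$ a.s.\ and, as a by-product, continuity of $\aqEHVI$ on $\mathcal X$.

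Finally, with $\aqEHVI$ continuous on the compact $\mathcal X$ and $\hataqEHVI^N \to \aqEHVI$ uniformly a.s., I would invoke the standard SAA consistency argument: uniform convergence forces the optimal values $\hataqEHVI^N(\hat{\bm x}_N^*)$ to converge to $\aqEHVI^*$, and any limit point of a maximizing sequence $\hat{\bm x}_N^*$ must lie in $\mathcal X^*$, which gives both $\aqEHVI(\hat{\bm x}_N^*) \to \aqEHVI^*$ a.s.\ and $\textnormal{dist}(\hat{\bm x}_N^*, \mathcal X^*) \to 0$ a.s. Since the argument uses an arbitrary root decomposition of $\Sigma(\bm x)$, it is insensitive to inter-output correlation, giving the claimed independence from the assumption of independent GPs. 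I expect the main obstacle to be the ULLN step itself: upgrading the pointwise strong law to \emph{uniform} a.s.\ convergence requires carefully combining the Carathéodory continuity, the uniform-in-$\bm x$ integrable envelope, and the interchange of the almost-sure quantifier over $\{\epsilon_t\}$. Note that the nonsmoothness of $\min$ and $[\cdot]_+$ is \emph{not} an obstacle here, since the statement concerns only values and maximizers rather than gradients; the concluding consistency step is then a direct citation.
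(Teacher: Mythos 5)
Your proposal is correct and reaches the theorem by the same high-level SAA route as the paper (uniform a.s.\ convergence of the sample-average objective on the compact $\mathcal X$, then the standard transfer to optimal values and maximizer sets), but it verifies a different sufficient condition along the way. The paper checks the hypothesis of \citet[Theorem 3]{balandat2020botorch} directly, namely a Lipschitz-in-$\bm x$ bound $|A(\bm x,\epsilon)-A(\bm y,\epsilon)|\le \ell(\epsilon)\|\bm x-\bm y\|$ with integrable modulus $\ell(\epsilon)=C\sum_{m=1}^M\|\epsilon\|^m$; establishing this requires the telescoping/product-rule bookkeeping over the factors $\tilde a_{kmjX_j}$, bounding both each factor and each pairwise difference via the uniform boundedness of $\mu$, $L$, and their gradients. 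You instead verify the weaker Carath\'eodory conditions (continuity in $\bm x$ for a.e.\ $\epsilon$ plus an integrable envelope built from subadditivity of $\lambda_M$ and the box bound $\prod_m[f^{(m)}(\bm x_i)-r^{(m)}]_+$) and invoke a generic ULLN for continuous random functions on compact sets. Your route needs only continuity of $\mu$ and $L$ (and you correctly flag that a continuous root decomposition exists even at rank-deficient covariances, a point the paper glosses over), at the cost of not producing the quantitative Lipschitz modulus that the cited theorem is built around and that would be needed for any follow-up rate result; the paper's route does more work but yields that stronger structural estimate essentially for free. Both arguments are sound for the stated almost-sure consistency claims.
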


In addition to the almost sure convergence in Theorem~\ref{thm:Convergence:SAA}, deriving a result on the convergence rate of the optimizer, similar to the one obtained in~\citep{balandat2020botorch}, should be possible. We leave this to future work. 
Moreover, the results in Theorem~\ref{thm:Convergence:SAA} can also be extended to the situation in which the base samples are generated using a particular class of randomized QMC methods (see similar results in \citep{balandat2020botorch}).

\begin{proof}
We consider the setting from~\citet[Section D.5]{balandat2020botorch}. Let $\epsilon ~\sim \mathcal{N}(0, I_{qM})$,
so that we can write the posterior over outcome $m$ at $\bm x$ as the random variable $f^{(m)}(\bm x, \epsilon) = S_{\{i_j, m\}}(\mu(\bm x) + L(\bm x)\epsilon)$, 
where $\mu(\bm x)$ and $L(\bm x)$ are the (vector-valued) posterior mean and the Cholesky factor of posterior covariance, respectively, and $S_{\{i_j, m\}}$ is an appropriate selection matrix (in particular, $\|S_{\{i_j, m\}}\|_{\infty} \leq 1$ for all $i_j$ and $m$). 
Let
\begin{align*}
    A(\bm x, \epsilon) = \sum_{k=1}^{K}\sum_{j=1}^q \sum_{X_j \in \mathcal X_j} (-1)^{j+1} \prod_{m=1}^M\big[z_{k, X_j}^{(m)}(\epsilon) - l_k^{(m)}\big]_{+}
\end{align*}
where
\begin{align*}
    z_{k, X_j}^{(m)}(\epsilon) = \min \big[u_k^{(m)}, f^{(m)}(\bm x_{i_1}, \epsilon), \ldots, f^{(m)}(\bm x_{i_j}, \epsilon)\big] 
\end{align*}
and $X_j = \{\bm x_{i_1},\ldots, \bm x_{i_j}\}$.
Following \citep[Theorem 3]{balandat2020botorch}, we need to show that there exists an integrable function $\ell:\mathbb{R}^{q \times M}\mapsto \mathbb{R}$ such that for almost every $\epsilon$ and all $\bm x, \bm y \subseteq \mathcal X, \bm x, \bm y \in \mathbb{R}^{q \times d}$, 
\begin{align}
    |A(\bm x, \epsilon) - A(\bm y, \epsilon)| \leq \ell(\epsilon) \|\bm x - \bm y\|.
\end{align}
Let us define $$\tilde{a}_{kmjX_j}(\bm x, \epsilon) := \Bigl[\min \big[u_k^{(m)}, f^{(m)}(\bm x_{i_1}, \epsilon), \ldots, f^{(m)}(\bm x_{i_j}, \epsilon)\big] - l_k^{(m)}\Bigr]_{+}.$$
Linearity implies that it suffices to show that this condition holds for
\begin{align}
    \tilde{A}(\bm x, \epsilon) &:=  \prod_{m=1}^M \tilde{a}_{kmjX_j}(\bm x, \epsilon)
    =\prod_{m=1}^M\Bigl[\min \big[u_k^{(m)}, f^{(m)}(\bm x_{i_1}, \epsilon), \ldots, f^{(m)}(\bm x_{i_j}, \epsilon)\big] - l_k^{(m)}\Bigr]_{+}
\end{align}
for all~$k$, $j$, and $X_j$. 
Observe that 
\begin{align*}
    \tilde{a}_{kmjX_j}(\bm x, \epsilon)
    &\leq \Bigl| \min \big[u_k^{(m)}, f^{(m)}(\bm x_{i_1}, \epsilon), \ldots, f^{(m)}(\bm x_{i_j}, \epsilon)\big] - l_k^{(m)}\Bigr| \\
    &\leq | l_k^{(m)} | + \Bigl| \min \big[ u_k^{(m)}, f^{(m)}(\bm x_{i_1}, \epsilon), \ldots, f^{(m)}(\bm x_{i_j}, \epsilon)\big]  \Bigr|.
\end{align*}
Note that if $u_k^{(m)} =\infty$, then $\min[u_k^{(m)}, f(\bm x, \epsilon)_{i_1}^{(m)}, ... f^{(m)}(\bm x_{i_j}, \epsilon)] = \min[f^{(m)}(\bm x_{i_1}, \epsilon), ... f^{(m)}(\bm x_{i_j}, \epsilon)]$. If $u_k^{(m)} < \infty$, then $\min[u_k^{(m)}, f^{(m)}(\bm x_{i_1}, \epsilon), ... f^{(m)}(\bm x_{i_j}, \epsilon)] <\bigl| \min[f^{(m)}(\bm x_{i_1}, \epsilon), ... f^{(m)}(\bm x_{i_j}, \epsilon)]\bigr| +\bigl| u_k^{(m)}\bigr|$. Let $w_k^{(m)} = u_k^{(m)}$ if $u_k^{(m)} < \infty$ and 0 otherwise. Then
\begin{align*}
    \tilde{a}_{kmjX_j}(\bm x, \epsilon)
    &\leq | l_k^{(m)} | + | w_k^{(m)} | + \bigl| \min \big[ f^{(m)}(\bm x_{i_1}, \epsilon), \ldots, f^{(m)}(\bm x_{i_j}, \epsilon)\big]  \bigr| \\
    &\leq | l_k^{(m)} |+ | w_k^{(m)} | + \sum_{i_1, \dotsc, i_j} \bigl| f^{(m)}(\bm x_{i_j}, \epsilon) \bigr|.
\end{align*}
We therefore have that 
\begin{align*}
    |\tilde{a}_{kmjX_j}(\bm x, \epsilon)| \leq | l_k^{(m)} | + |w_k^{(m)} | + |X_j| \bigl( \| \mu^{(m)}(\bm x)\| + \|L^{(m)}(\bm x)\| \|\epsilon\| \bigr)
\end{align*}
for all $k, m, j, X_j$, where $|X_j|$ denotes the cardinality of the set $X_j$. 
Under our assumptions (compactness of $\mathcal X$, continuous differentiability of mean and covariance function), both $\mu(\bm x)$ and $L(\bm x)$, as well as their respective gradients w.r.t. $\bm x$, are uniformly bounded. In particular there exist $C_1, C_2 < \infty$ such that
\begin{align*}
    |\tilde{a}_{kmjX_j}(\bm x, \epsilon)| &\leq C_1 + C_2 \|\epsilon\| 
\end{align*}
for all $k, m, j, X_j$.

%
%
Dropping indices $k, j, X_j$ for simplicity, observe that
\begin{subequations}
\label{appdx:eq:Convergence:SplitProduct}
\begin{align}
    \bigl|\tilde{A}(\bm x, \epsilon) - \tilde{A}(\bm y, \epsilon)\bigr| 
    &= \bigl|\tilde{a}_{1}(\bm x, \epsilon) \tilde{a}_{2}(\bm x, \epsilon) - \tilde{a}_{1}(\bm y, \epsilon) \tilde{a}_{2}(\bm y, \epsilon)\bigr| \\
    &= \bigl|\tilde{a}_{1}(\bm x, \epsilon) \bigl(\tilde{a}_{2}(\bm x, \epsilon) - \tilde{a}_{2}(\bm y, \epsilon)\bigr) + \tilde{a}_{2}(\bm y, \epsilon) \bigl(\tilde{a}_{1}(\bm x, \epsilon) - \tilde{a}_{1}(\bm y, \epsilon)\bigr) \bigr| \\
    &\leq |\tilde{a}_{1}(\bm x, \epsilon)| \bigl|\tilde{a}_{2}(\bm x, \epsilon) - \tilde{a}_{2}(\bm y, \epsilon)\bigr| + |\tilde{a}_{2}(\bm y, \epsilon)| \bigl|\tilde{a}_{1}(\bm x, \epsilon) - \tilde{a}_{1}(\bm y, \epsilon)\bigr|.
\end{align}
\end{subequations}
Furthermore, 
\begin{align*}
    |\tilde{a}_{kmjX_j}(\bm x, \epsilon) - \tilde{a}_{kmjX_j}(\bm y, \epsilon)| &\leq \sum_{i_1, \dotsc, i_j} \bigl| S_{\{i_j, m\}}(\mu(\bm x) + L(\bm x)\epsilon) - S_{\{i_j, m\}}(\mu(\bm y) + L(\bm y)\epsilon) \bigr| \\
    &\leq |X_j| \Bigl( \|\mu(\bm x) - \mu(\bm y) \| + \|L(\bm x) - L(\bm y)\| \|\epsilon\| \Bigr).
\end{align*}
Since $\mu$ and $L$ have uniformly bounded gradients, they are Lipschitz. Therefore, there exist $C_3, C_4 < \infty$ such that 
\begin{align*}
    |\tilde{a}_{kmjX_j}(\bm x, \epsilon) - \tilde{a}_{kmjX_j}(\bm y, \epsilon)| &\leq (C_3 + C_4 \|\epsilon\|) \|\bm x - \bm y \|
\end{align*}
for all $\bm x, \bm y, k, m, j, X_j$. Plugging this into~\eqref{appdx:eq:Convergence:SplitProduct} above, we find that
\begin{align*}
    \bigl|\tilde{A}(\bm x, \epsilon) - \tilde{A}(\bm y, \epsilon)\bigr| 
    &\leq 2 \Bigl(C_1C_3 + (C_1C_4 + C_2C_3) \|\epsilon\| + C_2C_4 \|\epsilon\|^2 \Bigr) \|\bm x - \bm y \|
\end{align*}
for all $\bm x, \bm y$ and $\epsilon$. For $M>2$ we generalize the idea from~\eqref{appdx:eq:Convergence:SplitProduct}, making sure to telescope the respective expressions. It is not hard to see that with this, there exist $C < \infty$ such that 
\begin{align*}
    \bigl|\tilde{A}(\bm x, \epsilon) - \tilde{A}(\bm y, \epsilon)\bigr| 
    &\leq C \sum_{m=1}^M \|\epsilon\|^m \|\bm x - \bm y \|
\end{align*}
Letting $\ell(\epsilon) := C \sum_{m=1}^M \|\epsilon\|^m$, we observe that $\ell(\epsilon)$ is integrable (since all absolute moments exist for the Normal distribution).

The result now follows from in~\citet[Theorem 3]{balandat2020botorch}.
\end{proof}

Besides the above convergence result, we can also show that the sample average gradient of the MC approximation of \qEHVI{} is an unbiased estimator of the true gradient of \qEHVI{}:

\begin{proposition}
\label{prop:eq:Convergence:UnbiasedGradientEstimate}
    Suppose that the GP mean and covariance function are continuously differentiable. Suppose further that the candidate set $\bm x$ has no duplicates, and that the sample-level gradients $\nabla_{\bm x}\HVI(\{f_t(\bm x_i)\}_{i=1}^q)$ are obtained using the reparameterization trick as in \citep{balandat2020botorch}. Then
    \begin{align}
    \mathbb{E}\bigl[\nabla_{\bm x}\hataqEHVI^N(\bm x)\bigr] = \nabla_{\bm x}\aqEHVI(\bm x),
    \end{align}
    that is, the averaged sample-level gradient is an unbiased estimate of the gradient of the true acquisition function.
\end{proposition}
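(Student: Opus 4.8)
The plan is to recognize this as a statement about interchanging differentiation and expectation, and to verify the hypotheses of a standard differentiation-under-the-integral-sign theorem (as used in \citet{balandat2020botorch}). First I would use the reparameterization trick to write, for $\epsilon \sim \mathcal N(0, I_{qM})$,
\begin{align*}
\aqEHVI(\bm x) = \mathbb{E}_\epsilon\bigl[A(\bm x, \epsilon)\bigr], \qquad \hataqEHVI^N(\bm x) = \frac{1}{N}\sum_{t=1}^N A(\bm x, \epsilon_t),
\end{align*}
where $A(\bm x, \epsilon) = \HVI(\{f_t(\bm x_i)\}_{i=1}^q)$ is the sample-level integrand from the proof of Theorem~\ref{thm:Convergence:SAA}, with each $f^{(m)}(\bm x_i, \epsilon)$ expressed through $\mu(\bm x) + L(\bm x)\epsilon$. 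By linearity of expectation and because the $\epsilon_t$ are i.i.d., $\mathbb{E}[\nabla_{\bm x}\hataqEHVI^N(\bm x)] = \mathbb{E}_\epsilon[\nabla_{\bm x} A(\bm x, \epsilon)]$, so it suffices to establish the single-sample identity $\mathbb{E}_\epsilon[\nabla_{\bm x} A(\bm x, \epsilon)] = \nabla_{\bm x}\mathbb{E}_\epsilon[A(\bm x, \epsilon)]$.

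Next I would verify the two conditions needed to pull the gradient inside the expectation. The first is almost-everywhere differentiability: for fixed $\bm x$ with no duplicate candidates, the map $\bm x \mapsto A(\bm x, \epsilon)$ fails to be differentiable only where one of the $\min$ operations defining $z_{k, X_j}^{(m)}(\epsilon)$ has a tie, or where $z_{k, X_j}^{(m)}(\epsilon) = l_k^{(m)}$ (a kink of $[\cdot]_{+}$). Because $\bm f(\bm x, \epsilon) = \mu(\bm x) + L(\bm x)\epsilon$ is Gaussian and the relevant coordinate differences (e.g. $f^{(m)}(\bm x_{i_a}, \epsilon) - f^{(m)}(\bm x_{i_b}, \epsilon)$) are non-degenerate whenever the candidates are distinct, each such tie or kink event is a measure-zero set under the law of $\epsilon$; a union over the finitely many $(k, j, X_j, m)$ remains measure zero, so $A(\bm x, \cdot)$ is differentiable in $\bm x$ for $\epsilon$-a.e.\ $\epsilon$. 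The second condition is a local integrable envelope for the gradient: the Lipschitz bound derived in the proof of Theorem~\ref{thm:Convergence:SAA}, namely $|A(\bm x, \epsilon) - A(\bm y, \epsilon)| \leq \ell(\epsilon)\|\bm x - \bm y\|$ with $\ell(\epsilon) = C\sum_{m=1}^M \|\epsilon\|^m$, shows that wherever $\nabla_{\bm x} A(\bm x, \epsilon)$ exists it satisfies $\|\nabla_{\bm x} A(\bm x, \epsilon)\| \leq \ell(\epsilon)$, and $\ell$ is integrable since all absolute moments of the Gaussian are finite.

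With these two facts in hand, the dominated-convergence form of the Leibniz rule (equivalently, the interchange theorem invoked in \citet{balandat2020botorch}) applies and yields $\nabla_{\bm x}\mathbb{E}_\epsilon[A(\bm x, \epsilon)] = \mathbb{E}_\epsilon[\nabla_{\bm x} A(\bm x, \epsilon)]$, which combined with the reduction above gives the claim. I expect the main obstacle to be the almost-everywhere-differentiability step: carefully arguing that the sub-differentiability of $\min$ and $[\cdot]_{+}$ does not obstruct the interchange, i.e., that the no-duplicates hypothesis guarantees (via non-degeneracy of the reparameterized Gaussian) that all tie and kink events are $\epsilon$-null. The envelope condition, by contrast, is essentially free, as it is inherited directly from the Lipschitz estimate already established for Theorem~\ref{thm:Convergence:SAA}.
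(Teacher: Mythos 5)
Your proposal is correct and follows essentially the same route as the paper: establish almost-everywhere differentiability of the sample-level integrand (using the no-duplicates hypothesis so that the non-degenerate reparameterized Gaussian puts measure zero on the kink sets) and an integrable envelope for the gradient, then interchange differentiation and expectation. The paper packages this via the conditions of Glasserman's interchange theorem (re-deriving the polynomial-in-$\|\epsilon\|$ bound on the partial derivatives by the product rule) rather than reusing the Lipschitz constant $\ell(\epsilon)$ from Theorem~\ref{thm:Convergence:SAA} as you do, but the substance is identical.
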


\begin{proof}
    This proof follows the arguments \citet[Theorem 1]{wang2016parallel}, which leverages \citet[Theorem 1]{glasserman88}. We verify the conditions of \citet[Theorem 1]{glasserman88} below. Using the arguments from \citep{balandat2020botorch}, we know that, under the assumption of differentiable mean and covariance functions, the samples $\bm{f}_t(\bm x)$ 
    are continuously differentiable w.r.t. $\bm x$ (since there are no duplicates, and thus the covariance $\Sigma(\bm x)$ is non-singular). Hence, \citet[A1]{glasserman88} is satisfied.
    Furthermore, it is easy to see from~\eqref{eq:DqEHVI:HVI} that $\HVI{}(\{\bm f(\bm x_i)\}_{i=1}^q)$ is \textit{a.s.} continuous and is differentiable w.r.t. $\bm{f}_t(\bm x)$ on $\mathbb{R}^M$, except on the edges of the hyper-rectangle decomposition $\{S_k\}_{k=1}^K$ of the non-dominated space, which satisfies \citep[A3]{glasserman88}. The set of points defined by the union of these edges clearly has measure zero under any non-degenerate (non-singular covariance) GP posterior on $\mathbb{R}^M$, so \citet[A4]{glasserman88} holds. Therefore \citet[Lemma 2]{glasserman88} holds, so $\HVI{}(\{\bm f(\bm x_i)\}_{i=1}^q)$ is \textit{a.s.} piece-wise differentiable w.r.t. $\bm x$.
    
    Lastly, we need to show that the result in \citet[Lemma 3]{glasserman88} holds: 
$$\mathbb E\bigg[\sup_{x_{ci} \notin \tilde{D}}\vert A'(\bm x, \epsilon) \vert \bigg] < \infty.$$
As in \citet[Theorem 1]{wang2016parallel}, we fix $\bm x$ except for $x_{ci}$ where $x_{ci}$ is the $c^\text{th}$ component of the $i^\text{th}$ point,
We need to show that $\mathbb E\bigl[\sup_{x_{ci} \notin \tilde{D}}\vert A'(\bm x, \epsilon) \vert \bigr] < \infty$. By linearity, it suffices to show that $\mathbb E\bigl[\sup_{x_{ci} \notin \tilde{D}}\vert \tilde{A}'(\bm x, \epsilon) \vert \bigr] < \infty$.
We have
\begin{align*}
\mathbb E\bigg[\sup_{x_{ci} \notin \tilde{D}}\vert \tilde{A}'(\bm x, \epsilon) \vert \bigg]
&=\mathbb E\bigg[\sup_{x_{ci} \notin \tilde{D}}\bigg\vert \frac{\partial \tilde{A}(\bm x, \epsilon)}{\partial x_{ci}}\bigg\vert\bigg].
\end{align*}

Consider the $M=2$ case. We have
$\tilde{A}(\bm x, \epsilon) = a_1(\bm x, \epsilon)a_2(\bm x, \epsilon)$, where

$$a_m(\bm x, \epsilon) = \Bigl[\min \big[u_k^{(m)}, f^{(m)}(\bm x_{i_1}, \epsilon), \ldots, f^{(m)}(\bm x_{i_j}, \epsilon)\big] - l_k^{(m)}\Bigr]_{+}.$$ 

The partial derivative of $\tilde{A}(\bm x, \epsilon)$ with respect to $x_{ci}$ is
\begin{align*}
    \frac{\partial \tilde{A}(\bm x, \epsilon)}{\partial x_{ci}}  &=\frac{\partial a_1(\bm x, \epsilon)}{\partial x_{ci}}a_2(\bm x, \epsilon) + a_1(\bm x, \epsilon)\frac{\partial a_2(\bm x, \epsilon)}{\partial x_{ci}},
\end{align*}
and therefore
\begin{align*}
    \Bigl| \frac{\partial \tilde{A}(\bm x, \epsilon)}{\partial x_{ci}}\Bigr|  
    &\leq \Bigl|\frac{\partial a_1(\bm x, \epsilon)}{\partial x_{ci}}\Bigr| \cdot \Bigl|a_2(\bm x, \epsilon)\Bigr|+  \Bigl|a_1(\bm x, \epsilon)\Bigr|\cdot\Bigl|\frac{\partial a_2(\bm x, \epsilon)}{\partial x_{ci}}\Bigr|
\end{align*}

Since we are only concerned with $x_{ci} \notin \tilde{D}$, 

$$a_m(\bm x, \epsilon) = \Bigl[\min \big[ f^{(m)}(\bm x_{i_1}, \epsilon), \ldots, f^{(m)}(\bm x_{i_j}, \epsilon)\big] - l_k^{(1)}\Bigr]_{+}.$$

As in the proof of Theorem~\ref{thm:Convergence:SAA}, we write 
the posterior over outcome $m$ at $\bm x$ as the random variable $f^{(m)}(\bm x, \epsilon) = S_{\{i_j, m\}}(\bm \mu(\bm x) + L(\bm x) \epsilon)$, where $\epsilon ~\sim \mathcal{N}(0, I_{qM})$ and $S_{\{i_j, m\}}$ is an appropriate selection matrix. 
With this,
\begin{align*}
a_m(\bm x, \epsilon) &=
    \Bigl[\min \big[ S_{\{i_1, 1\}}\big(\mu(\bm x) + L(\bm x)\epsilon\big), \ldots, S_{\{i_j, 1\}}\big(\mu(\bm x) + L(\bm x)\epsilon\big)\big] - l_k^{(1)}\Bigr]_{+}.
\end{align*}

Since the interval $\mathcal X$ is compact and the mean, covariance, and Cholesky factor of the covariance $\mu(\bm x), C(\bm x), L(\bm x)$ are continuously differentiable, for all $m$ we have
$$\sup_{x_{ci}} \bigg\vert \frac{\partial  \mu^{(m)}(\bm x_a)}{\partial x_{ci}}\bigg\vert = \mu_a^{*, (m)} < \infty, \qquad \sup_{x_{ci}} \bigg\vert \frac{\partial L^{(m)}(\bm x)}{\partial x_{ci}}\bigg\vert = L_{ca}^{*, (m)}< \infty.$$

Let $\mu^{(m)}_{**} = \max_{a} \mu_a^{*, (m)}$, $L^{(m)}_{**} = \max_{a, b} L^{*, (m)}_{ab}(\bm x)$, where $L^{(m)}_{ab}$  is the element at row $a$, column $b$ in $L^{(m)}$, the Cholesky factor for outcome $m$. Let $\epsilon^{(m)} \in \mathbb{R}^q$ denote the vector of i.i.d. $\mathcal N(0,1)$ samples corresponding to outcome $m$. Then we have
\begin{align*}
    \biggl|\frac{\partial }{\partial x_{ci}}\Bigl[&
    [\min \big[ S_{\{i_1, 1\}}\big(\mu(\bm x) + L(\bm x)\epsilon\big), \ldots, S_{\{i_j, 1\}}\big(\mu(\bm x) + L(\bm x)\epsilon\big)\big] - l_k^{(1)}\Bigr]_{+} \biggr|\\
    &\leq\Bigl|\Bigl[ \mu^{(m)}_{**} + L^{(m)}_{**}||\epsilon^{(m)}||_1 - l_k^{(m)}\Bigr]_{+}\Bigr|\\
    &\leq \Bigl|\mu^{(m)}_{**} + L^{(m)}_{**}||\epsilon^{(m)}||_1\Bigr| + \Bigl|l_k^{(m)}\Bigr|.
\end{align*}
Under our assumptions (compactness of $\mathcal{X}$, continuous differentiability of mean and covariance function) both $\bm \mu(\bm x)$ and $L(\bm x)$, as well as their respective gradients, are uniformly bounded. In particular there exist $C_1^{(m)}, C_2^{(m)} < \infty$ such that
\begin{align*}
    \bigl| S_{\{a, m\}}\big(\mu(\bm x) + L(\bm x)\epsilon\big) - l_k^{(m)} \bigr|
    &\leq C_1^{(m)} + C_2^{(m)}||\epsilon^{(m)}||_1
\end{align*} for all $a=i_1, ..., i_j$.

Hence,
\begin{align*}
    \biggl|\frac{\partial \tilde{A}(\bm x, \epsilon)}{\partial x_{ci}}\biggr|  &\leq \Biggl[\Bigl|\mu^{(1)}_{**} + C^{(1)}_{**}||\epsilon^{(1)}||_1\Bigr| + \Bigl|l_k^{(1)}\Bigr|\Biggr]\Biggl[C_1^{(2)} + C_2^{(2)}||\epsilon^{(2)}||_1\Biggr]\\ &+\Biggl[C_1^{(1)} + C_2^{(1)}||\epsilon^{(1)}||_1\Biggr] \Biggl[\Bigl|\mu^{(2)}_{**} + C^{(2)}_{**}||\epsilon^{(2)}||_1\Bigr| + \Bigl|l_k^{(2)}\Bigr|\Biggr]
\end{align*}
%
Since $\epsilon$ is absolutely integrable,
$$\mathbb E\biggl( \biggl|\frac{\partial \tilde{A}(\bm x, \epsilon)}{\partial x_{ci}} \biggr| \biggr) < \infty.$$
Hence, $\mathbb E\bigl[\sup_{x_{ci} \notin \tilde{D}}\vert A'(\bm x, \epsilon) \vert \bigr] < \infty$. This can be extended to $M>2$ in the same manner using the product rule to obtain
\begin{align*}
    \mathbb E\bigg(\frac{\partial \tilde{A}(\bm x, \epsilon)}{\partial x_{ci}}\bigg) &\leq \sum_{m=1}^M \Bigg(\Biggl[\Bigl|\mu^{(m)}_{**} + C^{(m)}_{**} \mathbb E[||\epsilon^{(m)}||_1]\Bigr| + \Bigl|l_k^{(1)}\Bigr|\Biggr]\prod_{n=1,n\neq m}^M\Biggl[C_1^{(n)} + C_2^{(n)}\mathbb E[||\epsilon^{(n)}||_1]\Biggr]\Bigg)\\
    &\leq \sum_{m=1}^M \Bigg(\Biggl[\Bigl|\mu^{(m)}_{**} + \frac{\pi}{2}qC^{(m)}_{**}\Bigr| + \Bigl|l_k^{(1)}\Bigr|\Biggr]\prod_{n=1,n\neq m}^M\Biggl[C_1^{(n)} + \frac{\pi}{2}qC_2^{(n)}]\Biggr]\Bigg).
\end{align*}
Hence, $\mathbb E\bigl[\sup_{x_{ci} \notin \tilde{D}}\vert A'(\bm x, \epsilon) \vert \bigr] < \infty$ for $M \geq 2$ and \citet[Theorem 1]{glasserman88} holds.
\end{proof}

\clearpage
\section{Monte-Carlo Approximation}
\label{appdx:sec:MCApprox}
Figure \ref{fig:mc_gradient} shows the gradient of analytic \EHVI{} and the MC estimator \qEHVI{} on slice of a 3-objective problem. Even using only $N=32$ QMC samples, the average sample gradient has very low variance. Moreover, fixing the base samples also greatly reduces the variance without introducing bias.
\begin{figure}[ht]
    \centering
    \begin{subfigure}{\textwidth}
        \centering
        \includegraphics[width=\linewidth]{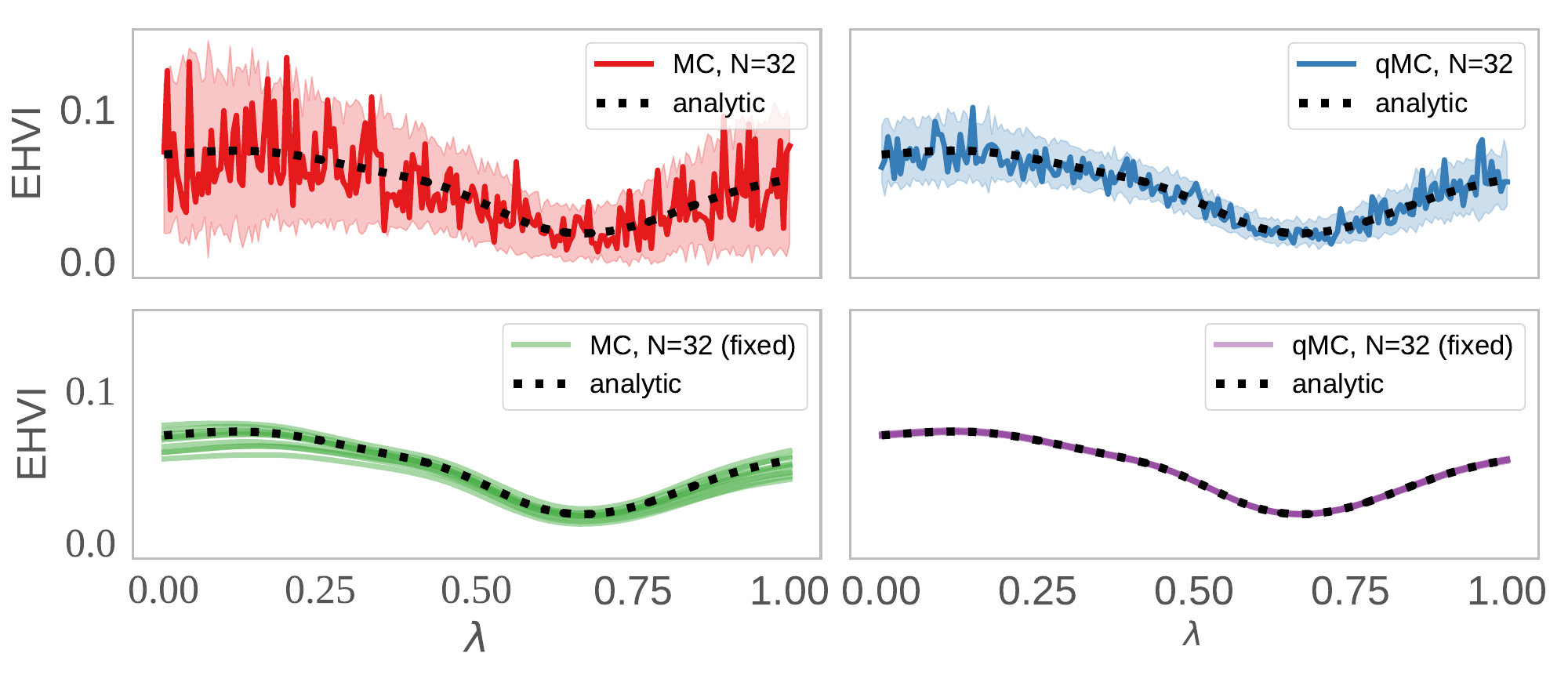}
    \subcaption{\label{fig:mc_acqf} A comparison of the analytic \EHVI{} acquisition function and the MC-based \qEHVI{} for $q=1$.
    }
    \end{subfigure} %
    \begin{subfigure}{\textwidth}    
        \centering
        \includegraphics[width=\linewidth]{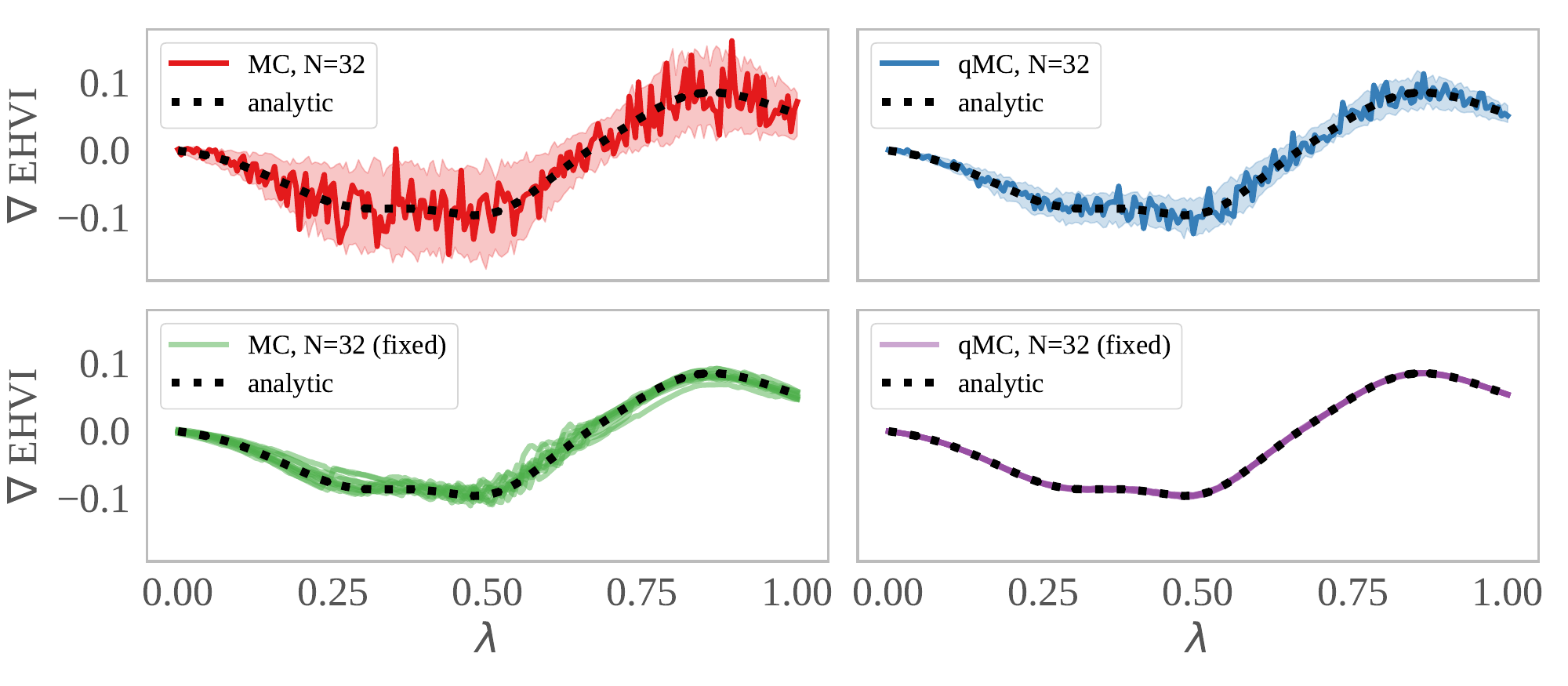} 
        \subcaption{\label{fig:mc_gradient}A comparison of the exact gradient of analytic \EHVI{} and the exact sample average gradient of the MC-based \qEHVI{} for $q=1$.}
    \end{subfigure}
    \caption{A comparison of (a) the analytic \EHVI{} and MC-based \qEHVI{} for $q=1$ and (b) a comparison of the exact gradient $\nabla \alpha_{\EHVI}$ of analytic \EHVI{} and average sample gradient of the MC-estimator $\nabla \hat{\alpha}_{\text{\qEHVI{}}}$ over a slice of the input space on a DTLZ2 problem ($q=1$, $M=3$, $d=6$) \citep{dtlz}.  $x^{(0)}$ is varied across $0 \leq \lambda \leq 1$, while $x^{(i)}$ for $1, ... D$ are held constant. In each of (a) and (b), the top row show \qEHVI{} where the (quasi-)standard normal base samples are resampled for each value of $x^{(0)}$. The solid line is one sample average (across (q)MC samples) and the shaded area is the mean plus 2 standard errors across 50 repetitions. The bottom row uses the same base samples for evaluating each test point and the sample average for each of 50 repetitions is plotted.}
\end{figure}
\FloatBarrier
\section{Experiment Details}
\label{sec:experiment_details}
\subsection{Algorithms}
\label{sec:algo_details}
For TS-TCH, we draw a sample from the joint posterior over a discrete set of $1000d$ points sampled from a scrambled Sobol sequence. For PESMO, we follow \cite{garridomerchn2020parallel} and use a Pareto set of size 10 for each sampled GP, which is optimized over a discrete set of $1000d$ points sampled from a scrambled Sobol sequence. The current Pareto front is approximated by optimizing the posterior means over a grid as is done in \citet{garrido2019predictive, garridomerchn2020parallel}. For SMS-EGO, we use the observed Pareto front. 
All acquisition functions are optimized with L-BFGS-B (with a maximum of 200 iterations); SMS-EGO \citep{smsego} and PESMO \citep{garrido2019predictive} use gradients approximated by finite differences and all other methods use exact gradients. For all methods, each outcome is modeled with an independent Gaussian process with a Matern $5/2$ ARD kernel. The methods implemented in Spearmint use a fully Bayesian treatment of the hyperparameters with 10 samples from posterior over the hyperparamters, and the methods implemented in BoTorch use maximum a posteriori estimates of the GP hyperparameters. All methods are initialized with $2(d+1)$ points from a scrambled Sobol sequence. \qParego{} and \qEHVI{} use $N=128$  QMC samples. 

\subsubsection{Reference point specification}
\label{appdx:sec:RefPoint}
There is a large body of literature on the effects of reference point specification \citep{Auger09, Ishibuchi11, ishibuchi18}. The hypervolume indicator is sensitive to specified the reference point: a reference point that is far away from the Pareto front will favor extreme points, where as reference point that is close to the Pareto front gives more weight to less extreme points \citep{ishibuchi18}. Sensitivity to the reference point is affects both the evaluation of different MO methods and the utility function for methods that rely \HV{}. In practice, a decision maker may be able to specify a reference point that satisfies their preference with domain knowledge. If a reference point is provided by the decision maker, previous work has suggested heuristics for choosing reference points for use in an algorithm's utility function \citep{Ishibuchi11, smsego}. We follow previous work \citep{yang2019, yang_emmerich2019} and assume that the reference point is known.

We also considered (but did not use in our experiments) a dynamic reference point strategy where at each BO iteration, the reference point is selected to be a point slightly worse than the nadir (component-wise minimum) point of the current observed Pareto front for computing the acquisition function: $\bm r = \bm y_\text{nadir} -  0.1\cdot \vert \bm y_\text{nadir}\vert$ where $\bm y_\text{nadir} = \big(\min_{y^{(1)} \in \mathcal D^{(1)}} y^{(1)}, \ldots, \min_{y^{(m)} \in \mathcal D^{(m)}} y^{(m)}\big)$. This reference point is used in SMS-EMOA in  \citet{Ishibuchi11}), and we find similar average performance (but higher variance) on problems to using a known reference point with continuous Pareto fronts. If the Pareto front is discontinuous, then it is possible not all sections of the Pareto front will be reached. 
  
\subsubsection{\qParego{}}
\label{appdx:sec:qparego}
Previous work has only considered unconstrained sequential optimization with ParEGO \citep{parego, bradford18} and ParEGO is often optimized with gradient-free methods \citep{smsego}. To the best of our knowledge, \qParego{} is the first to support parallel and constrained optimization. Moreover, we compute exact gradients via auto-differentiation for acquisition optimization. ParEGO is typically implemented by applying augmented Chebyshev scalarization and modeling the scalarized outcome \citep{parego}. However, recent work has shown that composite objectives offer improved optimization performance \citep{astudillo2019composite}. \qParego{} uses a MC-based Expected Improvement \citep{jones98} acquisition function, where the objectives are modeled independently and the augmented Chebyshev scalarization \citep{parego} is applied to the posterior samples as a composite objective. This approach enables the use of sequential greedy optimization of $q$ candidates with proper integration over the posterior at the pending points. Importantly, the sequential greedy approach allows for using different random scalarization weights for selecting each of the $q$ candidates. \qParego{} is extended to the constrained setting by weighting the EI by the probability of feasibility \citep{gardner2014constrained}. We estimate the probability of feasiblity using the posterior samples and approximate the indicator function with a sigmoid to maintain differentiablity as in constrained \qEHVI{}. \qParego{} is trivially extended to the noisy setting using Noisy Expected Improvement \citep{letham2019noisyei, balandat2020botorch}, but we use Expected Improvement in our experiments as all of the problems are noiseless.
\subsection{Benchmark Problems}
The details for the benchmark problems below assume minimization of all objectives. Table \ref{table:ref_points} provides the reference points used for all benchmark problems.
\label{sec:problem_details}
\begin{table*}[t!]
\centering
\caption{\label{table:ref_points} Reference points for all benchmark problems. Assuming minimization. In our benchmarks, equivalently maximize the negative objectives and multiply the reference points by -1.}
\begin{small}
\begin{sc}
\begin{tabular}{lc}
\toprule
Problem & Reference Point\\
\midrule
BraninCurrin & (18.0, 6.0)\\
DTLZ2 & $(1.1, ..., 1.1) \in \mathbb R^M$\\
ABR & (-150.0, 3500.0, 5.1) \\
Vehicle Crash Safety & (1864.72022, 11.81993945, 0.2903999384)\\
ConstrainedBraninCurrin & (90.0, 10.0)\\
C2-DTLZ2 & $(1.1, ..., 1.1) \in \mathbb R^M$\\
\bottomrule
\end{tabular}
\end{sc}
\end{small}
\end{table*}

\textbf{Branin-Currin}
\begin{align*}
f^{(1)}(x_1', x_2') &=  (x_2 - \frac{5.1}{4 \pi^ 2} x_1^2 + \frac{5}{\pi} x_1 - r)^2 + 10 (1-\frac{1}{8\pi}) \cos(x_1) + 10\\
f^{(2)}(x_1, x_2) &= \bigg[1 - \exp\bigg(-\frac{1} {(2x_2)}\bigg)\bigg] \frac{2300 x_1^3 + 1900x_1^2 + 2092 x_1 + 60}{100 x_1^3 + 500x_1^2 + 4x_1 + 20}
\end{align*}
where $x_1, x_2 \in [0,1]$, $x_1' = 15x_1 - 5$, and $x_2' = 15x_2$.

The constrained Branin-Currin problem uses the following disk constraint from \citep{gelbart2014unknowncon}:
 $$c(x_1', x_2') = 50 - (x_1' - 2.5)^2 - (x_2' - 7.5)^2
)  \geq 0$$

\textbf{DTLZ2}
The objectives are given by \citep{dtlz}:
\begin{align*}
    f_1(\bm x) &= (1+ g(\bm x_M))\cos\big(\frac{\pi}{2}x_1\big)\cdots\cos\big(\frac{\pi}{2}x_{M-2}\big) \cos\big(\frac{\pi}{2}x_{M-1}\big)\\
    f_2(\bm x) &= (1+ g(\bm x_M))\cos\big(\frac{\pi}{2}x_1\big)\cdots\cos\big(\frac{\pi}{2}x_{M-2}\big) \sin\big(\frac{\pi}{2}x_{M-1}\big)\\
    f_3(\bm x) &= (1+ g(\bm x_M))\cos\big(\frac{\pi}{2}x_1\big)\cdots\sin\big(\frac{\pi}{2}x_{M-2}\big)\\
    \vdots\\
     f_M(\bm x) &= (1+ g(\bm x_M))\sin\big(\frac{\pi}{2}x_1\big)
\end{align*}
where $g(\bm x) = \sum_{x_i \in \bm x_M} (x_i - 0.5)^2, \bm x \in [0,1]^d,$ and $\bm x_M$ represents the last $d - M +1$ elements of $\bm x$.

The C2-DTLZ2 problem adds the following constraint \citep{deb2019}:
$$c(\bm x) = - \min \bigg[\min_{i=1}^M\bigg((f_i(\bm x) -1 )^2 + \sum_{j=1, j=i}^M (f_j^2 - r^2) \bigg), \bigg(\sum_{i=1}^M \big((f_i(\bm x) - \frac{1}{\sqrt{M}})^2 - r^2\big)\bigg)\bigg]\geq 0$$

\textbf{Vehicle Crash Safety}
The objectives are given by \citep{tanabe2020}:
\begin{align*}
    f_1(\bm x) &=1640.2823 + 2.3573285x_1 + 2.3220035x_2 + 4.5688768x_3 + 7.7213633x_4 + 4.4559504x_5\\
    f_2(\bm x) &= 6.5856+ 1.15x_1 - 1.0427x_2 + 0.9738x_3+ 0.8364x_4 - 0.3695x_1x_4 + 0.0861x_1x_5\\
    &+ 0.3628x_2x_4 +  0.1106x_1^2 - 0.3437x_3^2 + 0.1764x_4^2\\
    f_3(\bm x) &= -0.0551 + 0.0181x_1 + 0.1024x_2 + 0.0421x_3 - 0.0073x_1x_2 + 0.024x_2x_3-0.0118x_2x_4\\ 
    &- 0.0204x_3x_4 - 0.008x_3x_5 - 0.0241x_2^2 + 0.0109x_4^2
\end{align*}
where $\bm x \in [1,3]^5$.

\textbf{Policy Optimization for Adaptive Bitrate Control} The controller is given by:
$a_t = x_0\hat{z}_{\text{bd},t} + x_2z_{\text{bf}, t} + x_3$, where $\hat{z}_{\text{bd},t} = \frac{\sum_{t_i < t} z_{\text{bd}, t_i}\exp(-x_1 t_i)}{\sum_{t_i < t} \exp(-x_1 t_i)}$ is estimated bandwidth at time $t$ using an exponential moving average, $z_{\text{bf}, t}$ is the buffer occupancy at time $t$, and $x_0, ... x_3$ are the parameters we seek to optimize. We evaluate each policy on a set of 400 videos, where the number of time steps (chunks) in each video stream trajectory depends on the size of the video.
\clearpage
\section{Additional Empirical Results}
\subsection{Additional Sequential Optimization Results}
\label{appdx:sec:extra_synthetic_experiments}
We include results for an additional synthetic benchmark: the \textbf{DTLZ2} problem from the MO literature \citep{dtlz} ($d=6, M=2$). Figure \ref{fig:dtlz2} shows that \qEHVI{} outperforms all other baseline algorithms on the DTLZ2 in terms of sequential optimization performance with competitive wall times as shown in \ref{table:latency:extra_problems}.

\begin{figure}[ht]
    \centering
    \includegraphics[width=0.49\linewidth]{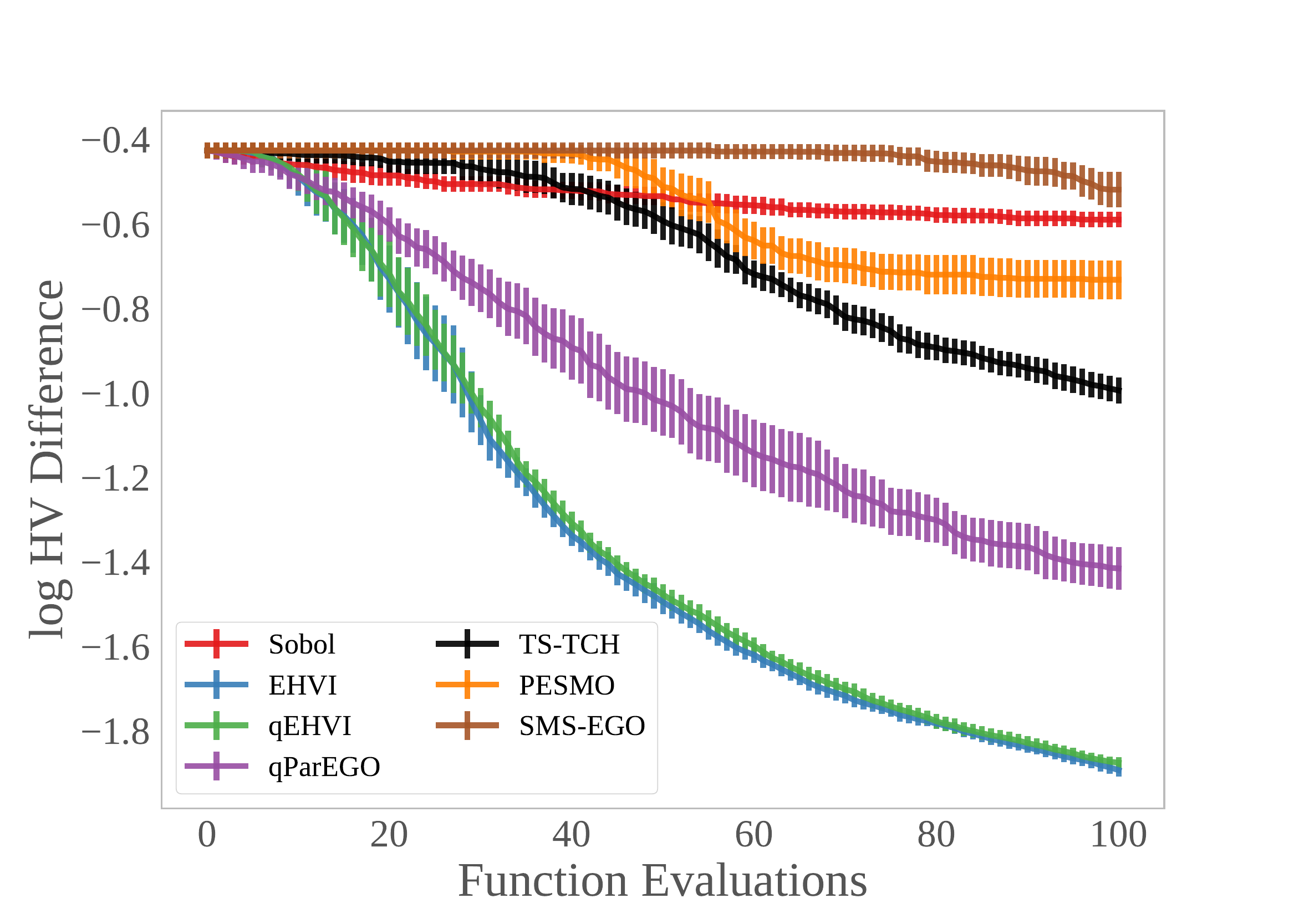}
        
    \caption{\label{fig:dtlz2}Optimization performance on the DTLZ2 synthetic function ($d=6, M=2$).}
\end{figure}
\FloatBarrier
\begin{table*}[t!]
\centering
\caption{\label{table:latency:extra_problems} Acquisition Optimization wall time in seconds on a CPU (2x Intel Xeon E5-2680 v4 @ 2.40GHz) and on a GPU (Tesla V100-SXM2-16GB). The mean and two standard errors are reported. NA indicates that the algorithm does not support constraints.}
\begin{small}
\begin{sc}
\begin{tabular}{lcc}
\toprule
\textbf{CPU}& ConstrainedBraninCurrin & DTLZ2 \\
\midrule
PESMO (\textit{q}=1) & NA & $278.53 ~(\pm 25.66)$\\
SMS-EGO (\textit{q}=1) & NA & $104.26 ~(\pm 7.66)$\\
TS-TCH (\textit{q}=1) & NA & $52.55 ~(\pm 0.06)$\\
\qParego{} (\textit{q}=1) & $2.4 ~(\pm 0.37)$ & $4.68 ~(\pm 0.46)$\\
\EHVI{} (\textit{q}=1) & NA& $3.58 ~(\pm 0.28)$\\
\qEHVI{} (\textit{q}=1) &$5.69 ~(\pm 0.43)$  & $5.95 ~(\pm 0.45)$\\
\midrule
\textbf{GPU}& ConstrainedBraninCurrin & DTLZ2\\
\midrule
TS-TCH (\textit{q}=1)& NA & $0.25 ~(\pm 0.00)$\\
TS-TCH (\textit{q}=2) & NA & $0.27 ~(\pm 0.00)$\\
TS-TCH (\textit{q}=4) &NA & $0.28 ~(\pm 0.00)$\\
TS-TCH (\textit{q}=8) & NA & $0.32 ~(\pm 0.01)$\\
\qParego{} (\textit{q}=1)& $3.52 ~(\pm 0.34)$ & $9.04 ~(\pm 0.93)$\\
\qParego{} (\textit{q}=2) &$6.0 ~(\pm 0.56)$ & $14.23 ~(\pm 1.55)$\\
\qParego{} (\textit{q}=4) &$12.07 ~(\pm 0.98)$ & $40.5 ~(\pm 3.21)$\\
\qParego{} (\textit{q}=8) &$33.1 ~(\pm 3.32)$ & $84.15 ~(\pm 6.9)$\\
\EHVI{} (\textit{q}=1) & NA & $84.15 ~(\pm 6.9)$\\
\qEHVI{} (\textit{q}=1) & $5.61 ~(\pm 0.17)$ & $10.21 ~(\pm 0.58)$\\
\qEHVI{} (\textit{q}=2) &$19.06 ~(\pm 5.88)$ & $17.75 ~(\pm 0.97)$\\
\qEHVI{} (\textit{q}=4) &$29.26 ~(\pm 2.01)$ & $40.41 ~(\pm 2.78)$\\
\qEHVI{} (\textit{q}=8) &$91.56 ~(\pm 5.51)$& $106.51 ~(\pm 7.69)$\\
\bottomrule
\end{tabular}
\end{sc}
\end{small}
\end{table*}

\FloatBarrier
\subsection{Performance with Increasing Parallelism}
Figure \ref{fig:q_anytime} shows that that the performance of \qEHVI{} performance does not degrade substantially, whereas performance does degrade for \qParego{} and \textsc{TS-TCH} on some benchmark problems. We include results for all problems in Section~\ref{sec:Experiments} and Appendix~\ref{appdx:sec:extra_synthetic_experiments} as well as a \textbf{Constrained Branin-Currin} problem (which is described in Appendix~\ref{sec:problem_details}). 
\begin{figure}[ht]
    \centering
    \begin{subfigure}{.49\textwidth}
        \centering
    \includegraphics[width=\linewidth]{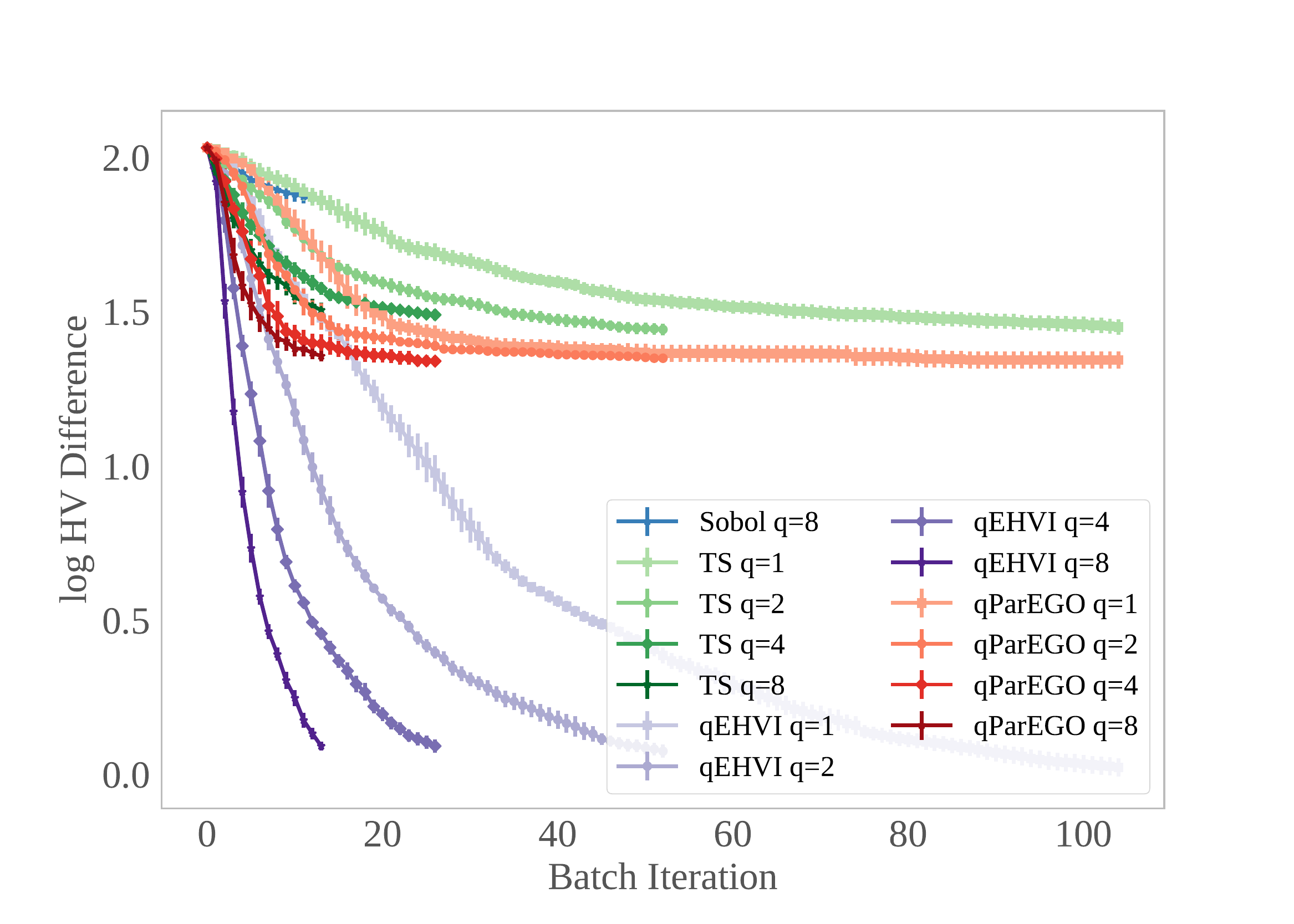}
        \subcaption{\textsc{VehicleSafety}\label{fig:vehicle_batch_iteration}}
    \end{subfigure} %
    \hspace{0.5ex}
    \begin{subfigure}{.49\textwidth}    
        \centering
        \includegraphics[width=\linewidth]{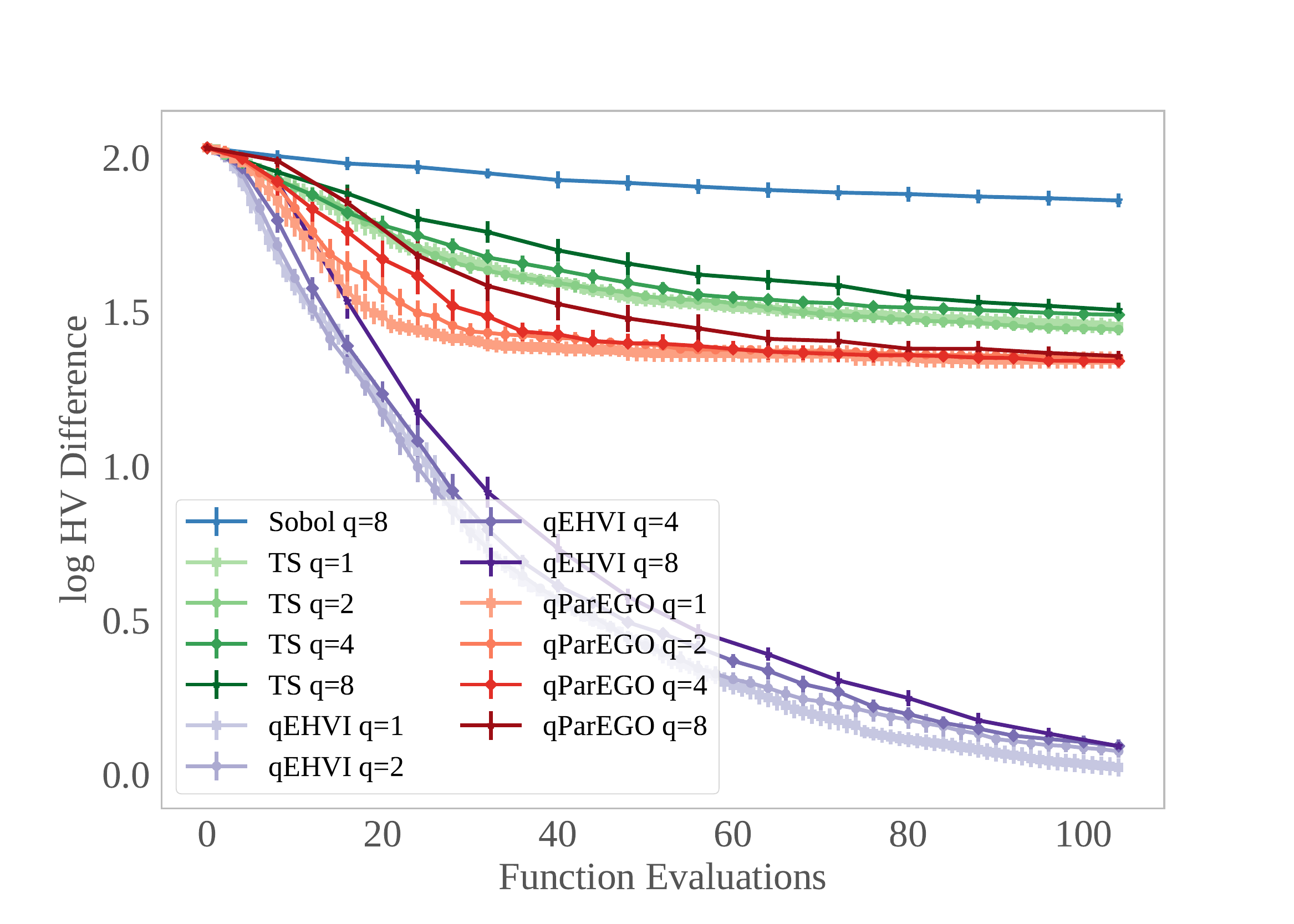}
        \subcaption{\textsc{VehicleSafety}\label{fig:vehicle_func_evals}}
    \end{subfigure}
    \begin{subfigure}{.49\textwidth}
        \centering
    \includegraphics[width=\linewidth]{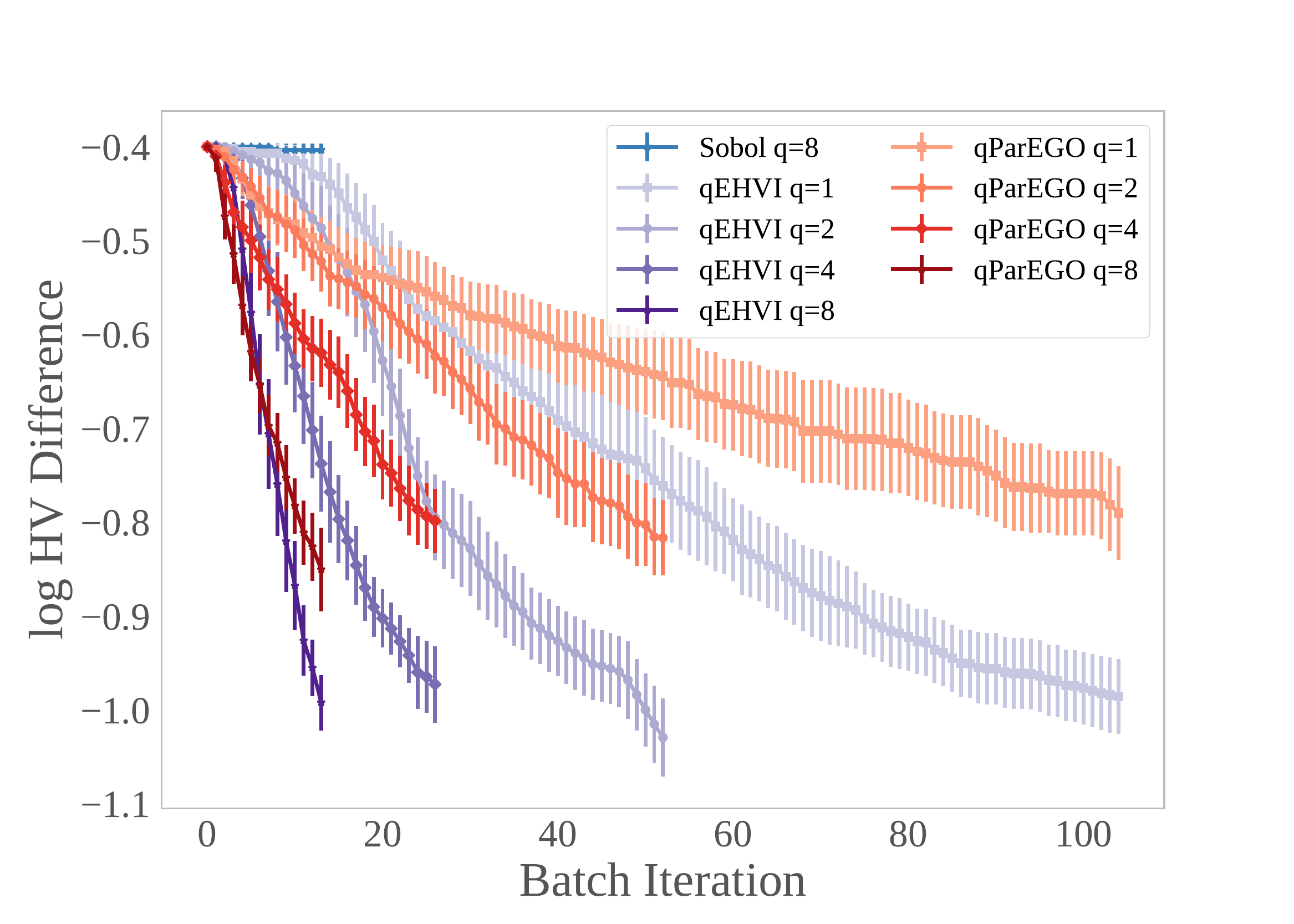}
        \subcaption{\textsc{C2DTLZ2}\label{fig:c2dtlz2_batch_iteration}}
    \end{subfigure} %
    \hspace{0.5ex}
    \begin{subfigure}{.49\textwidth}    
        \centering
        \includegraphics[width=\linewidth]{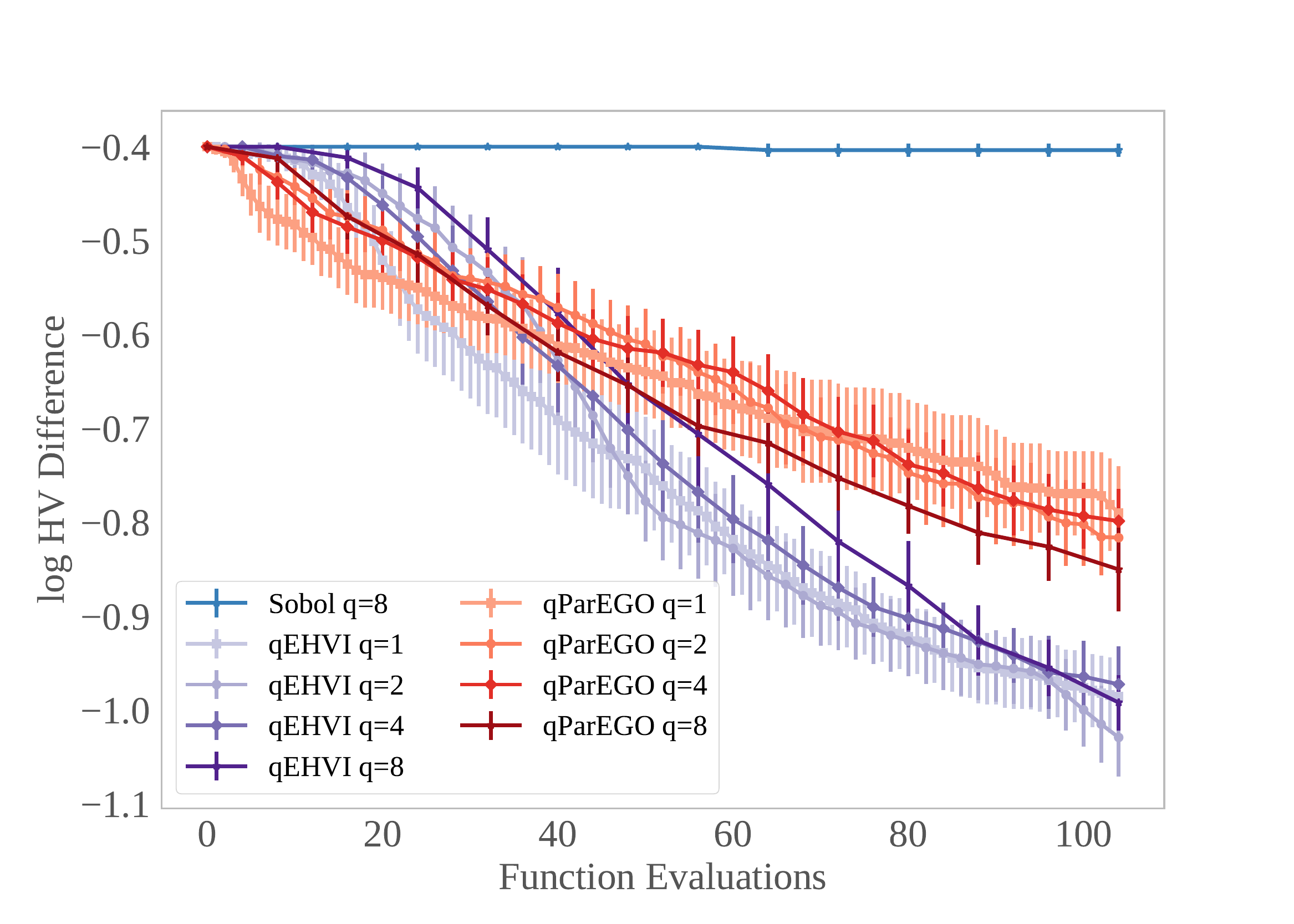}
        \subcaption{\textsc{C2DTLZ2}\label{fig:c2dtlz2_func_evals}}
    \end{subfigure}
        \begin{subfigure}{.49\textwidth}
        \centering
    \includegraphics[width=\linewidth]{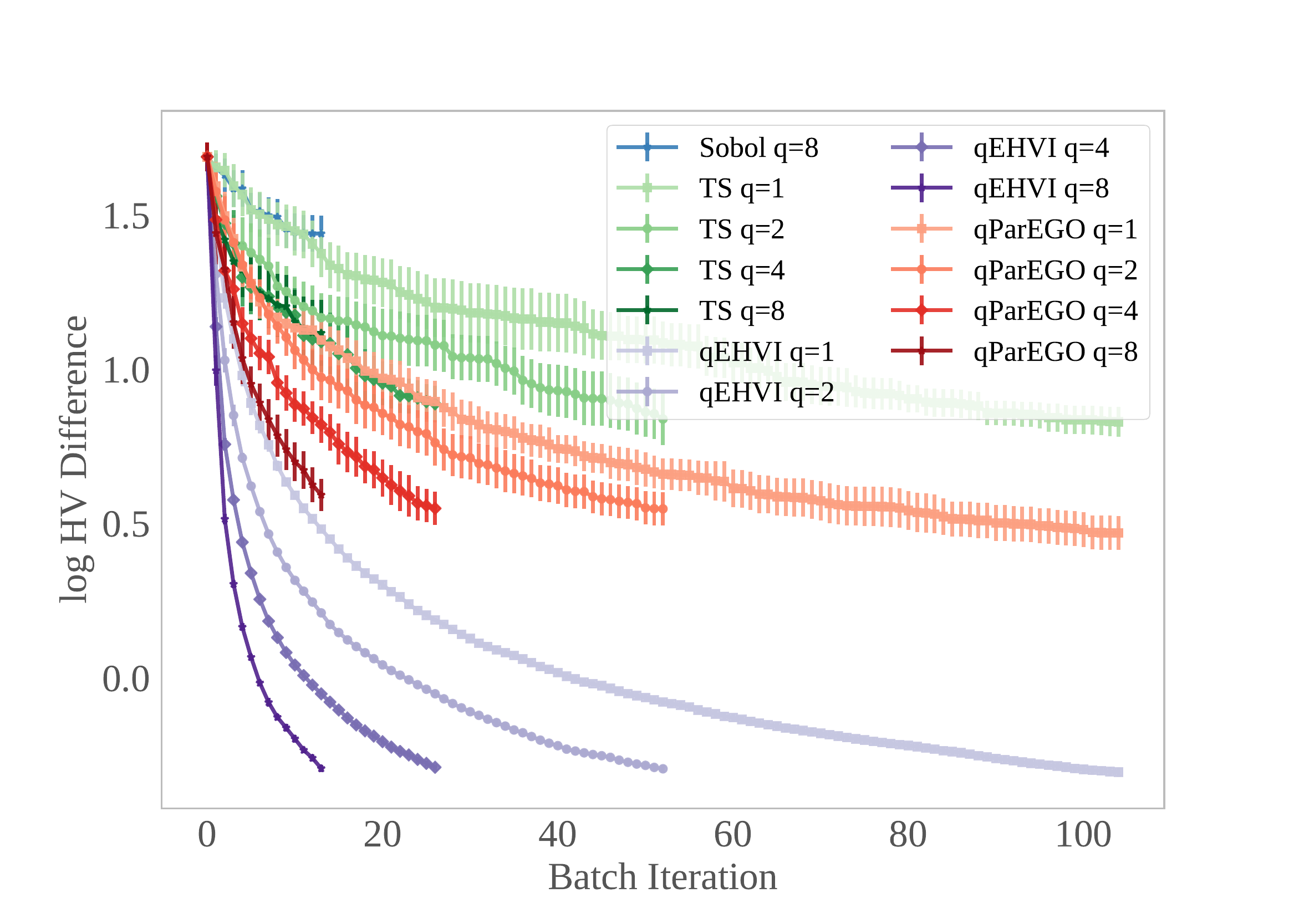}
        \subcaption{\textsc{BraninCurrin}\label{fig:bc_batch_iteration}}
    \end{subfigure} %
    \hspace{0.5ex}
    \begin{subfigure}{.49\textwidth}    
        \centering
        \includegraphics[width=\linewidth]{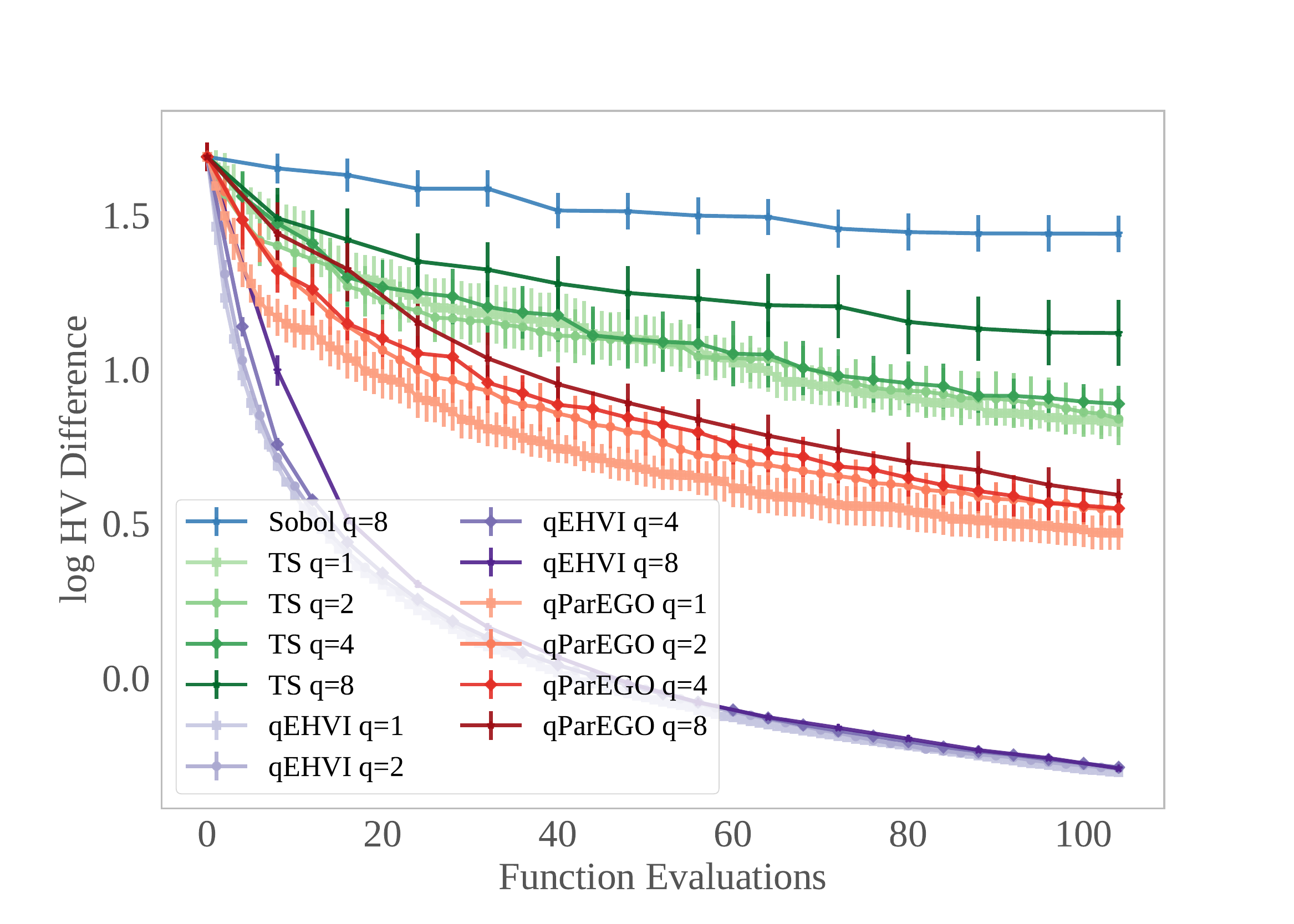}
        \subcaption{\textsc{BraninCurrin}\label{fig:bc_func_evals}}
    \end{subfigure}
    \caption{Optimization performance of parallel acquisition functions over \emph{batch BO iterations} (left) and \emph{function evaluations} (right) for benchmark problems in Section~\ref{sec:Experiments}.}
\label{fig:q_anytime}
\end{figure}

\begin{figure}[ht]
    \centering
    \begin{subfigure}{.49\textwidth}
        \centering
    \includegraphics[width=\linewidth]{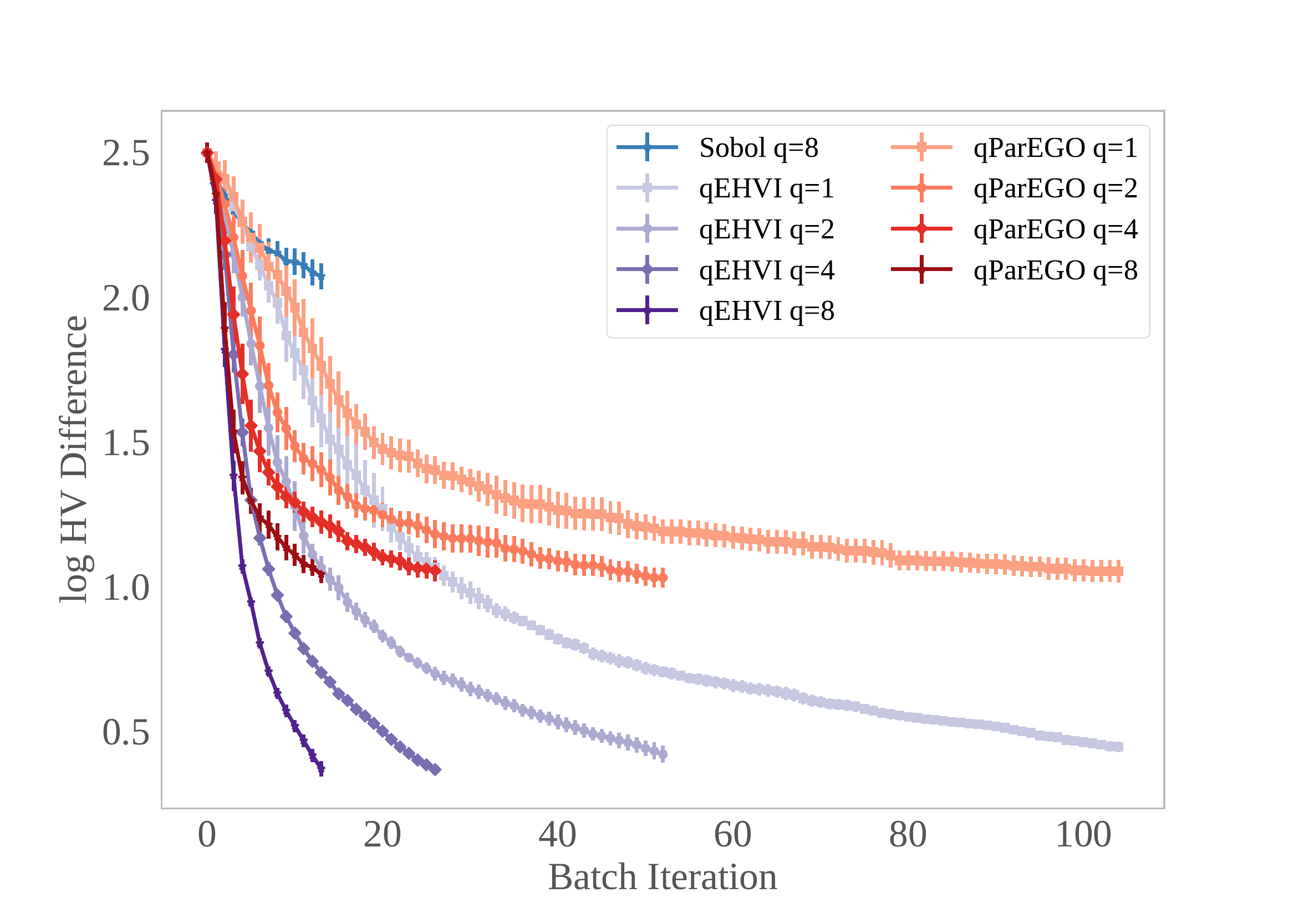}
        \subcaption{\textsc{ConstrainedBraninCurrin}\label{fig:constrained_bc_batch_iteration}}
    \end{subfigure} %
    \hspace{0.5ex}
    \begin{subfigure}{.49\textwidth}    
        \centering
        \includegraphics[width=\linewidth]{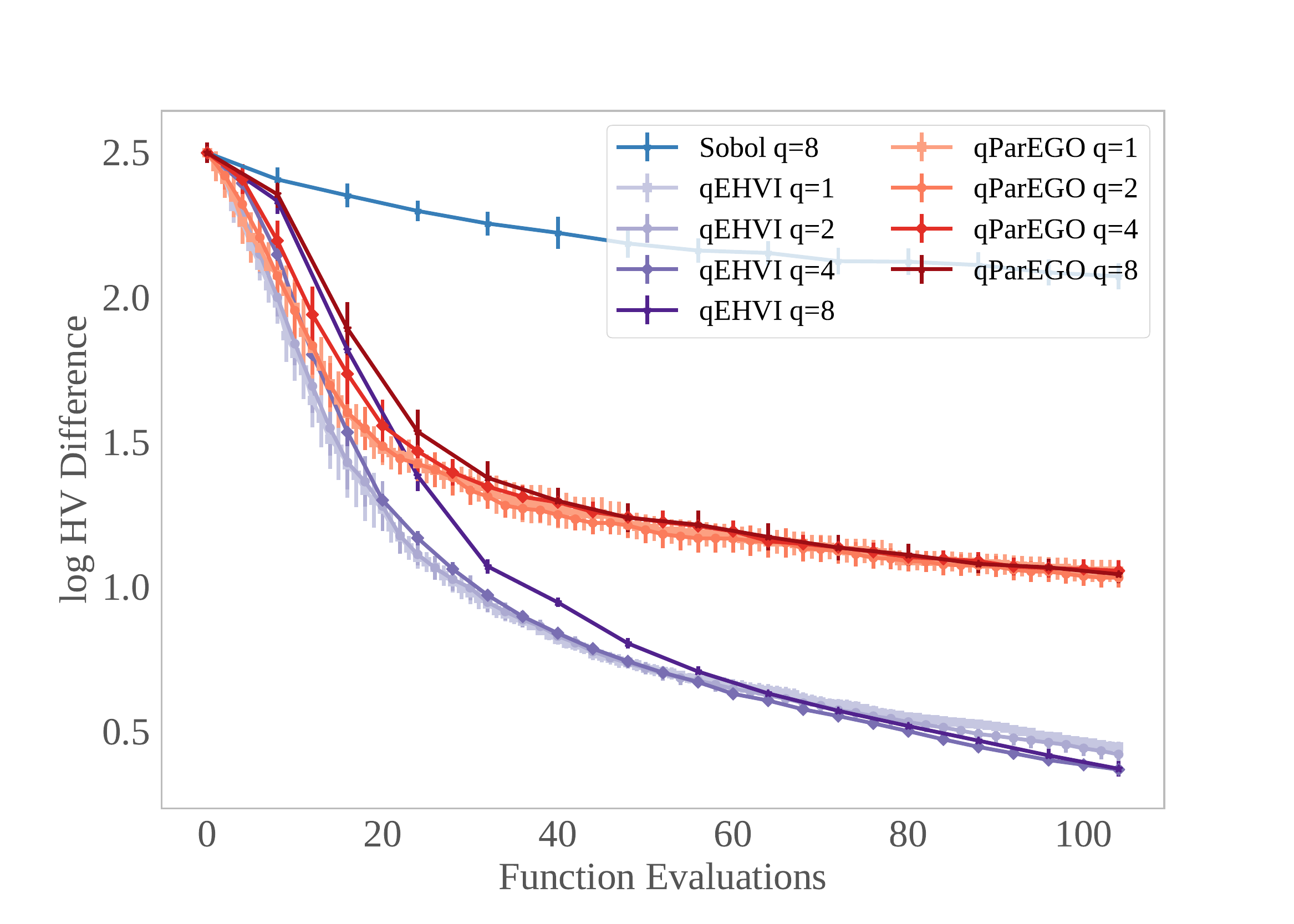}
        \subcaption{ \textsc{ConstrainedBraninCurrin}\label{fig:constrained_bc_func_evals}}
    \end{subfigure}
    \begin{subfigure}{.49\textwidth}
        \centering
    \includegraphics[width=\linewidth]{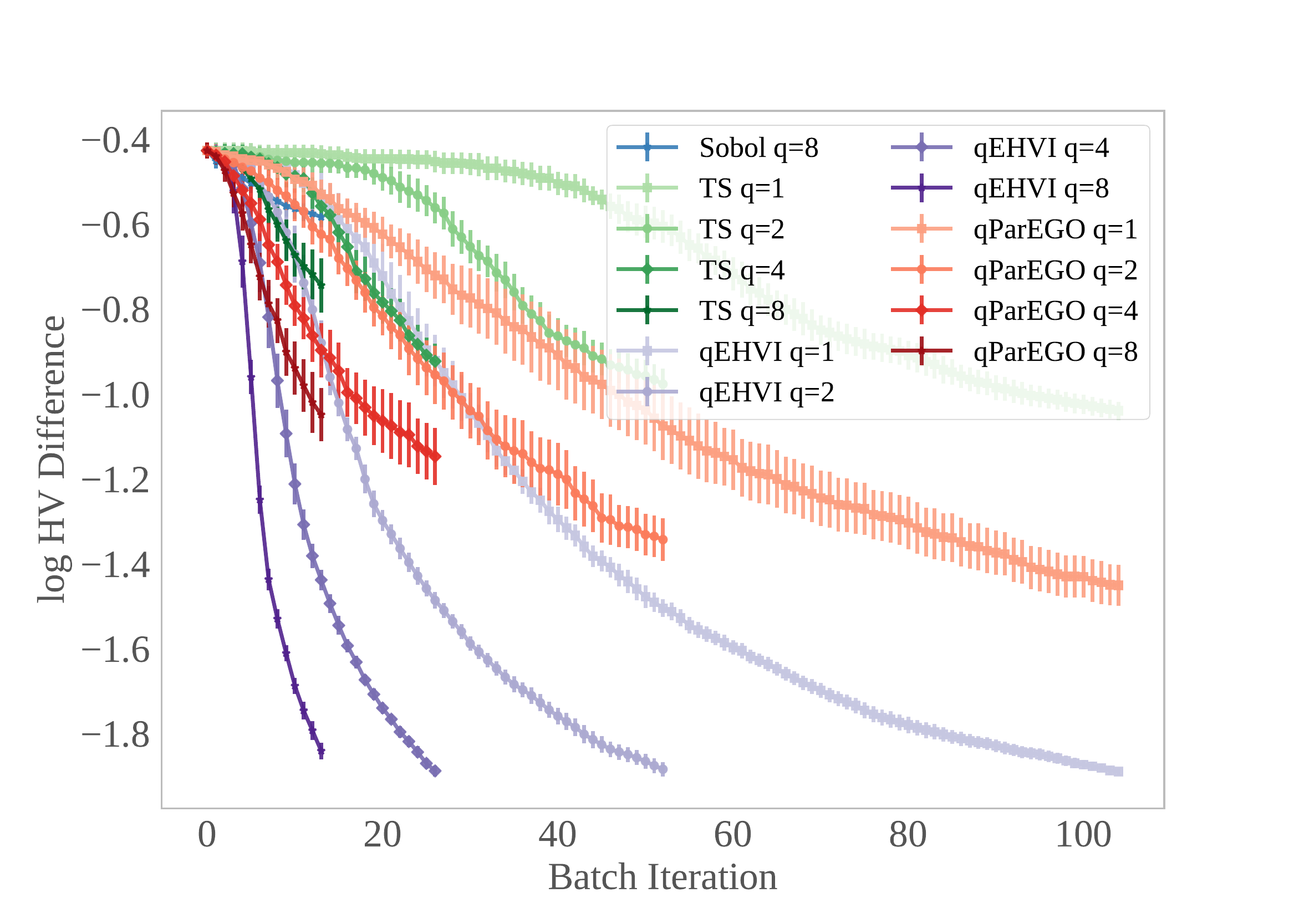}
        \subcaption{\textsc{DTLZ2} ($M=2, d=6$)\label{fig:dtlz2_batch_iteration}}
    \end{subfigure} %
    \hspace{0.5ex}
    \begin{subfigure}{.49\textwidth}    
        \centering
        \includegraphics[width=\linewidth]{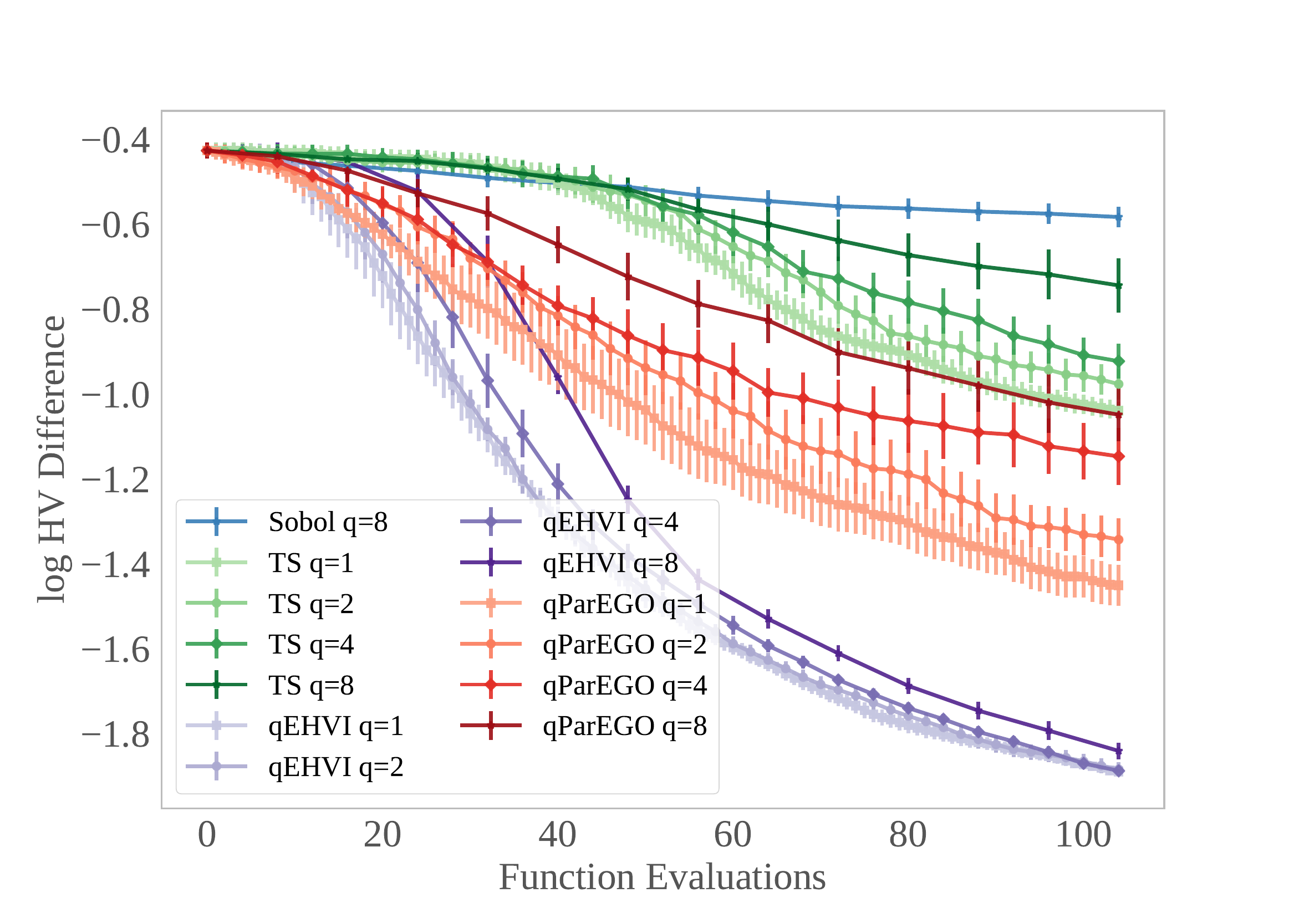}
        \subcaption{ \textsc{DTLZ2} ($M=2, d=6$)\label{fig:dtlz2_func_evals}}
    \end{subfigure}
    \caption{Optimization performance of parallel acquisition functions over \emph{batch BO iterations} (left) and \emph{function evaluations} (right) for additional benchmark problems.}
\label{fig:q_anytime_extra}
\end{figure}
\FloatBarrier
\subsection{Noisy Observations}
Although neither \qEHVI{} nor any variant of expected hypervolume improvement (to our knowledge) directly account for noisy observations, noisy observations are a practical challenge. We empirically evaluate the performance of all algorithms on a Branin-Currin function where observations have additive, zero-mean, $iid$ Gaussian noise; the unknown standard deviation of the noise is set to be $1\%$ of the range of each objective. Fig~\ref{fig:branin_currin_noisy} shows that \qEHVI{} performs favorably in the presence of noise, besting all algorithms including Noisy \qParego{} ($q$NParego) (described in Appendix \ref{appdx:sec:qparego}), PESMO and TS-TCH, all of which account for noise.
\begin{figure}[ht]
    \centering
    \includegraphics[width=0.5\linewidth]{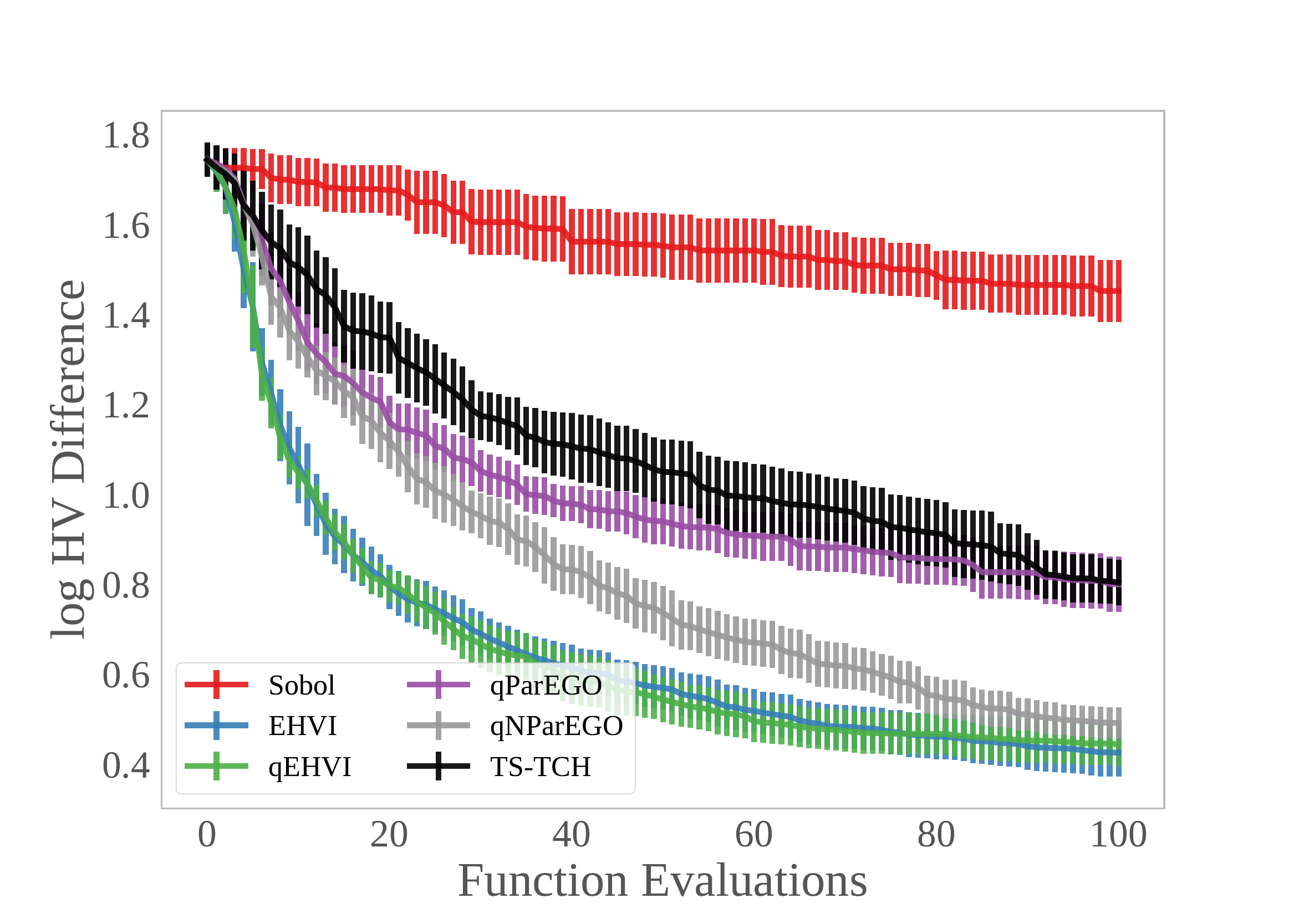}
    \caption{\label{fig:branin_currin_noisy} Sequential optimization performance on a noisy Branin-Currin problem.}
\end{figure}

\subsection{Approximate Box Decompositions}
\label{appdx:subsec:approx_decomp}
\EHVI{} becomes prohibitively computationally expensive in many scenarios with $\geq 4$ objectives because of the wall time of partitioning the non-dominated space into disjoint rectangles \citep{couckuyt12}. Therefore, in addition to providing an exact binary partitioning algorithm, \citet{couckuyt12} propose an approximation that terminates the partitioning algorithm when the new additional set of hyper-rectangles in the partitioning has a total hypervolume of less than a predetermined fraction $\zeta$ of the hypervolume dominated by the Pareto front. While \qEHVI{} is guaranteed to be exact when an exact partitioning of the non-dominated space is used, \qEHVI{} is agnostic to the partitioning algorithm used and is compatible with more scalable approximate methods.

We evaluate the performance of \qEHVI{} with approximation of various fidelities $\zeta$ on DTLZ2 problems with 3 and 4 objectives (with $d=6$). $\zeta=0$ corresponds to an exact partitioning and the approximation is monotonically worse as $\zeta$ increases.  Larger values of $\zeta$ degrade optimization performance (Figure~\ref{fig:approx_optim}), but can result in substantial speedups (Table~\ref{table:approx_walltime}).  Even with coarser levels of approximation, \qEHVI() performs better than \qParego{} with respect to log hypervolume difference, while achieving wall time improvements of 2-7x compared to exact \qEHVI{}.

\begin{figure}[ht]
    \centering
    \begin{subfigure}{.49\textwidth}
        \centering
\includegraphics[width=\linewidth]{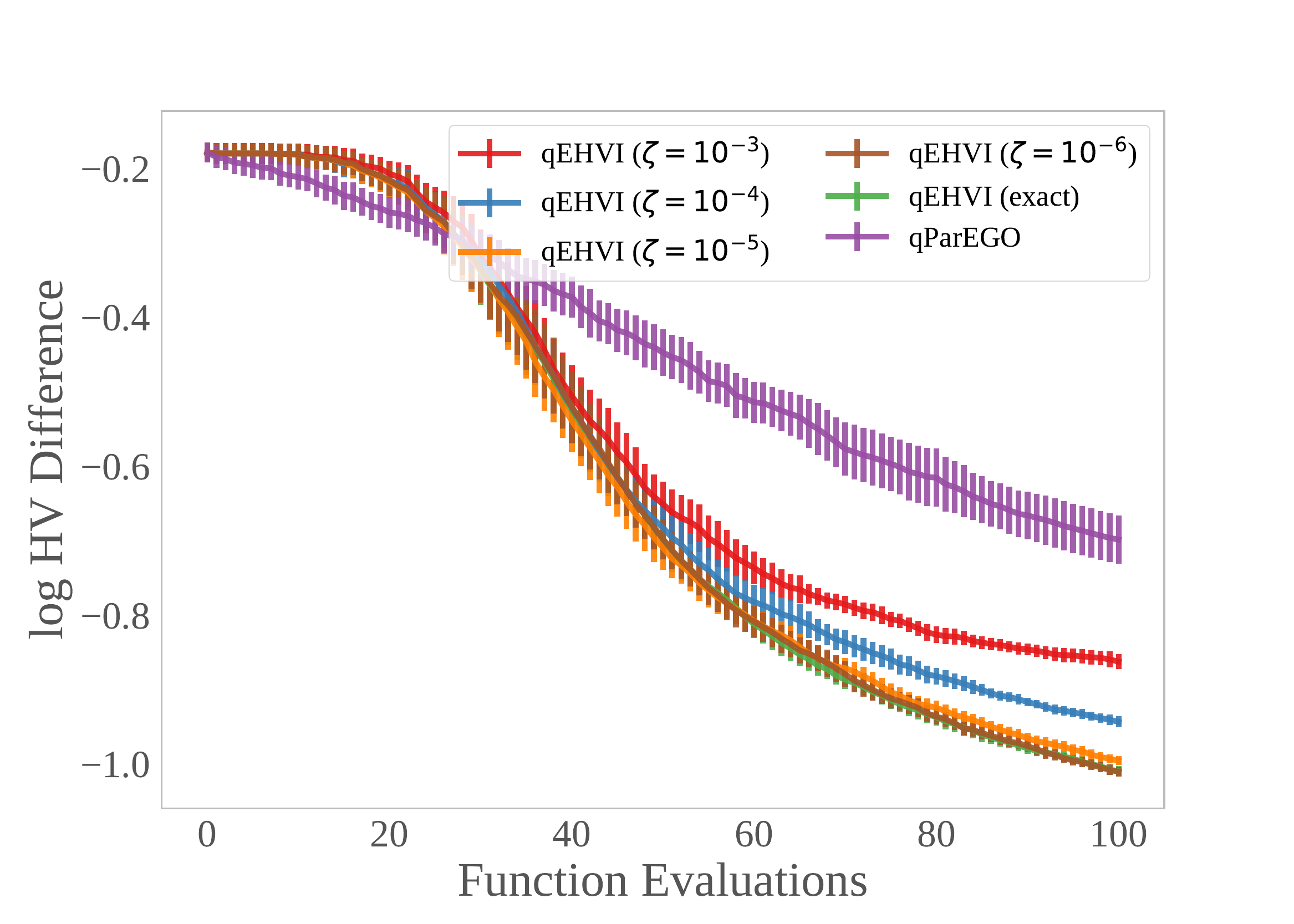}
        \subcaption{\label{fig:approx_m3}}
    \end{subfigure} %
    \begin{subfigure}{.49\textwidth}    
        \centering
        \includegraphics[width=\linewidth]{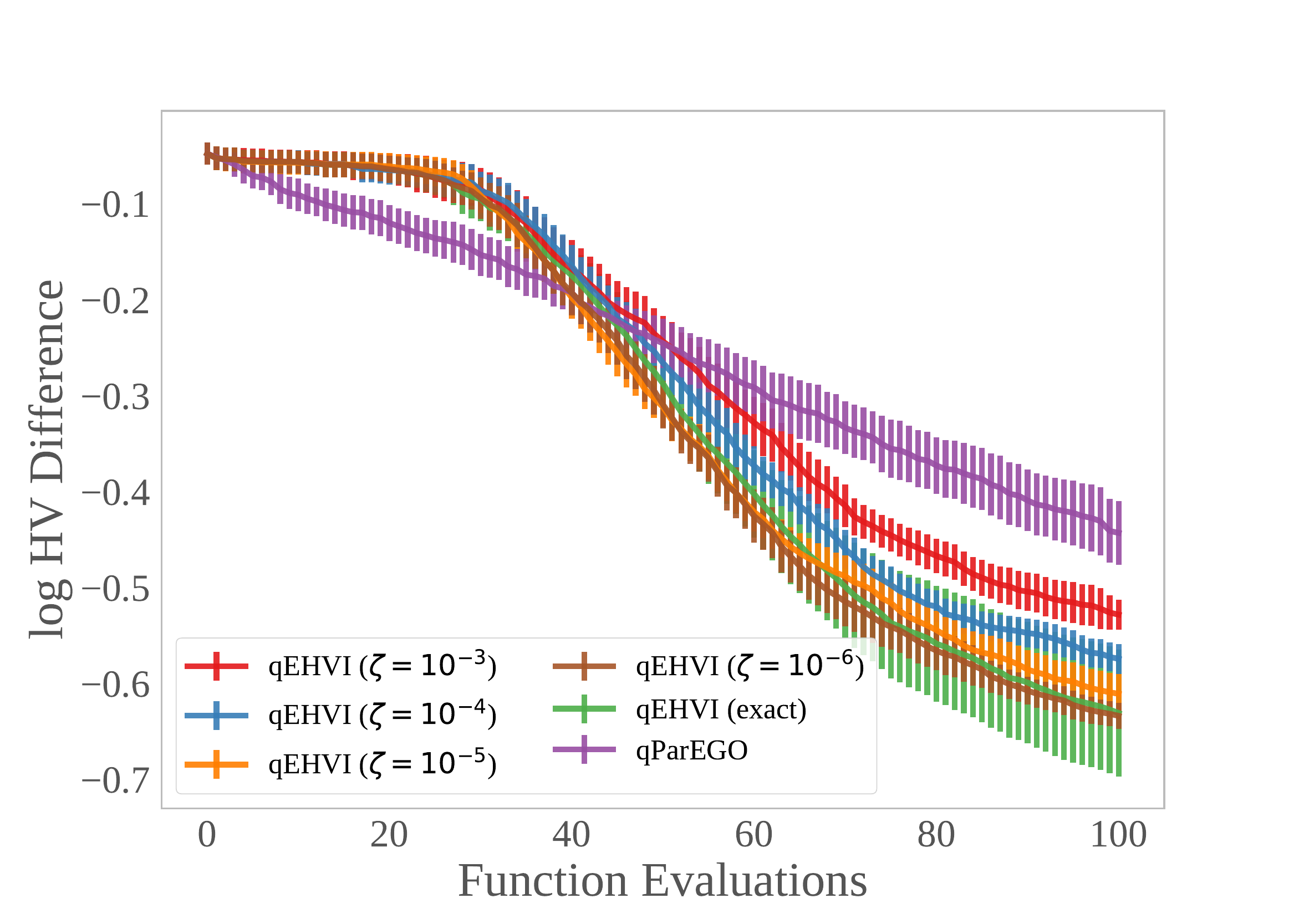}
        \subcaption{\label{fig:approx_m4}}
    \end{subfigure}
    \caption{\label{fig:approx_optim}Optimization performance on DTLZ2 problems ($d=6$) with approximate partitioning using various approximation levels $\zeta$ for (a) $M=3$ objectives and (b) $M=4$ objectives.}
\end{figure}

\begin{table*}[t!]
\centering
\begin{small}
\begin{sc}
\begin{tabular}{lcc}
\toprule
\textbf{CPU}& DTLZ2 ($M=3$) & DTLZ2 ($M=4$) \\
\midrule
\qParego{} & $5.86 ~(\pm 0.51)$ & $5.6 ~(\pm 0.53)$\\
\qEHVI{} ($\zeta=10^{-3}$) & $6.89 ~(\pm 0.41)$ & $9.53 ~(\pm 0.49)$\\
\qEHVI{} ($\zeta=10^{-4}$) & $9.83 ~(\pm 0.9)$ & $17.47 ~(\pm 1.2)$\\
\qEHVI{} ($\zeta=10^{-5}$) & $18.99 ~(\pm 2.72)$ & $60.27 ~(\pm 3.57)$\\
\qEHVI{} ($\zeta=10^{-6}$) & $37.9 ~(\pm 7.47)$ & $136.15 ~(\pm 12.88)$ \\
\qEHVI{} (exact) & $45.52 ~(\pm 9.83)$ & $459.33 ~(\pm 77.95)$\\
\bottomrule
\end{tabular}
\end{sc}
\end{small}
\caption{\label{table:approx_walltime} Acquisition function optimization wall time with approximate hypervolume computation, in seconds on a CPU (2x Intel Xeon E5-2680 v4 @ 2.40GHz).  The mean and two standard errors are reported.}
\end{table*}

\FloatBarrier
\subsection{Acquisition Computation Time}
Figure \ref{fig:qehi_comp_time} show the acquisition computation time for different $M$ and $q$. The inflection points corresponds to available processor cores becoming saturated. For large $M$ an $q$ on the GPU, memory becomes an issue, but we discuss ways of mitigating the issue in Appendix \ref{appdx:subsec:DqEHVI:complexity}.
\begin{figure}[ht]
        \centering
        \includegraphics[width=0.5\linewidth]{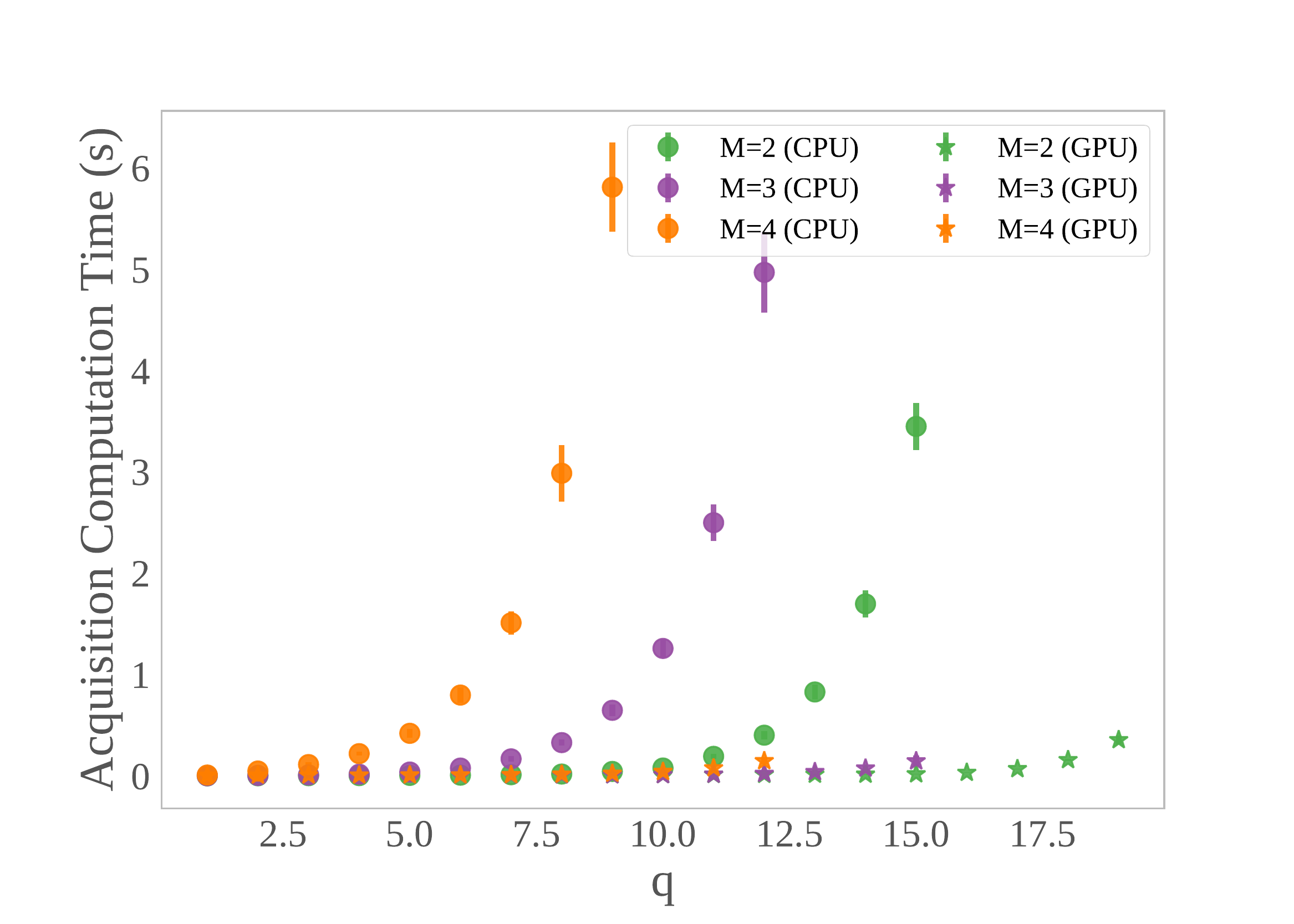}
        \caption{\label{fig:qehi_comp_time}Acquisition computation time for different batch sizes $q$ and numbers of objectives $M$ (this excludes the time required to compute the acquisition function \emph{given} box decomposition of the non-dominated space). This uses $N=512$ MC samples, $d=6$, $\vert \mathcal P\vert = 10$, 
        and 20 training points.
        CPU time was measured on 2x Intel Xeon E5-2680 v4 @ 2.40GHz and GPU time was measured on a Tesla V100-SXM2-16GB GPU using 64-bit floating point precision. The mean and 2 standard errors over 1000 trials are reported.}
\end{figure}
\end{document}